\documentclass{article}
\usepackage[letterpaper, margin=1in]{geometry}
\usepackage{setspace}
\usepackage{amsmath,amssymb}
\usepackage{graphicx}
\usepackage{subcaption}
\usepackage[amsmath,thmmarks,framed]{ntheorem}
\usepackage{authblk}
\usepackage{tikz}
\usetikzlibrary{arrows.meta,calc,fit,positioning}
\usepackage{tcolorbox}
\usepackage{booktabs}
\usepackage{multirow}
\usepackage{placeins}
\usepackage{enumerate}
\usepackage{xcolor}
\definecolor{darkgreen}{rgb}{0.0,0.5,0.0}
\usepackage[colorlinks,linkcolor=red,citecolor=darkgreen,urlcolor=blue]{hyperref}
\hypersetup{
  pdftitle={Mode Coverage in Normalizing Flow Boltzmann Generators via Log-Ratio Variation},
  pdfauthor={Qi Feng, Rongjie Lai, Di Qi, Xuda Ye}
}
\usepackage[round,authoryear]{natbib}

\usepackage[ruled,linesnumbered]{algorithm2e}
\SetKwInput{KwInput}{Input}
\SetKwInput{KwOutput}{Output}

\newtheorem{remark}{Remark}
\newtheorem{lemma}{Lemma}
\newtheorem{theorem}{Theorem}
\theoremheaderfont{\normalfont\bfseries}
\theorembodyfont{\normalfont}
\newtheorem*{definition}{Definition}
\newenvironment{proof}{\par\medskip\noindent\textit{Proof.}\ }{\hfill$\square$\par\medskip}

\newcommand{\Le}{\leqslant}
\newcommand{\Ge}{\geqslant}
\newcommand{\D}{\mathrm{d}}
\newcommand{\KL}{\mathrm{KL}}
\newcommand{\X}{\mathrm{X}}
\newcommand{\ESS}{\mathrm{ESS}}

\theoremstyle{plain}

\title{Mode Coverage in Normalizing Flow Boltzmann Generators via Log-Ratio Variation}
\author[1]{Qi Feng}
\author[2]{Rongjie Lai}
\author[2]{Di Qi}
\author[2]{Xuda Ye}
\affil[1]{Department of Mathematics, Florida State University, Tallahassee, FL 32306}
\affil[2]{Department of Mathematics, Purdue University, West Lafayette, IN 47907}

\date{\today}
\begin{document}
\maketitle

\begin{abstract}
Normalizing flow Boltzmann generators retain a tractable pushforward density, but training with forward KL depends on target samples that may be biased or omit modes. As a result, a flow can miss target mass while its observed importance weights give a high effective sample size. We introduce the log-ratio variation $\X_\omega$, the mean absolute pairwise difference of the target-to-pushforward log-density ratio under a weighting measure $\omega$, and use it to define KLXX, a new loss function. Two log-ratio variations are added to the forward KL (denoted by the two X's): one weighted by the target to improve accuracy, the other by a mixture of quench and temper samples with pushforward samples to search candidate modes. We derive the Fisher--Rao gradient flow of KLXX, where both variations contribute nonpositive dissipation, and a fixed-surrogate error bound for KLXX. We use KLXX in an adaptive-staging Boltzmann generator, with importance reweighting at every stage. We bound the sampling error of its inference scheme when the stage weights are essentially bounded, and prove it asymptotically unbiased in the sample size. In the numerical tests, KLXX improves mode coverage over forward KL. It also improves the generator's per-stage diagnostics against the loss that built the schedule. The observables the generator recovers are close to independent references. The log-ratio variations thus supply information that the forward KL loss usually omits.
\end{abstract}

% ===================== Graphical abstract =====================
\par\medskip
\noindent\makebox[\textwidth][c]{%
    \resizebox{0.9\textwidth}{!}{%
        \begingroup
        \def\KLXXEmbedded{1}%
        \ifdefined\KLXXEmbedded
    \def\KLXXBeginDocument{}
    \def\KLXXEndDocument{}
\else
\documentclass[tikz,border=5pt]{standalone}

\usepackage[T1]{fontenc}
\usepackage{lmodern}
\usepackage{amsmath,amssymb}
\usepackage{xcolor}
\usepackage{tikz}
\usetikzlibrary{arrows.meta,calc,fit,positioning}

    \def\KLXXBeginDocument{\begin{document}}
    \def\KLXXEndDocument{\end{document}}
\fi

\definecolor{ink}{HTML}{26333F}
\definecolor{line}{HTML}{7A8793}
\definecolor{neutral}{HTML}{F5F7F8}
\definecolor{target}{HTML}{3B78A7}
\definecolor{targetfill}{HTML}{EAF2F8}
\definecolor{coverage}{HTML}{C98B32}
\definecolor{coveragefill}{HTML}{FCF4E8}
\definecolor{proposal}{HTML}{4E8878}
\definecolor{proposalfill}{HTML}{EBF4F1}

\tikzset{
    >={Stealth[length=2.1mm,width=1.5mm]},
    flow/.style={->, draw=line, line width=0.85pt},
    panel/.style={
        draw=line,
        fill=white,
        rounded corners=2mm,
        line width=0.7pt,
        inner sep=2.5mm,
        align=center,
        text=ink
    },
    support/.style={panel, text width=4.75cm, minimum height=1cm},
    ratio/.style={panel, fill=neutral, text width=9.6cm, minimum height=3.55cm},
    term/.style={panel, minimum height=2.65cm, inner sep=2.2mm},
    output/.style={
        panel,
        fill=neutral,
        text width=11.4cm,
        minimum height=0.85cm,
        inner sep=1.8mm
    },
    stage/.style={
        panel,
        fill=white,
        font=\small\bfseries,
        inner xsep=4mm,
        inner ysep=1.1mm
    },
    note/.style={font=\small, text=line, fill=white, inner sep=0.5pt}
}

\KLXXBeginDocument
\begin{tikzpicture}
    % math glue is fixed, so relations in text-width nodes are not stretched
    \medmuskip=4mu \thickmuskip=5mu

    \node[font=\Large\bfseries, text=ink] at (0,5.53)
        {KLXX: one forward KL, two log-ratio variations (XX)};

    \node[output] (output) at (0,4.38) {%
        \textbf{Training aim}
        \quad $\displaystyle \nu=(G^{-1})_{\#}\pi_0\approx\pi$\\[0.5mm]
        {approximate target distribution with pushforward distribution}
    };

    \node[support, draw=target, fill=targetfill] (targetbatch) at (-5.54,2.78) {%
        \textbf{Target samples $\pi$}\\[0.8mm]
        {sequential Monte Carlo}
    };

    \node[support, draw=coverage, fill=coveragefill] (qtbatch) at (-5.54,1.08) {%
        \textbf{QT coverage $\hat\pi$}\\[0.8mm]
        {\small melt+\textbf{q}uench+\textbf{t}emper}\\[-0.2mm]
        {\small search candidate modes}
    };

    \node[support, draw=proposal, fill=proposalfill, inner ysep=1.5mm] (modelbatch) at (-5.54,-0.62) {%
        \textbf{Pushforward $\bar\nu$}\\[0.6mm]
        {\small $x\sim\pi_0\xrightarrow{\;G^{-1}\;}y\sim\nu$}\\[-0.2mm]
        {\small detached from autograd}
    };

    \node[ratio] (ratio) at (3.115,1.08) {%
    	\textbf{Pushforward density} \\[5pt]
        $\displaystyle
        \nu=(G^{-1})_{\#}\pi_0
        \quad\Longrightarrow\quad
        \nu(y)=\pi_0\bigl(G(y)\bigr)\,\lvert\det J_G(y)\rvert
        $ \\[7pt]
        \textbf{Log-ratio term} \\[5pt]
        $\displaystyle
            z(y):=\log\frac{\pi(y)}{\nu(y)}
            =U_0\bigl(G(y)\bigr)-U(y)-\log\lvert\det J_G(y)\rvert$
    };

    \draw[flow, draw=target]
        (targetbatch.east) -- ([yshift=17mm]ratio.west);
    \draw[flow, draw=coverage]
        (qtbatch.east) -- (ratio.west);
    \draw[flow, draw=proposal]
        (modelbatch.east) -- ([yshift=-17mm]ratio.west);

    \draw[flow] (output.south -| ratio.north) --
        node[right=1mm,note] {train flow $G$} (ratio.north);

    \node[stage] (losslabel) at (0,-1.45)
        {KLXX loss};
    \coordinate (ratioout) at ($(ratio.south)+(0,-3mm)$);
    \coordinate (ratioturn) at (losslabel.north |- ratioout);
    \draw[draw=line, line width=0.85pt] (ratio.south) --
        node[right=1mm,note] {evaluate $z$ on each batch} (ratioout);
    \draw[draw=line, line width=0.85pt] (ratioout) -- (ratioturn);
    \draw[flow] (ratioturn) -- (losslabel.north);

    \node[term, draw=target, fill=targetfill, text width=3.20cm]
        (meanterm) at (-6.345,-3.45) {%
        \textbf{Target mean}\\[1.2mm]
        $\displaystyle \mathbb E_{y\sim\pi}[z(y)]$\\[1.2mm]
        {\small target fit}
    };

    \node[font=\Large, text=line] at (-4.27,-3.45) {$+$};

    \node[term, draw=target, fill=targetfill, text width=3.90cm]
        (targetterm) at (-1.845,-3.45) {%
        \textbf{Target variation}\\[1.1mm]
        $\displaystyle
        \mathbb E_{y,y'\sim\pi}
        \big[|z(y)-z(y')|\big]
        $\\[1.1mm]
        {\small target regularization}
    };

    \node[font=\Large, text=line] at (0.60,-3.45) {$+$};

    \node[term, text width=6.85cm] (mixterm) at (4.52,-3.45) {%
        \textbf{Mixture variation: \textcolor{coverage}{QT}
        $+$ \textcolor{proposal}{pushforward}}\\[0.7mm]
        $\displaystyle
        \mathbb E_{y,y'\sim\xi}
        \big[|z(y)-z(y')|\big],~~ \xi=\frac{\textcolor{coverage}{\hat\pi}
        	+\textcolor{proposal}{\bar\nu}}{2}
        $\\[0.9mm]
        {\small mode coverage $\cdot$ pushforward support}
    };

    \node[fit=(meanterm)(targetterm)(mixterm), inner sep=0pt] (lossrow) {};
    \coordinate (lossbus) at ($(lossrow.north)+(0,2.2mm)$);
    \draw[draw=line, line width=0.75pt] (losslabel.south) -- (lossbus);
    \draw[draw=line, line width=0.75pt]
        (meanterm.north |- lossbus) -- (mixterm.north |- lossbus);
    \draw[flow] (meanterm.north |- lossbus) -- (meanterm.north);
    \draw[flow] (targetterm.north |- lossbus) -- (targetterm.north);
    \draw[flow] (mixterm.north |- lossbus) -- (mixterm.north);
\end{tikzpicture}
\KLXXEndDocument
        \endgroup
    }%
}
\par\medskip

% ===================== Section 1: Introduction =====================
\section{Introduction}
\label{sec: intro}

Sampling from a target distribution $\pi \propto \exp(-U)$ on $\mathbb R^d$ is a recurring problem in many fields including molecular simulation, Bayesian inference, and statistical physics \citep{frenkel2002understanding,robert2004monte,landau2014guide}. When $\pi$ is high-dimensional and multimodal, its mass splits among basins of $U$ separated by barriers. Classical Monte Carlo methods are usually local. Metropolis--Hastings \citep{metropolis1953equation,hastings1970monte}, Langevin dynamics \citep{langevin1908theorie,leimkuhler2015molecular}, Nos{\'e}--Hoover dynamics \citep{nose1984unified,hoover1985canonical}, and Hamiltonian Monte Carlo \citep{duane1987hybrid} all evolve from the initial configuration, and over an accessible simulation time they can stay in the basin where they started. When this occurs, finite-time estimates are biased toward the visited basin and give no direct account of the mass that was never reached. Tempering and particle methods widen the reach by connecting a tractable distribution to $\pi$ through intermediate distributions, but a basin that none of their chains has visited still cannot enter the target ensemble \citep{woodard2009conditions,chehab2024provable}. Every one of these samplers learns the target only where it has been, and the mass it never reached is invisible to it.

A more universal strategy is to learn a global transport from an easily sampled reference to the target, which generates independent, nonlocal proposals. Boltzmann generators do this with a learned transport map \citep{noe2019boltzmann}, dropping the long-time Monte Carlo iterations entirely. Let $\pi_0\propto\exp(-U_0)$ be an easily sampled reference distribution, usually Gaussian. We train an invertible normalizing flow $G$ \citep{rezende2015variational,papamakarios2017masked,kingma2018glow} on the domain of the target, written $\mathbb R^d$ throughout and a box or a torus in the experiments below, to fit the target-to-source transport relation $G_{\#}\pi\approx\pi_0$. The inverse map then sends independent source draws toward the target, and their pushforward distribution is $\nu:=(G^{-1})_{\#}\pi_0$. Its density is
\begin{equation}
    \nu(y)=\pi_0(G(y))\left|\det J_G(y)\right|,
    \label{eq: pushforward-density}
\end{equation}
where $J_G(y)$ is the Jacobian matrix of $G$ at $y$. The importance weight $w(y):=\pi(y)/\nu(y)$ is the ratio of the target density to the pushforward density. Reweighting by $w$ corrects the estimates from pushforward samples, so an imperfect flow still gives the right answer wherever it places samples. What the weights cannot correct is a region the flow never reaches, and that is the failure this paper is about. Throughout the paper we also use the \emph{log-ratio}:
\begin{equation}
	z(y) := \log w(y) = \log \frac{\pi(y)}{\nu(y)} =  U_0(G(y)) - U(y) - \log|\det J_G(y)| + \mathrm{const}.
	\label{eq: log-ratio}
\end{equation}
The same weights define the effective sample size (ESS), a diagnostic of density agreement on the support reached by $\nu$ \citep{liu2001monte,doucet2001introduction,martino2017rethinking}. A high ESS indicates accuracy in the regions the flow reaches, but says nothing about the modes of $U$ it does not reach. We call this phenomenon a \emph{fake ESS}; it is also reported in \citet{noe2019boltzmann} and \citet{midgley2022flow}.

The fake ESS can arise from various loss functions. The original Boltzmann generator \citep{noe2019boltzmann} and its variational-inference predecessor \citep{rezende2015variational} minimize the reverse Kullback--Leibler (KL) divergence
\begin{equation}
    \text{(reverse)}\quad \KL(\nu\|\pi) = \int_{\mathbb R^d} \nu(y) \log \frac{\nu(y)}{\pi(y)} \D y \,= -\,\mathbb E_{y\sim\nu}\bigl[z(y)\bigr],
    \label{eq: reverse KL}
\end{equation}
which needs samples only from $\nu$ and evaluations of $U$. The empirical gradient is therefore accessible, but the divergence is mode-seeking \citep{wainwright2008graphical}: a mode absent from $\nu$ contributes no samples and hence no gradient. A flow can then collapse onto a subset of the target modes \citep{wu2020stochastic,felardos2023designing}. Existing approaches interleave Markov chain Monte Carlo (MCMC) kernels with deterministic layers, encode target symmetries in the architecture, reduce gradient variance near the optimum, or anneal from a smoother potential \citep{wu2020stochastic,kohler2020equivariant,vaitl2022gradients,schopmans2025temperature}. These changes can improve exploration, but their success still depends on whether the resulting sampler reaches each relevant mode.

The opposing loss minimizes the forward KL,
\begin{equation}
    \text{(forward)}\quad \KL(\pi\|\nu) = \int_{\mathbb R^d} \pi(y) \log \frac{\pi(y)}{\nu(y)} \D y \,= \mathbb E_{y\sim\pi}\bigl[z(y)\bigr],
    \label{eq: forward KL}
\end{equation}
which is mass-covering for exact distributions: it diverges if $\nu$ vanishes where $\pi$ has positive mass \citep{wainwright2008graphical}. Its empirical form, however, needs samples from the target distribution $\pi$. Existing work resolves this circularity in several ways. Maximum likelihood uses configurations from a reference thermodynamic state \citep{wirnsberger2020targeted}; adaptive flow samplers train on a running Markov chain and use the flow as a global proposal \citep{gabrie2022adaptive}; molecular flows can fit a molecular-dynamics trajectory \citep{klein2023equivariant}; and transferable architectures amortize coverage across related molecules \citep{klein2024transferable}. Persistent chains and coupled-target regression provide other approximations to the missing target expectation \citep{naesseth2020markovian,rehman2026regflow}. Because an empirical loss reflects only the regions represented by its sample source, the mass-covering property of the exact divergence does not prevent empirical mode omission. This tension motivates a loss that retains forward KL while using samples from more than one distribution.

The sampled forward KL leaves three distinct gaps. First, a mode absent from its target surrogate contributes no training points. If the flow also misses that mode, the observed weights can be nearly uniform on the reached support, so the ESS can be close to one even though target mass is absent, which is the fake ESS above. Second, a finite or biased target surrogate distorts the sampled loss, and the fitted flow can inherit that error. Third, target samples give little information where the pushforward distribution places excess mass, allowing leakage between modes or into target-poor regions. Addressing one of these failures does not by itself resolve the other two.

Previous remedies change the loss to favor discovery: flow annealed importance sampling bootstrap (FAB) \citep{midgley2022flow} minimizes the $2$-divergence, whose stronger tail penalty is estimated by annealing toward $\pi^2/\nu$. This paper overcomes mode collapse in a different way, keeping the forward KL and adding what it lacks. First, a quench and temper (QT) construction finds candidate modes of $U$ beyond the reach of the source. It takes three steps. Each source sample is melted by Gaussian noise, which carries it past the region the source covers; quenched by gradient descent on $U$, which drives it into a local basin; and tempered by a short Langevin run, which spreads it around that basin. Second, a \emph{log-ratio variation}, the mean absolute pairwise difference of the target-to-pushforward log-density ratio under a chosen weighting measure, carries the mode information into the loss: the log-ratio is constant on the support of that measure exactly when the pushforward matches the target there, and pairwise differences cancel the unknown normalizers. Two such variations are appended to the forward KL, one weighted by the target to improve accuracy, the other weighted by the mixture of the QT distribution and the detached pushforward distribution to search candidate modes and to restrain leakage between them. The result is the KLXX loss, named for its two X's; Section~\ref{sec: construction} gives the construction. Our contributions are as follows:

\begin{enumerate}
	\setlength{\itemsep}{0pt}
	\item We introduce a new KLXX loss \eqref{eq: KLXX-G}, whose components mix QT and pushforward distributions. Pairwise differences remove the unknown target normalizer, and the log-ratio variation retains $\nu=\pi$ as a minimizer. The target variation reuses the forward KL batch, while the mixture variation supplies candidate-mode and pushforward support information absent from that batch.

	\item We derive the Fisher--Rao flow of the generalized KLXX loss under a general choice of parameters, and identify the dissipation it induces: the two log-ratio variations enter as nonpositive terms, and as strictly negative ones at positive coefficients unless the importance weight is already constant. For a biased target surrogate, the two log-ratio variations improve the KL bound at the stationary distribution of the biased KLXX loss, and Theorem~\ref{thm: accuracy} exhibits that improvement under mild conditions.

	\item We demonstrate that QT can reach target basins outside the regions covered by the source and by the sequential Monte Carlo (SMC) surrogate. In the Himmelblau and Sparse tests, forward KL, $\KL$+$\X_\pi$ and FAB each miss modes, while both QT-based losses recover every mode; mixing QT and pushforward samples also reduces visible intermodal leakage (Figures~\ref{fig: himmelblau} and~\ref{fig: sparse}). We therefore report ESS together with coverage, since uniform weights on reached modes do not imply complete coverage.

	\item We build an adaptive-staging Boltzmann generator based on KLXX that uses only oracle access to the target potential $U(x)$ and its gradient $\nabla U(x)$, with SMC and QT generating the required samples on the fly. We compare KLXX against $\KL$+$\X_\pi$ on this construction, and measure the weight degeneracy by the product of its per-stage ESS factors. Each training step draws its own samples and never differentiates through an MCMC transition, since the mixture measure is detached. We also bound the error of the sample set it returns. Following the standard non-asymptotic analysis \citep{delmoral2004feynman,chopin2020introduction}, Theorem~\ref{thm: propagation} controls the error at rate $N^{-1/2}$ for every bounded observable with a constant independent of $N$, so the scheme is asymptotically unbiased, provided each stage weight is essentially bounded. Reading that constant stagewise defines the propagation factors, which summarize the weight degeneracy of the composed stage maps; the experiments report the computable factor $\hat F$, assembled from the per-stage ESS.
\end{enumerate}

%\paragraph{Organization.}
The rest of the paper is organized as follows:
Section~\ref{sec: construction} constructs the KLXX loss and compares it with related work. Section~\ref{sec: FR} analyzes the generalized KLXX loss in Fisher--Rao geometry, deriving its gradient flow and the accuracy bound it reaches under a biased target surrogate; Appendices~\ref{sec: appendix-FR} and~\ref{sec: appendix-accuracy} give the proofs. Section~\ref{sec: training} gives the training algorithm. Section~\ref{sec: boltzmann} proposes the adaptive-staging Boltzmann generator and bounds the error of the sample set it returns; Appendix~\ref{sec: appendix-propagation} gives the proof. Section~\ref{sec: experiments} reports the numerical results. Section~\ref{sec: conclusions} gives the conclusion.

\section{Construction of the KLXX loss}
\label{sec: construction}

\subsection{One forward KL, two log-ratio variations}
\label{subsec: construction}

The KLXX loss is the forward KL \eqref{eq: forward KL} augmented by two \emph{log-ratio variations}, defined for a weighting distribution $\omega$ by
\begin{equation}
	\X_\omega(\pi\|\nu) = \mathbb E_{y,y'\sim \omega} \big[|z(y)-z(y')|\big] = \iint_{\mathbb R^d\times\mathbb R^d} \omega(y)\,\omega(y')
	\biggl|\log\frac{\pi(y)}{\nu(y)} - \log\frac{\pi(y')}{\nu(y')}\biggr| \D y\D y'.
	\label{eq: X}
\end{equation}
This is the mean absolute pairwise difference of the log-ratio $z(y)$ in \eqref{eq: log-ratio}, with both points drawn independently from $\omega$; for a real random variable the functional is the Gini mean difference \citep{yitzhaki2003gini}. The pairwise difference cancels the unknown normalizers, and $\X_\omega(\pi\|\nu)=0$ exactly when $z$ is $\omega$-almost surely constant. The normalizing constants of \eqref{eq: log-ratio} are therefore omitted: they depend on neither $y$ nor $G$, so they shift the forward KL by a fixed amount. The two variations are:

\textbf{Target variation}: $\X_\omega$ with $\omega = \pi$, the target distribution. It is evaluated on the target samples the forward KL already draws.

\textbf{Mixture variation}: $\X_\omega$ with $\omega = (\hat\pi + \bar\nu)/2$, the equal mixture of the QT distribution $\hat\pi$ and the detached pushforward distribution $\bar\nu$, which is identical to $\nu$ but does not enter the differentiation graph. The QT distribution is built from the source in three steps:
	\begin{equation*}
		y\sim \pi_0 \longrightarrow \boxed{\small
		\begin{gathered}
			\text{melt: scatter with noise} \\
			y = y + m_e \zeta, \quad \zeta \sim \mathcal N(0,I_d)
		\end{gathered}
		} \longrightarrow \boxed{\small
		\begin{gathered}
			\text{quench: optimization} \\
			\dot y = -\nabla U(y)
		\end{gathered}
		} \longrightarrow \boxed{\small
		\begin{gathered}
			\text{temper: Langevin} \\
			\dot y = -\nabla U(y) + \sqrt{2}\dot B_t
		\end{gathered}
		} \longrightarrow y\sim \hat \pi
	\end{equation*}
Each sample $y$ from the source distribution $\pi_0$ is scattered by Gaussian noise at a melt scale $m_e>0$, driven into a local basin of $U$ by gradient descent, and spread around that basin by a short Langevin run. Section~\ref{subsec: QT} gives the discrete construction.

In terms of the flow map $G$, the KLXX loss reads
\begin{align}
	\mathrm{KLXX}[G] & = \KL(\pi\|\nu) + \X_\pi(\pi\|\nu) + \X_{(\hat\pi+\bar\nu)/2}(\pi\|\nu) \notag \\
	& = \mathbb E_{y\sim\pi}\big[z(y)\big] + \mathbb E_{y,y'\sim\pi}\big[|z(y)-z(y')|\big] + \mathbb E_{y,y'\sim(\hat\pi+\bar\nu)/2}\big[|z(y)-z(y')|\big],
	\label{eq: KLXX-G}
\end{align}
where the log-ratio of \eqref{eq: log-ratio}, written out again for convenience,
\begin{equation*}
	z(y) = \log\frac{\pi(y)}{\nu(y)} = U_0(G(y)) - U(y) - \log|\det J_G(y)| ,
\end{equation*}
depends explicitly on $G$, while the detached pushforward distribution $\bar\nu$ requires $G^{-1}$ evaluations that do not enter the differentiation graph.

The KLXX loss can be generalized through its parameters, by changing the coefficients of the two log-ratio variations and the proportions of the mixture. The generalized KLXX loss, determined by a set of parameters $(\lambda,\vartheta,\alpha,\beta)$, reads
\begin{align}
	\mathcal L[G] & = \KL(\pi\|\nu) + \lambda\,\X_\pi(\pi\|\nu) + \vartheta\,\X_{\alpha\hat\pi+\beta\bar\nu}(\pi\|\nu) \notag \\
	& = \mathbb E_{y\sim\pi}\big[z(y)\big] +  \lambda\, \mathbb E_{y,y'\sim\pi}\big[|z(y)-z(y')|\big] + \vartheta\, \mathbb E_{y,y'\sim \alpha\hat\pi+\beta\bar\nu}\big[|z(y)-z(y')|\big],
	\label{eq: KLXX-general}
\end{align}
with $\lambda,\vartheta,\alpha,\beta\Ge0$ and $\alpha+\beta=1$, so that the mixture $\xi = \alpha\hat\pi+\beta\bar\nu$ is again a probability distribution. Taking $(\lambda,\vartheta,\alpha,\beta)=(1,1,\tfrac12,\tfrac12)$ recovers \eqref{eq: KLXX-G}; $(\lambda,\vartheta)=(0,0)$ gives the plain forward KL, $(1,0)$ gives $\KL$+$\X_\pi$, and $(1,1,1,0)$ weights the second variation by the QT distribution alone.

\subsection{Comparison to related works}
\label{subsec: loss-related}

On the Monte Carlo side, the tempering and particle methods named in Section~\ref{sec: intro} are built on a sequence of intermediate distributions. Parallel tempering (PT) and replica exchange run copies of the system at several temperatures and exchange neighboring configurations, so a basin reached at high temperature can enter the low-temperature ensemble \citep{swendsen1986replica,geyer1991markov,hukushima1996exchange}. Annealed importance sampling (AIS) moves along a bridge of intermediate distributions while accumulating importance weights \citep{neal2001annealed}, and SMC uses a similar bridge but resamples and rejuvenates a weighted particle set at each level \citep{doucet2001introduction,delmoral2006sequential}. Other classical routes bias a chosen reaction coordinate \citep{torrie1977umbrella,kastner2011umbrella} or rewrite the normalizing integral as a one-dimensional quadrature \citep{skilling2006nested}.

Recent work improves PT by better swap schedules and temperature grids \citep{syed2022nonreversible}, by adapting the reference endpoint and the connecting path \citep{surjanovic2022variational}, by neural transports between adjacent distributions \citep{zhang2025accelerated}, and by carrying replica exchange to stochastic-gradient samplers \citep{chen2019accelerating,deng2020nonconvex,deng2021variance,dong2021replica}. These accelerations move a discovered basin through the temperature grid more efficiently, but two limits remain: a basin that no chain has visited cannot enter the target ensemble \citep{woodard2009conditions,chehab2024provable}, and the mass of a visited basin is still estimated through occupation frequencies rather than reweighting.

On the normalizing flow side, the highlight of the KLXX loss \eqref{eq: KLXX-G} is the mixture of the QT distribution $\hat\pi$ with the detached pushforward distribution $\bar\nu$, which remedies mode collapse. The following ingredients draw on several ideas already present in normalizing flow training, and we compare them here.

The closest related loss is that of FAB \citep{midgley2022flow}, which couples AIS and flow training but replaces the forward KL by the $\alpha$-divergence at $\alpha=2$. Its integrand is proportional to $\pi^2/\nu$, and thus the training minimizes the importance-weight variance. FAB targets $\pi^2/\nu$ with AIS and reuses those samples through a prioritized replay buffer that must be maintained across training; its MCMC transitions carry the gradient of the flow density. KLXX instead retains the forward KL and adds nonnegative log-ratio variations, so its additional cost is the QT construction and the variation evaluations under the mixture measure. The $2$-divergence gives the stronger tail penalty, whereas KLXX obtains its coverage information from QT.

Centered log-ratio penalties also arise in log-variance losses for learned diffusion samplers \citep{richter2024improved} and in log-dispersion regularization (LDR) losses for Boltzmann generators \citep{schopmans2026ldr}. The present construction therefore belongs to a broader family of log-ratio regularizers. LDR-L1 uses the centered penalty $\mathbb E_\pi|z-\mathbb E_\pi z|$, and Jensen's inequality with the triangle inequality gives $\mathbb E_\pi|z-\mathbb E_\pi z|\Le\X_\pi(\pi\|\nu)\Le2\mathbb E_\pi|z-\mathbb E_\pi z|$, so the two agree within a factor of two at the same coefficient. They are not equivalent: $\X_\pi$ pairs the forward KL integrand at two independent target points, and KLXX adds a second variation under the QT and pushforward mixture rather than under fixed energy-labeled data.

A second family trains one flow per annealing step inside an SMC sweep. Annealed flow transport (AFT) \citep{arbel2021annealed} trains each flow by the KL divergence between the transported distribution and the next annealed target. That KL is estimated on the current weighted particles, which are then transported through the flow and resampled. Its continual repeated variant, continual repeated annealed flow transport (CRAFT) \citep{matthews2022craft}, repeats the training across successive sweeps instead of fitting each flow once. These are the closest prior works to the staged construction of Section~\ref{sec: boltzmann}, whose stage loss carries the same forward KL term; what KLXX adds at each stage are the two log-ratio variations, and in particular the mixture of QT and pushforward samples that a per-stage KL does not supply.

Constrained mass transport keeps the overlap of neighboring intermediate distributions high by constraining their KL divergence and entropy decay \citep{vonklitzing2026constrained}; the stage selection of Section~\ref{sec: boltzmann} addresses the same difficulty through the ESS gate. Sequential Boltzmann generators fit one flow to biased molecular dynamics samples and anneal toward the target at inference time \citep{tan2025sequential}, so their transport acts after training, whereas the surrogate of Section~\ref{subsec: annealing} is built during it.

A separate family replaces the loss altogether. Energy-only methods such as iDEM alternate a sampling loop with a score- or velocity-fitting loop and a replay buffer \citep{akhound2024iterated,woo2024iefm}. Adjoint sampling learns a diffusion sampler from the energy alone by stochastic optimal control \citep{havens2025adjoint}. Controlled stochastic differential equations, denoising diffusions, and flow-matching models learn a drift or velocity rather than a bijection \citep{zhang2022path,vargas2023denoising,berner2024optimal,nusken2021solving,richter2024improved}. Annealing enters this family as well: NETS augments AIS with a learned drift \citep{albergo2024nets}, and a temperature sequence of diffusion models can be annealed at inference time \citep{akhoundsadegh2025progressive}. These stochastic transports can be more expressive than the invertible architectures used here, but their terminal density often requires per-sample ODE integration \citep{he2025notrick}. We retain a bijection because the exact pushforward density, importance ratio, ESS, and reweighting are central to every term of \eqref{eq: KLXX-G} and to the final estimator. Table~\ref{tab: loss-methods} summarizes these distinctions.

A further family keeps an exact likelihood but drops the bijection. Autoregressive Boltzmann generators (ArBG) factorize the density as $p(x)=\prod_j p(x_j\mid x_{<j})$, so the likelihood needs neither a Jacobian determinant nor an ODE solve, and no map is inverted \citep{rehman2026autoregressive}. They are trained by likelihood on molecular dynamics data, whereas the generator of Section~\ref{sec: boltzmann} uses no external samples.

\begin{table}[htb]
\centering
{\small
\begin{tabular}{@{}lll@{}}
\toprule
method & loss & training mechanism \\
\midrule
reverse KL & $\KL(\nu\|\pi)$ & pushforward samples; no target sampler \\
forward KL & $\KL(\pi\|\nu)$ & SMC target surrogate in this work \\
FAB & $2$-divergence ($\pi^2/\nu$) & annealing on $\pi^2/\nu$; MCMC needs $\nabla_y\log\nu$; replay buffer \\
LDR & $\KL(\pi\|\nu)+\mathbb E_\pi[|z-\mathbb E_\pi z|^p]$ & fixed energy-labeled data; no extra pushforward samples \\
AFT, CRAFT & $\KL$ per annealing step & one flow per step, trained on the weighted particles \\
iDEM & energy matching & iterative sampler; stochastic transport; replay buffer \\
ArBG & maximum likelihood & autoregressive model; molecular dynamics data \\
\multirow{2}{*}{KLXX} &
\multirow{2}{*}{$\KL(\pi\|\nu)$+$\X_\pi$+$\X_{(\hat\pi+\bar\nu)/2}$} &
target annealing, QT, and detached pushforward samples \\
& & no $\nabla_y\log\nu$ in the chains \\
\bottomrule
\end{tabular}
}
\caption{Losses and training mechanisms for the methods discussed above. Here $z=\log(\pi/\nu)$; the LDR row gives its target-weighted form with $p=1$ for LDR-L1 and $p=2$ for LDR-L2. The forward KL row uses the SMC target surrogate of this paper but may instead use external target data. The FAB row describes the reference implementation; the runs of Section~\ref{sec: experiments} keep no replay buffer. The table compares how samples enter each loss; it does not assume equal computational budgets.}
\label{tab: loss-methods}
\end{table}

Of the methods above, we run FAB alongside forward KL and the log-ratio variants; the other families are left to future work, for the following reasons. Training without external data is itself a constraint: much forward KL work fits a flow to target samples produced beforehand, so a method given those samples and one given only $U$ and $\nabla U$ are solving different problems. Molecular flows are fitted to molecular dynamics trajectories \citep{klein2023equivariant,klein2024transferable}, sequential Boltzmann generators to biased molecular dynamics samples \citep{tan2025sequential}, and autoregressive Boltzmann generators by likelihood on peptide simulation datasets \citep{rehman2026autoregressive}; LDR is trained either on simulation datasets or purely variationally \citep{schopmans2026ldr}. The remaining candidates differ in what their training loop must carry. FAB estimates the $2$-divergence, whose gradient has high variance when it is estimated from flow samples. Its annealed importance sampling and prioritized replay buffer are designed to control that variance \citep{midgley2022flow}, and both add per-step cost. Its chains also target $\pi^2/\nu$, so every MCMC step needs the positional gradient of the flow density. We therefore run FAB on the 2D benchmarks, the product multi-well and the $\phi^4$ lattices, without the replay buffer and with a rejuvenation kernel that leaves each annealing level invariant, as Section~\ref{subsec: annealing} describes. An independent study reports FAB reaching comparable accuracy on alanine di- and tetrapeptide at roughly three times the target energy evaluations, and resolving the metastable states of its largest system only partially \citep{schopmans2025temperature}. iDEM carries a replay buffer of the same kind and learns a diffusion sampler, so no exact pushforward density is available for importance reweighting \citep{akhound2024iterated}. By the two-sided bound derived above, LDR-L1 is within a factor of two of $\KL$+$\X_\pi$ at the same coefficient. Two losses with comparable values can still have different gradients and reach different flows, so the bound is not a substitute for running LDR-L1; Section~\ref{subsec: highd} runs it on the product multi-well. The mixture variation is in any case not a competitor to them: it is a way of feeding known candidate modes into a loss, and it can be added to any flow-based sampler for a given target, whatever the architecture and whatever the principal loss.

\section{Analysis of the generalized KLXX loss}
\label{sec: FR}

This section analyzes the generalized KLXX loss \eqref{eq: KLXX-general}. For convenience, we write it as a functional of the pushforward distribution $\nu$ rather than of the flow map $G$:
\begin{align}
	\mathcal L[\nu] & =  \KL(\pi\|\nu) + \lambda\,\X_\pi(\pi\|\nu) + \vartheta\,\X_{\alpha\hat\pi+\beta\bar\nu}(\pi\|\nu) \notag \\
	& = \mathbb E_{y\sim\pi}\big[z(y)\big] +  \lambda\, \mathbb E_{y,y'\sim\pi}\big[|z(y)-z(y')|\big] + \vartheta\, \mathbb E_{y,y'\sim \alpha\hat\pi+\beta\bar\nu}\big[|z(y)-z(y')|\big],
	\label{eq: KLXX-general-nu}
\end{align}
the same loss as \eqref{eq: KLXX-general} with the same conditions on the parameters. Note that the log-ratio $z = \log(\pi/\nu)$ in \eqref{eq: log-ratio} carries the entire dependence on $\nu$. Throughout the paper, we use one Greek symbol ($\pi$, $\nu$, $\mu$, \emph{etc.}) to denote both a probability distribution and its density with respect to Lebesgue measure on $\mathbb R^d$.

The three subsections below take the generalized loss \eqref{eq: KLXX-general-nu} in turn: an account of how the mixture variation contributes to mode discovery, the Fisher--Rao flow of the loss and the dissipation it carries, and an accuracy bound under a biased target surrogate.

\subsection{Principle of mixture variation}
\label{subsec: mixture-variation}

The mixture distribution $\xi = \alpha\hat\pi+\beta\bar\nu$ mainly relies on the QT distribution $\hat\pi$ to search the candidate modes of the potential $U$, as described in Section~\ref{sec: intro}. The QT distribution $\hat\pi$ is generated from the source distribution $\pi_0$ in three steps: for each $y\sim \pi_0$,
\begin{enumerate}
	\setlength{\itemsep}{0pt}
	\item \textbf{Melt}: scatter with Gaussian noise, $y = y+m_e\zeta$, with $\zeta\sim \mathcal N(0,I_d)$, at a melt scale $m_e>0$ that carries $y$ beyond the region the source covers;
	\item \textbf{Quench}: descend the potential, $\dot y = -\nabla U(y)$, until $y$ settles at a local minimizer of $U$, that is, at the center of whichever basin the melted particle landed in;
	\item \textbf{Temper}: run a short Langevin trajectory, $\dot y = -\nabla U(y)+\sqrt 2\,\dot B_t$, which spreads $y$ around that center at the target temperature.
\end{enumerate}
The three names are taken directly from industrial steelmaking, where a workpiece is melted, quenched, and then tempered in that order. The same three operations appear in the energy-landscape literature: melt and quench together are basin hopping \citep{wales1997global}, and the quench is the steepest-descent map carrying a configuration to the local minimum known as its inherent structure \citep{stillinger1982hidden}.

Only melting carries the coverage: a large $m_e>0$ carries the particles beyond the source and into basins the source never visits, and every basin the melted cloud touches receives particles. Quenching then uses a local gradient descent to drive these particles into the potential basins. Tempering finally restores a spread around each center, so that $\hat\pi$ is a distribution over configurations rather than point masses at the minimizers.

We illustrate the construction on a one-dimensional example. Define the 1D Rastrigin potential
\begin{equation*}
	U(x) = \frac12x^2+4\cos(2\pi x), \qquad x\in \mathbb R,
\end{equation*}
take the source distribution $\pi_0 = \mathcal N(0,1)$ and the melt scale $m_e=2.0$, and temper for a Langevin time of $0.1$ to obtain the QT distribution $\hat\pi$. Figure~\ref{fig: 1d-qt} shows $\hat\pi$ against the target $\pi\propto\exp(-U)$.

Every mode of $\pi$ is covered by $\hat\pi$, including the outer wells the source never reaches, but the probability assigned to each differs from the target one: $\hat\pi$ weights a well by the melted mass falling into it, so its outer wells carry far more than $\pi$ gives them. This is exactly complementary to the forward KL, which is accurate where its target samples land and blind elsewhere, whereas $\hat\pi$ is inaccurate everywhere and blind only where the melted cloud never lands. The difference form of the log-ratio variation \eqref{eq: X} is what lets $\hat\pi$ be used without corrupting the fit to $\pi$: a variation asks only that the log-ratio be flat under its weighting measure, so $\hat\pi$ enters as the points at which flatness is demanded, never as an estimate of the mass they carry.

\begin{figure}[htb]
    \centering
    \includegraphics[width=0.55\textwidth]{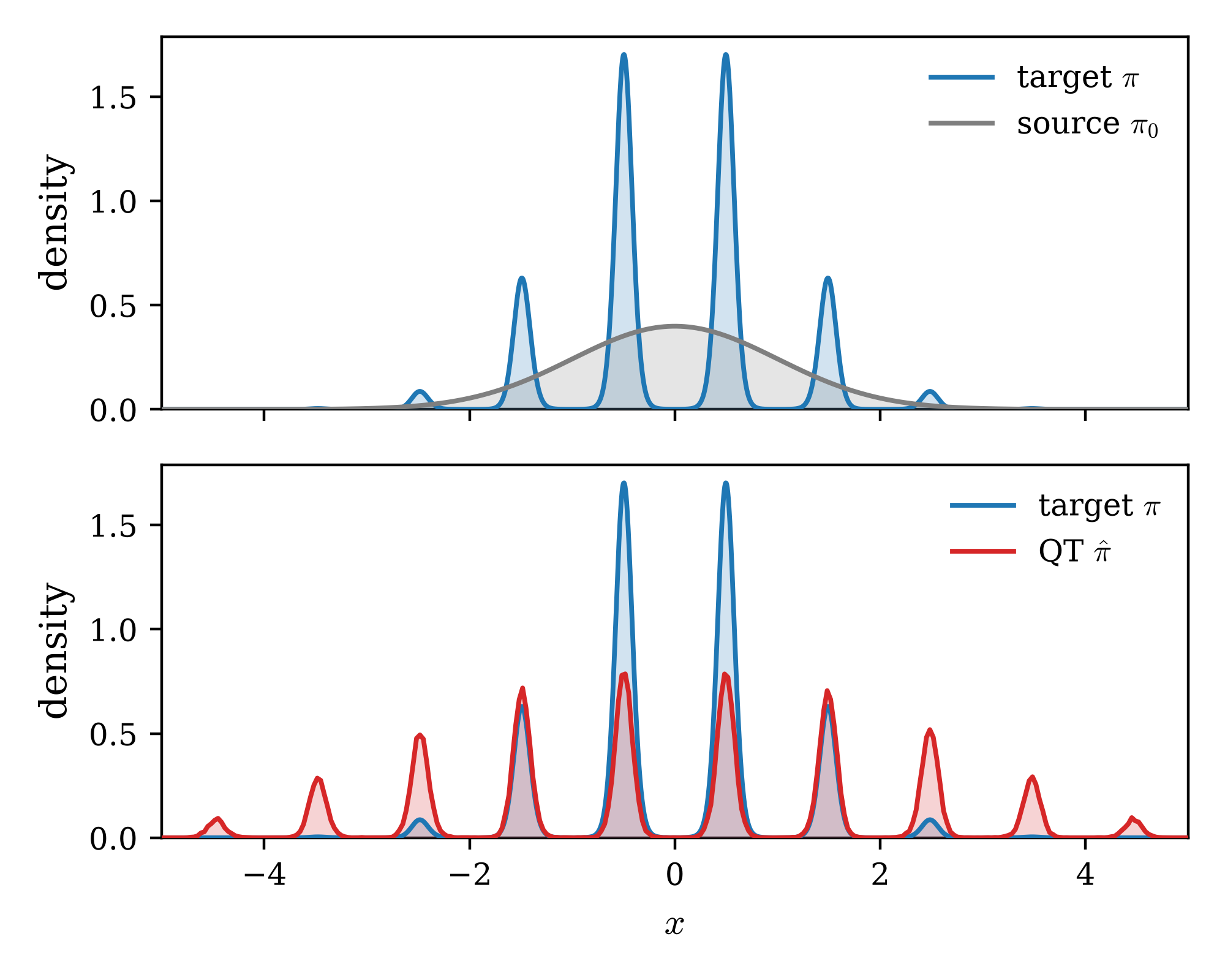}
    \caption{Quench and temper on the 1D Rastrigin target $\pi\propto\exp(-U)$ with $U(x)=\frac12x^2+4\cos(2\pi x)$. Top: the Gaussian source $\pi_0=\mathcal N(0,1)$. Bottom: the QT distribution $\hat\pi$, built from $10^5$ source samples at melt scale $m_e=2.0$ and tempered over a Langevin time of $0.1$.}
    \label{fig: 1d-qt}
\end{figure}%

However, using the QT distribution $\hat\pi$ alone in the log-ratio variation may cause \emph{intermodal leakage}: the trained flow places spurious mass between the target modes. The Sparse target shows this directly, where $\KL$+$\X_\pi$+$\X_{\hat\pi}$ recovers all four modes but leaves visible mass in the intermodal regions (Figure~\ref{fig: sparse}). The reason is that $\X_{\hat\pi}$ weights only the points $\hat\pi$ produces, and those sit in the basins of $U$; it therefore says nothing about the regions the pushforward $\nu$ occupies and $\hat\pi$ does not, which are exactly the intermodal ones. Combining it with the detached pushforward distribution $\bar\nu$ is exactly the treatment. In the log-ratio variation with the mixture distribution $\xi =\alpha\hat\pi+\beta\bar\nu$, the log-ratio $z=\log\frac{\pi}{\nu}$ is penalized to be a constant under both $\hat\pi$ and $\bar\nu$, hence during training we have approximately
\begin{equation}
	z(y) = \log\frac{\pi(y)}{\nu(y)} \approx z(y') = \log\frac{\pi(y')}{\nu(y')}, \qquad
	y\sim \hat\pi,~~y'\sim\nu.
	\label{eq: approx} 
\end{equation}
The relation \eqref{eq: approx} reflects a fundamental collision during training. On the one hand, $y$ lies in a target mode, one of those $\hat\pi$ supplies, and secures a large density $\pi(y)$, so $z(y)$ grows to $+\infty$ if the pushforward $\nu$ does not reach that mode. On the other hand, $y'$ lies wherever the flow sends mass and secures a positive density $\nu(y')$, so $z(y')$ decreases to $-\infty$ if the pushforward reaches the intermodal regions, where $\pi$ is negligible. Demanding that the two agree sets the divergences against each other, and the flow can satisfy both only by reaching every mode and staying out of the intermodal regions. The log-ratio variation with the mixture $\xi = \alpha\hat\pi+\beta\bar\nu$ resolves the collision: $\hat\pi$ supplies the points at which $z$ must not run high, $\bar\nu$ supplies the points at which it must not run low, and the variation balances the log-ratio across both, thus reducing the intermodal leakage.

\begin{remark}
	We do not train with the log-ratio variation alone. A variation constrains only the flatness of $z$ pointwise, while the forward KL \eqref{eq: forward KL} drives the flow globally: by Jensen's inequality it is nonnegative and zero only at $\nu=\pi$. Appendix~D.2 of the LDR study \citep{schopmans2026ldr} reports the analogous failure: training with LDR-L1 or LDR-L2 alone, without a KL term, produces qualitatively incorrect distributions.
\end{remark}

\begin{remark}
	The melting step is a design choice. Gaussian noise suits a target with Gaussian tails, and on a compact domain the same scatter is used with the measure of that domain. It can be wasteful when the target concentrates near a low-dimensional manifold, since most of the scattered samples land in the ambient space. Melt and quench can then be replaced by a particle dynamics carrying a pairwise repulsion, which spreads the particles apart instead of scattering each one on its own. The repulsive term of Stein variational gradient descent \citep{liu2016stein} is of this kind, and the history-dependent bias of metadynamics \citep{laio2002escaping} serves the same purpose in molecular simulation.
\end{remark}

\subsection{Fisher--Rao gradient of the KLXX loss}
\label{subsec: FR-convergence}

For an arbitrary parameter choice $(\lambda,\vartheta,\alpha,\beta)$, the generalized KLXX loss \eqref{eq: KLXX-general-nu} attains its minimum $0$ at the exact target $\pi$, so we ask what the gradient flow it induces dissipates, and how the two variations enter that dissipation.
In the Wasserstein-2 geometry, the forward KL is not displacement convex, so the Otto--Villani and Bakry--{\'E}mery arguments used for the reverse KL do not apply \citep{otto2000generalization,bakry2014analysis}. We therefore work in the Fisher--Rao (FR) geometry, where the gradient flow has an explicit form. Assume all densities are positive and regular enough for differentiation.

Let $\omega$ be a weighting distribution that is fixed under the variation, and define the \emph{nonlocal correlation} of the importance weight
\begin{equation}
	S^{\omega}(x) := \mathbb E_{y\sim\omega}\bigl[\operatorname{sgn}\bigl(w(x)-w(y)\bigr)\bigr] \in [-1,1],
	\label{eq: FR-S}
\end{equation}
which reports where $w(x)$ sits among the weights $\omega$ carries. Here $\operatorname{sgn}(\cdot)\in\{0,\pm1\}$ is the sign of a real number.

The FR flow of the generalized KLXX loss \eqref{eq: KLXX-general-nu} is a differential equation for a time-dependent pushforward density $\nu_t$, satisfying
\begin{equation}
	\partial_t\nu_t(x) = -\,\mathrm{grad}^{\mathrm{FR}}\mathcal L(\nu_t) = \pi(x)-\nu_t(x) + 2\lambda\,\pi(x)S^{\pi}_t(x) + 2\vartheta\,\xi(x)S^{\xi}_t(x),
	\label{eq: FR-flow}
\end{equation}
where the nonlocal correlations are evaluated with the importance weight $w_t=\pi/\nu_t$,
\begin{equation}
	S^{\pi}_t(x) = \mathbb E_{y\sim\pi}\bigl[\operatorname{sgn}\bigl(w_t(x)-w_t(y)\bigr)\bigr],
	\qquad
	S^{\xi}_t(x) = \mathbb E_{y\sim\xi}\bigl[\operatorname{sgn}\bigl(w_t(x)-w_t(y)\bigr)\bigr].
	\label{eq: FR-S-t}
\end{equation}
The mixture $\xi=\alpha\hat\pi+\beta\bar\nu$ is held fixed when the gradient is taken, as Appendix~\ref{sec: appendix-FR} carries through the derivation, and then follows $\nu_t$ along the flow. For $\beta>0$, \eqref{eq: FR-flow} is therefore not the gradient flow of one fixed functional; the dissipation identity \eqref{eq: combined-rate} below holds for it as stated.
Differentiating $\KL(\pi\|\nu_t)$ along \eqref{eq: FR-flow} gives the dissipation relation of the forward KL
\begin{equation}
	\frac{\D}{\D t}\KL(\pi\|\nu_t) = -\chi^2(\pi\|\nu_t)
	- \lambda\,\mathbb E_{x,y\sim\pi}\bigl[|w_t(x)-w_t(y)|\bigr]
	- \vartheta\,\mathbb E_{x,y\sim\xi}\bigl[|w_t(x)-w_t(y)|\bigr],
	\label{eq: combined-rate}
\end{equation}
where $\chi^2(\pi\|\nu):=\mathbb E_{\nu}\bigl[(w-1)^2\bigr]$ is the chi-squared divergence. The last two terms of \eqref{eq: combined-rate} are what the two variations contribute, and both are nonpositive. At $\lambda=\vartheta=0$ they vanish, the flow \eqref{eq: FR-flow} becomes the linear equation $\partial_t\nu_t=\pi-\nu_t$, and the only dissipation left is $-\chi^2(\pi\|\nu_t)$. Since $\chi^2(\pi\|\nu_t)\Ge\KL(\pi\|\nu_t)$, that term alone gives the decay $\KL(\pi\|\nu_t)\Le e^{-t}\KL(\pi\|\nu_0)$, which the forward KL already carries on its own. Adding the two variations back only makes the right-hand side more negative, so they do not slow convergence in FR geometry. Appendix~\ref{sec: appendix-FR} gives the derivation of the gradient flow \eqref{eq: FR-flow} and the proof details.

\begin{remark}
	Identity \eqref{eq: combined-rate} does not say how fast training converges. It describes a flow in density space. Training instead descends in the parameters of the flow map $G$, and that descent moves the density along a different path. The flow time $t$ is also not a step count. How far one step advances depends on the magnitude of the parameter gradient, and \eqref{eq: combined-rate} does not control it. Fisher--Rao geometry works in unconstrained density space and therefore cannot see mode collapse, which is a pathology of the sampled surrogate, the parametrization, and the optimizer; nothing in this subsection uses any property of the QT construction of Section~\ref{subsec: QT}.
\end{remark}

\subsection{Accuracy under biased target surrogate}

In training the generalized KLXX loss \eqref{eq: KLXX-general-nu}, SMC approximates the target distribution $\pi$ (Section~\ref{subsec: annealing}), so the target surrogate is a biased distribution $\tilde\pi$. The corresponding biased loss is given by
\begin{equation}
	\tilde{\mathcal L}[\nu] = \mathbb E_{y\sim\tilde\pi}\big[z(y)\big] +  \lambda\, \mathbb E_{y,y'\sim\tilde\pi}\big[|z(y)-z(y')|\big] + \vartheta\, \mathbb E_{y,y'\sim \alpha\hat\pi+\beta\bar\nu}\big[|z(y)-z(y')|\big],
	\label{eq: KLXX-biased}
\end{equation}
where the log-ratio $z=\log(\pi/\nu)$ is still evaluated against the exact target, and only the outer measures of the first two terms are replaced  by the biased $\tilde\pi$. The mixture term carries no surrogate bias: $\xi=\alpha\hat\pi+\beta\bar\nu$ approximates nothing, $\hat\pi$ being supplied by QT and $\bar\nu$ being the detached copy of $\nu$.

Following the derivation of \eqref{eq: FR-flow} for the unbiased loss, the FR gradient of the biased KLXX loss \eqref{eq: KLXX-biased} with respect to $\nu$ has a similar form,
\begin{equation}
	\mathrm{grad}^{\mathrm{FR}}[\tilde{\mathcal L}](x) = \nu(x)-\tilde\pi(x) - 2\lambda\,\tilde\pi(x)S^{\tilde\pi}(x) - 2\vartheta\,\xi(x)S^{\xi}(x),
	\label{eq: FR-grad-biased}
\end{equation}
where $S^{\tilde\pi}$ and $S^{\xi}$ are the nonlocal correlations defined in \eqref{eq: FR-S}.
Let $\nu_\star$ be a stationary density of $\tilde {\mathcal L}$, at which the gradient \eqref{eq: FR-grad-biased} vanishes. Collecting the two $\tilde\pi$ terms gives
\begin{equation}
	\nu_\star(x) = \tilde\pi(x)\bigl(1 + 2\lambda\,S^{\tilde\pi}_\star(x)\bigr) + 2\vartheta\,\xi(x)\,S^{\xi}_\star(x),
	\label{eq: surrogate-stationary}
\end{equation}
where the correlation terms are evaluated with the importance weight $w_\star=\pi/\nu_\star$,
\begin{equation}
	S^{\tilde\pi}_\star(x) = \mathbb E_{y\sim\tilde\pi}\bigl[\operatorname{sgn}\bigl(w_\star(x)-w_\star(y)\bigr)\bigr],
	\qquad
	S^{\xi}_\star(x) = \mathbb E_{y\sim\xi}\bigl[\operatorname{sgn}\bigl(w_\star(x)-w_\star(y)\bigr)\bigr].
\end{equation}
Since $\bar\nu$ tracks $\nu$ as training updates it, \eqref{eq: surrogate-stationary} characterizes the densities that annihilate the gradient rather than the minimizers of a functional. At $\lambda=\vartheta=0$ it reduces to $\nu_\star=\tilde\pi$, whose error is $\KL(\pi\|\tilde\pi)$; the following theorem bounds how far the two log-ratio variations move that error.

\begin{theorem}
\label{thm: accuracy}
Fix the coefficients $(\lambda,\vartheta,\alpha,\beta)$ and let $\tilde\pi$ be a positive probability density. Assume the fixed-point equation \eqref{eq: surrogate-stationary} has a positive solution $\nu_\star$, the mixture $\xi$ on its right-hand side being evaluated at that same $\nu_\star$, and write $w_\star = \pi/\nu_\star$ for the importance weight. Here $\xi=\alpha\hat\pi+\beta\nu_\star$ is the mixture measure at that stationary point, and we assume $\mathbb E_{\tilde\pi}[w_\star]<\infty$ and $\mathbb E_{\xi}[w_\star]<\infty$. Then
\begin{equation}
	\KL(\pi\|\nu_\star) \Le \KL(\pi\|\tilde\pi)
	- \lambda\,\mathbb E_{x,y\sim\tilde\pi}\bigl[|w_\star(x)-w_\star(y)|\bigr]
	- \vartheta\,\mathbb E_{x,y\sim\xi}\bigl[|w_\star(x)-w_\star(y)|\bigr] .
	\label{eq: accuracy-bound}
\end{equation}
\end{theorem}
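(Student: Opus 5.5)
The plan is to compare $\KL(\pi\|\nu_\star)$ with $\KL(\pi\|\tilde\pi)$ through the log-ratio $\log(\tilde\pi/\nu_\star)$, and to use the fixed-point equation \eqref{eq: surrogate-stationary} to control this ratio after integrating against $\pi$. Start from
\[
\KL(\pi\|\nu_\star)-\KL(\pi\|\tilde\pi)=\int \pi\log\frac{\tilde\pi}{\nu_\star}\,\D x
=\int \pi\log\Bigl(\frac{\tilde\pi}{\pi}\,w_\star\Bigr)\D x .
\]
Rather than applying Jensen directly, which would give $\log\int\tilde\pi w_\star$, I would use the elementary inequality $\log u\Le u-1$ in a form tailored to the weight. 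Concretely, write $\pi=w_\star\nu_\star$. Then
\[
\int\pi\log\frac{\tilde\pi}{\nu_\star}=\int w_\star\nu_\star\log\frac{\tilde\pi}{\nu_\star}\Le\int w_\star(\tilde\pi-\nu_\star)\,\D x=\mathbb E_{\tilde\pi}[w_\star]-1 .
\]
This uses $\nu_\star\log(\tilde\pi/\nu_\star)\Le\tilde\pi-\nu_\star$ pointwise, together with $w_\star\Ge0$. The integrability assumption $\mathbb E_{\tilde\pi}[w_\star]<\infty$ makes this step finite.

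Next I would insert the stationarity relation. Multiply \eqref{eq: surrogate-stationary} by $w_\star$ and integrate, using $\int w_\star\nu_\star=\int\pi=1$:
\[
1=\mathbb E_{\tilde\pi}[w_\star]+2\lambda\,\mathbb E_{x\sim\tilde\pi}\bigl[w_\star(x)S^{\tilde\pi}_\star(x)\bigr]+2\vartheta\,\mathbb E_{x\sim\xi}\bigl[w_\star(x)S^{\xi}_\star(x)\bigr].
\]
The key symmetrization identity is that, for any probability $\omega$ with $\mathbb E_\omega[w_\star]<\infty$,
\[
2\,\mathbb E_{x\sim\omega}\bigl[w_\star(x)S^{\omega}(x)\bigr]=\mathbb E_{x,y\sim\omega}\bigl[(w_\star(x)-w_\star(y))\operatorname{sgn}(w_\star(x)-w_\star(y))\bigr]=\mathbb E_{x,y\sim\omega}|w_\star(x)-w_\star(y)| .
\]
It follows by swapping $x\leftrightarrow y$ and averaging. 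Applying it with $\omega=\tilde\pi$ and $\omega=\xi$ gives
\[
\mathbb E_{\tilde\pi}[w_\star]-1=-\lambda\,\mathbb E_{\tilde\pi\otimes\tilde\pi}|w_\star(x)-w_\star(y)|-\vartheta\,\mathbb E_{\xi\otimes\xi}|w_\star(x)-w_\star(y)| .
\]
Combined with the first step, this yields \eqref{eq: accuracy-bound}. Note that $\xi$ is evaluated at $\nu_\star$ and is treated as a fixed measure here, which is legitimate because the fixed-point equation is stated with that same $\xi$.

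The main obstacle is justifying the integrals. The symmetrization requires $\mathbb E_\xi[w_\star]<\infty$ so that Fubini applies to $w_\star(x)\operatorname{sgn}(\cdot)$; this is assumed. One should also check that $\KL(\pi\|\tilde\pi)$ may be infinite without breaking the argument, in which case the bound is trivial. The difference of the two KLs should then be handled as a single integral, $\int\pi\log(\tilde\pi/\nu_\star)$, whose positive part is controlled by the $u-1$ bound, rather than as a difference of possibly infinite quantities.
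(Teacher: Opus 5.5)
Your proposal is correct and follows the paper's proof essentially step for step. Your pointwise bound $w_\star\nu_\star\log(\tilde\pi/\nu_\star)\le w_\star(\tilde\pi-\nu_\star)$ is the paper's $\varrho\log\varrho\ge\varrho-1$ with $\varrho=\nu_\star/\tilde\pi$, multiplied by $\tilde\pi w_\star$. The integrated stationarity identity and the symmetrization lemma also match the paper's, and the only cosmetic difference is that you evaluate the left side as $\mathbb E_{\tilde\pi}[w_\star]-1$ inside the proof, whereas the paper records that value in the remark after it.
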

Both subtracted discrepancies are nonnegative, so the two log-ratio variations improve the error that a fixed biased surrogate leaves behind, under two different weightings of the same ratio $w_\star$.

The size of that improvement is capped. Appendix~\ref{sec: appendix-accuracy} shows the two subtracted terms sum to $1-\mathbb E_{\tilde\pi}[w_\star]$, and nonnegativity forces $\mathbb E_{\tilde\pi}[w_\star]\in(0,1]$, so the subtraction is at most one whatever the coefficients and the dimension. The bound is therefore informative mainly when the surrogate is already close to the target. 

\begin{remark}
	A capped improvement in the KL loss does not cap the improvement KLXX delivers in practice, for two reasons. First, \eqref{eq: accuracy-bound} is an upper bound on the error, so a small reduction of the bound is not a small reduction of the error. Second, in forward KL training a small decrease of the loss can carry a large increase of the ESS. Table 2 of \citet{midgley2022flow} reports flows whose ESS differs by $52.2\%$ against $92.8\%$ while their forward KL values differ by less than $0.4$. The loss and the sampling quality it is meant to serve are on different scales, and the diagnostics of Section~\ref{subsec: eval} are what the experiments report.
\end{remark}

Theorem~\ref{thm: accuracy} holds a single $\tilde\pi$ fixed, while Algorithm~\ref{alg: training} rebuilds the target surrogate at every gradient step from the current flow. The two meet at a stationary point. The SMC samples enter the loss as data and are never differentiated through, so the gradient that vanishes at a stationary flow $\nu_\star$ is the Fisher--Rao gradient of \eqref{eq: KLXX-biased} with $\tilde\pi$ frozen at the surrogate Algorithm~\ref{alg: annealing} produces when initialized from $\nu_\star$, namely the marginal law $\tilde\pi_\star$ of one output particle. Identity \eqref{eq: surrogate-stationary} therefore holds with $\tilde\pi=\tilde\pi_\star$, and the proof of Theorem~\ref{thm: accuracy} uses nothing about $\tilde\pi$ beyond that identity, so the theorem applies with that surrogate, just as it applies with $\xi$ containing the detached copy of $\nu_\star$. The empirical variation pairs two particles of one SMC batch, which resampling leaves exchangeable rather than independent, so $\tilde\pi_\star\otimes\tilde\pi_\star$ is the independent-pair idealization of that batch.

% ===================== Sections 3-4: training + generator =====================
\section{Flow training with generalized KLXX loss}
\label{sec: training}

In this section we give the training algorithm for the generalized KLXX loss \eqref{eq: KLXX-general}. Recall that the source and target distributions are $\pi_0\propto \exp(-U_0)$ and $\pi\propto \exp(-U)$, and that the flow map $G$ is trained to fit the transport relation $\nu=(G^{-1})_{\#}\pi_0\approx\pi$, so that the log-ratio $z$ of \eqref{eq: log-ratio} carries the whole dependence on $G$.

The training algorithm uses two parameters throughout: the sample size $N$, the size of the sample set that enters and leaves the training and carries the diagnostics; and the batch size $B$, the number of samples per gradient step.
The generalized KLXX loss \eqref{eq: KLXX-general} draws on three sample sets: one from the target distribution $\pi$, one from the QT distribution $\hat\pi$, and one from the detached pushforward distribution $\bar\nu$. The QT set is constructed once before training, and its size is $N$ unless a smaller subset is used. The pushforward samples number $B$ and are drawn at each gradient step, where the target surrogate also approximates $\pi$ with a batch of $B$ samples.

The rest of the section is organized as follows. Section~\ref{subsec: training-algorithm} gives the KLXX training algorithm, which alternates between preparing samples and taking gradient steps; Sections~\ref{subsec: QT} and~\ref{subsec: annealing} then construct the QT and SMC samples it consumes; and Section~\ref{subsec: eval} gives two complementary diagnostics for the training quality.

\subsection{KLXX training algorithm}
\label{subsec: training-algorithm}

We begin from a sample set $\mathcal Y_0$ of size $N$ approximating the source distribution $\pi_0$, initialize the flow $G$ as an identity map, and run QT (Algorithm~\ref{alg: QT}) once to generate a QT sample set $\hat{\mathcal Y}$ of size $N$. Each gradient step then builds the target surrogate and mixture samples of size $B$ as follows:
\begin{enumerate}[(i)]
	\setlength{\itemsep}{0pt}
	\item Draw a random batch $\mathcal X_0\subset \mathcal Y_0$ of size $B$ and apply the pushforward to obtain $\mathcal X_{\nu} = G^{-1}(\mathcal X_0)$;
	\item Apply SMC (Algorithm~\ref{alg: annealing}) to $\mathcal X_\nu$ to obtain the target surrogate samples $\mathcal X_{\pi}$;
	\item Draw $B$ subsamples $\mathcal X_{\hat\pi}\subset \hat{\mathcal Y}$ from the QT samples, and mix $\mathcal X_{\hat\pi}$ and $\mathcal X_{\nu}$ with probabilities $\alpha$ and $\beta$ to obtain the mixture set $\mathcal X_{\xi}$;
	\item Evaluate the empirical loss on the target surrogate samples $\mathcal X_{\pi}$ and the mixture samples $\mathcal X_{\xi}$,
	\begin{equation}
	\widehat{\mathcal L}[G]=\underbrace{\widehat{\mathbb E}_{y\in\mathcal X_\pi}\bigl[z(y)\bigr]}_{\text{forward KL}} +
	\lambda\underbrace{\widehat{\mathbb E}_{y,y'\in\mathcal X_\pi}\bigl[|z(y)-z(y')|\bigr]}_{\text{target variation}}
	+\vartheta\underbrace{\widehat{\mathbb E}_{y,y'\in\mathcal X_{\xi}}\bigl[|z(y)-z(y')|\bigr]}_{\text{mixture variation}},
	\label{eq: KLXX-empirical}
	\end{equation}
	and take one gradient step on $G$, in which only $z(y) $ is differentiated.
\end{enumerate}
By convention $\mathcal X$ always has size $B$ and $\mathcal Y$ has size $N$. The sets $\mathcal Y_0$ and $\hat{\mathcal Y}$ are determined before training, and $\mathcal X_{\pi}$ and $\mathcal X_{\xi}$ are updated at each gradient step since their construction depends on $G$.
For a batch sample set $\mathcal X = \{y_1,\dots,y_B\}$, the empirical averages of $z(y)$ and $|z(y)-z(y')|$ appearing in \eqref{eq: KLXX-empirical} are defined by
\begin{equation}
	\widehat{\mathbb E}_{y\in \mathcal X} \big[z(y)\big] := \frac1B \sum_{i=1}^B z(y_i), \qquad \widehat{\mathbb E}_{y,y'\in \mathcal X} \big[|z(y)-z(y')|\big] := \frac{2}{B(B-1)}\sum_{1\Le i<j\Le B} |z(y_i) - z(y_j)|,
	\label{eq: empirical-variation}
\end{equation}
where the second sum runs over all $\binom{B}{2}$ pairs of distinct batch samples. Each $z(y_i)$ is a scalar, so sorting the $B$ values $z(y_1),\dots,z(y_B)$ computes this sum exactly in $O(B\log B)$ operations, without forming all $\binom{B}{2}$ differences. The estimator is unbiased for the log-ratio variation when the batch points are independent draws from the weighting distribution. The target variation therefore adds only this sort to the forward KL step, and the additional cost of KLXX comes from constructing the QT samples and evaluating the mixture variation. Algorithm~\ref{alg: training} collects steps (i)--(iv).

\begin{algorithm}[htb]
\setstretch{1.2}
\caption{KLXX training algorithm}
\label{alg: training}
\KwInput{source and QT sample sets $\mathcal Y_0,\hat{\mathcal Y}$; source and target potentials $U_0,U$; sizes $N,B$; loss coefficients $(\lambda,\vartheta,\alpha,\beta)$}
\KwOutput{trained flow map $G$}
Initialize $G$ to the identity map\;
\For{each gradient step}{
    \tcp{(i) pushforward}
    Draw a random batch $\mathcal X_0\subset\mathcal Y_0$ of size $B$ and set $\mathcal X_{\nu}\gets G^{-1}(\mathcal X_0)$\;
    \tcp{(ii) target surrogate}
    Apply SMC (Algorithm~\ref{alg: annealing}) to $\mathcal X_\nu$ to obtain $\mathcal X_{\pi}$\;
    \tcp{(iii) mixture}
    Draw $B$ subsamples $\mathcal X_{\hat\pi}\subset\hat{\mathcal Y}$, and mix $\mathcal X_{\hat\pi}$ and $\mathcal X_{\nu}$ with probabilities $\alpha$ and $\beta$ to form $\mathcal X_{\xi}$\;
    \tcp{(iv) gradient step}
    Take one gradient step on $G$ for the empirical loss
    \[
        \widehat{\mathcal L}[G]=\widehat{\mathbb E}_{y\in\mathcal X_\pi}\bigl[z(y)\bigr]
        +\lambda\,\widehat{\mathbb E}_{y,y'\in\mathcal X_\pi}\bigl[|z(y)-z(y')|\bigr]
        +\vartheta\,\widehat{\mathbb E}_{y,y'\in\mathcal X_{\xi}}\bigl[|z(y)-z(y')|\bigr],
    \]
    where the log-ratio $z(y) =  U_0(G(y)) - U(y) - \log|\det J_G(y)|$.
}
\end{algorithm}

\subsection{Constructing \texorpdfstring{$\hat\pi$}{pi-hat} by quench and temper}
\label{subsec: QT}

Here $\hat\pi$ is the QT distribution for the target $\pi$, built by the melt, quench and temper construction of Section~\ref{subsec: mixture-variation}. In the discrete setting we implement the quench step with limited-memory BFGS (L-BFGS) \citep{liu1989limited} and the temper step with the Metropolis-adjusted Langevin algorithm (MALA). Algorithm~\ref{alg: QT} writes the construction as the map
\begin{equation*}
    \hat{\mathcal Y} = \mathrm{QT}(\mathcal Y_0,U)
\end{equation*}
from the source samples $\mathcal Y_0=\{y_1,\dots,y_N\}$ and the target potential $U$ to the QT samples $\hat{\mathcal Y}=\{\hat y_1,\dots,\hat y_N\}$.

When the potential function $U$ is possibly singular, we prefer L-BFGS and MALA over plain gradient descent and unadjusted Langevin dynamics, which are less stable near the singularity. The QT samples can also be built from a subset of $\mathcal Y_0$, at lower cost.

\begin{remark}
	L-BFGS is a quasi-Newton method \citep{liu1989limited} and is not a discretization of the quench flow $\dot y = -\nabla U(y)$; its iterates do not follow that trajectory. The quench asks only for the local minimizer at the end of it, and in as few iterations as possible, so the better performance of L-BFGS over gradient descent is worth the departure from the flow.
\end{remark}

\begin{algorithm}[htb]
\setstretch{1.2}
\caption{QT: wide-coverage construction}
\label{alg: QT}
\KwInput{source sample set $\mathcal Y_0 = \{y_1,\dots,y_N\}$, target potential $U$, melt scale $m_e$}
\KwOutput{wide-coverage set $\hat{\mathcal Y} = \{\hat y_1,\dots,\hat y_N\}$ forming $\hat\pi$, written $\hat{\mathcal Y} = \mathrm{QT}(\mathcal Y_0,U)$}
\tcp{melt}
Scatter the samples $y_i \gets y_i + m_e\zeta_i$ with $\zeta_i \sim \mathcal N(0, I_d)$ i.i.d. and melt scale $m_e > 0$\;
\tcp{quench}
Drive each sample toward a mode center of $\pi$ by L-BFGS on $U$, updating $y_1,\dots,y_N$\;
\tcp{temper}
Spread the samples around each mode, $\hat y_{1:N} \gets \mathrm{MALA}(y_{1:N}, U)$\;
\end{algorithm}

The three steps treat every basin the melted cloud reaches alike, whatever its depth. When the target potential is singular, or when the modes carry very unequal weights, this leaves the QT set with particles at very high energy and with a mode population far from the target one. A reweighting step then follows the temper: compute the weights $\exp(-c_{\mathrm{QT}}U(\hat y_i))$ on the QT samples, resample the set by those weights, and rejuvenate the result by MALA under $U$ again, replacing the original QT set. The coefficient $c_{\mathrm{QT}}\Ge0$ sets how far the reweighting pulls the set toward the target; $c_{\mathrm{QT}}=0$ leaves the tempered set unchanged, and $c_{\mathrm{QT}}=1$ would weight it by $\pi$ itself. Values well below one keep the wide coverage the construction is for while removing the particles the singular core would otherwise contribute.

\subsection{Approximating $\pi$ by sequential Monte Carlo}
\label{subsec: annealing}

Step (ii) of the KLXX training algorithm (Algorithm~\ref{alg: training}) builds the target surrogate samples $\mathcal X_{\pi}$ on the fly, at every gradient step, since samples from $\pi$ are not directly available. We form them by reweighting the pushforward samples $y_1,\dots,y_B$ drawn from $\nu=(G^{-1})_{\#}\pi_0$. The importance weight is
\begin{equation}
	w(y)
	= \exp\Big( U_0(G(y)) - U(y) - \log|\det J_G(y)| \Big)
	= \frac{\pi(y)}{\nu(y)},
	\qquad y \in \mathbb R^d.
	\label{eq: weight}
\end{equation}
In high dimensions, however, $\pi$ and $\nu$ can differ greatly, so the weights can collapse onto a few outliers. We therefore reweight through $M+1$ bridging distributions
\begin{equation}
	\mu_m(y) \propto \nu(y)^{1-\frac{m}{M}} \pi(y)^{\frac{m}{M}} \propto ((G^{-1})_{\#}\pi_0)(y)\, w(y)^{\frac{m}{M}}, \qquad m = 0,1,\dots,M,
	\label{eq: mu-m}
\end{equation}
so that $\mu_0 = \nu$ and $\mu_M = \pi$. Each level reweights by only $w(y)^{1/M}$, reducing the degeneracy of direct importance sampling; at $M=1$ the ladder is a single reweighting followed by rejuvenation.
After each reweighting, we resample and rejuvenate the duplicates before the next level with MALA \citep{roberts1996exponential}, which leaves $\pi$ invariant.

\begin{algorithm}[htb]
	\setstretch{1.2}
	\caption{SMC for a biased target surrogate}
	\label{alg: annealing}
	\KwInput{pushforward samples $y_1,\dots,y_B$ from $\nu = (G^{-1})_{\#}\pi_0$, target distribution $\pi$, ladder length $M$}
	\KwOutput{biased target-surrogate batch $y_1,\dots,y_B$ approximating $\pi$}
	
	\For{$m = 1,\dots,M$}{
		Compute the incremental weights $w(y_i)^{1/M}=\exp(z(y_i)/M)$ on $y_1,\dots,y_B$\;
		\tcp{resample}
		Resample $y_1',\dots,y_B'$ from $y_{1:B}$ with weights $w(y_i)^{1/M}$\;
		\tcp{rejuvenate under the target}
		Diffuse the resampled duplicates by a short MALA chain on $\pi$, $y_{1:B} \gets \mathrm{MALA}(y_{1:B}', U)$\;
	}
\end{algorithm}
\medskip
\noindent
An SMC sampler that leaves every level of the path \eqref{eq: mu-m} invariant would rejuvenate at level $m$ with a Markov kernel invariant for $\mu_m$. For $m<M$, a MALA kernel for $\mu_m$ would require $\nabla_y\log\nu(y)$ and hence positional derivatives of the flow density, including $\nabla_y\log|\det J_G(y)|$; at $m=M$, this term disappears because $\mu_M=\pi$. Algorithm~\ref{alg: annealing} instead applies MALA directly under $\pi$, using only $-\nabla U$ in its proposal. The price is that it preserves $\pi$ rather than the intermediate $\mu_m$, so the output is a biased target surrogate.

The SMC construction in Algorithm~\ref{alg: annealing} differs from the AIS of FAB \citep{midgley2022flow}. FAB places the flow inside the chains, and its MCMC transitions target $\pi^2/\nu$, whose log-density gradient is
\begin{equation*}
	\nabla_y \log \frac{\pi^2(y)}{\nu(y)} = -2\nabla U(y) - \nabla_y \log\nu(y),
\end{equation*}
and therefore contains the positional gradient of the pushforward density at every MCMC step. The SMC of Algorithm~\ref{alg: annealing} instead constructs samples for $\pi$: every MALA chain targets $\pi$ and uses only $-\nabla U$, and the flow enters through the initial pushforward samples and evaluations of $z$. Each gradient step starts from a new source batch. The original FAB trains from a prioritized replay buffer of weighted AIS samples with a staleness correction, which reuses the annealing work of earlier steps. The FAB runs reported in Section~\ref{sec: experiments} keep no buffer: for simplicity they construct the samples for $\pi^2/\nu$ by an SMC sequence from $\nu$ to $\pi^2/\nu$ in which the rejuvenation kernel at each level leaves that level invariant, which needs the gradient of the flow density there. Each gradient step then starts from a fresh batch, and the two methods differ only in the objective and in the measure the chains target.

The exploration costs also differ. Relative to forward KL, KLXX adds the QT construction and the pairwise log-ratio evaluations. The target variation $\X_\pi$ reuses the target batch, and the mixture term evaluates one additional batch. Coverage then comes from the fixed $N$-sample QT set of Section~\ref{subsec: QT}, whereas FAB relies on its AIS budget for exploration and on the stronger tail penalty of its $2$-divergence.

\begin{remark}
	In principle, an SMC sampler can propagate from the source $\pi_0$ to the target $\pi$ on its own, with no flow at all. That choice has two drawbacks. First, such a sampler explores only where its own trajectories go, so nothing guarantees coverage of the target modes; the QT set of Section~\ref{subsec: QT} is what supplies candidate modes in KLXX. Second, with no flow to deform the proposal toward the next distribution, each level reweights and resamples at a much lower ESS, and repeated resampling of a degenerate weight set raises the Monte Carlo error of the returned samples.
	
	The Boltzmann generator of Section~\ref{sec: boltzmann} compares each trained flow with the identity proposal and keeps whichever has the larger sample ESS, so an accepted stage is never worse than the identity in that diagnostic, up to the finite sample set. The identity proposal itself propagates by reweighting, resampling, and MALA alone. Section~\ref{subsec: clock} reports the accepted stages this selection produces, and the propagation factor that follows from them.
\end{remark}

\subsection{Evaluation: ESS, coverage, and importance sampling}
\label{subsec: eval}

We assess the quality of the pushforward samples through two complementary diagnostics. The ESS measures density agreement through the observed importance weights on support reached by the flow. Coverage instead checks whether the pushforward samples reach the regions represented by a fixed QT reference set. A high ESS can coexist with a missing mode, while high coverage does not establish the correct relative density within the reached regions. We therefore report the two diagnostics together.

At evaluation time the trained flow $G$ supplies the pushforward distribution $\nu=(G^{-1})_{\#}\pi_0$ and its tractable density $\nu(y)=\pi_0(G(y))|\det J_G(y)|$. The importance weight $w(y)$ in \eqref{eq: weight} can therefore be evaluated on pushforward samples without fitting another density model. Let $x_1,\dots,x_N$ be samples from the source $\pi_0$ and let $y_i = G^{-1}(x_i)$, so that $\mathcal Y=\{y_1,\dots,y_N\}$ are pushforward samples with weights $w_i=w(y_i)$.

The first diagnostic is the ESS, normalized to $(0,1]$ by dividing the classical ESS by the sample count $N$. For the importance weights $(w_1,\dots,w_N)$, define
\begin{equation}
    \ESS(w) = \frac{\big(\sum_{i=1}^{N} w_i\big)^2}{N \sum_{i=1}^{N} w_i^2} \in (0,1].
    \label{eq: ESS}
\end{equation}
The upper bound follows from Cauchy--Schwarz and is attained when all weights are equal. The factor $N$ distinguishes this normalized ESS from the classical count $(\sum_iw_i)^2/\sum_iw_i^2\in[1,N]$ \citep{martino2017rethinking}. A high ESS indicates uniform observed weights only on support reached by $\nu$; a target mode absent from the sample cannot lower it.

To detect such missing modes, we pair the ESS with the coverage diagnostic of \citet{naeem2020reliable}. Let $\hat{\mathcal Y} = \{\hat y_1,\dots,\hat y_P\}$ be a QT sample set produced by Algorithm~\ref{alg: QT}, drawn independently of the pushforward samples $\mathcal Y$ and typically much smaller. For each $1 \Le i \Le P$, the $5$-nearest-neighbor radius of $\hat y_i$ is
\begin{equation}
    \mathrm{NND}(\hat y_i;{\hat{\mathcal Y}}) = \inf\Big\{ r \Ge 0 :
    \# \big\{1\Le j \Le P: 0 < |\hat y_i - \hat y_j| < r\big\} \Ge 5
    \Big\}.
\end{equation}
The coverage of $\mathcal Y$ in $\hat{\mathcal Y}$ is then
\begin{equation}
    \mathrm{Coverage}({\mathcal Y;\hat{\mathcal Y}}) =
    \frac{1}{P} \sum_{i=1}^{P} \mathbf 1\Bigl\{
        \exists  1\Le j\Le N : |y_j - \hat y_i| < \mathrm{NND}(\hat y_i;\hat{\mathcal Y})
    \Bigr\},
\end{equation}
i.e., the fraction of points in $\hat{\mathcal Y}$ whose $5$-nearest-neighbor ball contains a point of $\mathcal Y$. A coverage close to $1$ indicates that the finite QT reference set is well covered by the flow, while a coverage well below $1$ shows missing regions that the ESS alone cannot detect. Because the reference set is itself produced by QT, the diagnostic checks only modes found by that reference construction.

The same weights select the particles that the resampling step of Section~\ref{subsec: annealing} keeps. Normalize them as $p_i=w_i/\sum_{j=1}^{N}w_j$. Draw $N$ samples by multinomial resampling from the importance-weighted pushforward samples $(y_i,w_i)_{i=1}^{N}$ with probabilities $p_i$, and apply a short MALA chain under $\pi$ to rejuvenate the duplicates that resampling produces. The resampled samples form an unweighted approximation of the target-weighted empirical measure, while MALA retains $\pi$ as its invariant distribution.

\begin{remark}
	MALA or another local Monte Carlo method is needed to keep the samples distinct after resampling: even uniform weights can produce duplicates, and a low ESS produces more. The QT subsample of step~(iii) of the KLXX training algorithm (Algorithm~\ref{alg: training}) is rejuvenated the same way: the points drawn from $\hat{\mathcal Y}$ are advanced by a short MALA chain under the target before they enter the mixture. The source batch of step~(i) is not.
\end{remark}

Under standard regularity conditions, the target-weighted empirical measure and its resampled particle approximation converge to $\pi$ as $N\to\infty$ \citep{liu2001monte,doucet2001introduction}. The $\pi$-invariant MALA rejuvenation preserves this target limit. We call samples with this property \emph{asymptotically unbiased}: their bias against the target vanishes as $N\to\infty$.

\begin{remark}
Asymptotic unbiasedness is stated here informally. Theorem~\ref{thm: propagation} proves it rigorously for the inference scheme of Algorithm~\ref{alg: inference}, which replays the trained stage maps on fresh particles, with MALA as the rejuvenation kernel.
\end{remark}

\section{Adaptive-staging Boltzmann generator}
\label{sec: boltzmann}

For high-dimensional distributions, a single flow from the source $\pi_0$ to the target $\pi$ can be inadequate when the source-to-target deformation is too large, so we divide the training into stages that interpolate between the two potentials. Each stage should change the potential by a controlled amount and train the flow map with the generalized KLXX loss \eqref{eq: KLXX-general} at a fixed choice of $(\lambda,\vartheta,\alpha,\beta)$; the KLXX Boltzmann generator is the instance at the default parameters $(\lambda,\vartheta,\alpha,\beta)=(1,1,\frac12,\frac12)$. The stage schedule is chosen from sampling diagnostics computed on the sample set.

\subsection{Why staging is adaptive}

Two obstacles limit what a single flow can do in carrying $\pi_0$ to $\pi$ in one map.

\begin{itemize}
    \item \emph{Curse of dimensionality.} In low dimension a unimodal source can often be reshaped into a multimodal target by one map, but as $d$ grows the target landscape develops more basins, separated by higher barriers and occupying a smaller fraction of the source support. A normalizing flow of fixed depth and width can represent only a bounded change of shape, so the effect of a training run is limited by that capacity. A single map from source to target can therefore leave part of the target unfitted when the problem is high-dimensional and multimodal. Splitting the interpolation into stages keeps each stage within the capacity of one flow, and the number of stages needed grows as the source and the target move apart.

    \item \emph{Structural limitation of a diffeomorphism.} A trained flow is a diffeomorphism of the domain, so the pushforward $\nu$ inherits the connected support of $\pi_0$: the map deforms that support continuously and cannot split it into disconnected components. Mass therefore remains on the paths joining the target basins even when the basins themselves are well fitted, an effect visible between the isolated modes of the Sparse target in Figure~\ref{fig: sparse}. Reweighting and MALA rejuvenation at the interpolating distributions remove the leaked samples at every stage, so the support changes shape gradually across the schedule.
\end{itemize}

The stage schedule cannot be fixed in advance, because we assume no external information about the target. Training uses no target dataset, no reference trajectory, and no prior knowledge of the number, location, or depth of the potential wells. Under this restriction the difficulty of a candidate stage cannot be read off the potential before attempting it. The algorithm therefore proposes a stage increment, trains the stage flow, and accepts the increment only when the resulting proposal clears the ESS gate on the sample set; otherwise it shrinks the increment and retries. Easy interpolations can then take large stages and hard interpolations take smaller ones, using only oracle access to $U(x)$ and $\nabla U(x)$.

\subsection{Construction of the adaptive stage schedule}

To interpolate from the source $\pi_0 \propto \exp(-U_0)$ to the target $\pi \propto \exp(-U)$, we seek an increasing sequence
\begin{equation*}
    0 = t_0 < t_1 < \cdots < t_K = 1
\end{equation*}
defining the interpolating potential and measure
\begin{equation*}
    U_t(x)=(1-t)U_0(x)+tU(x),
\qquad \pi_t(x)\propto\exp(-U_t(x)),
    \qquad t\in[0,1].
\end{equation*}
Here $t$ is a dimensionless interpolation parameter. The discrete values $t_k$ are stage points, and stage $k$ is the transition from $t_{k-1}$ to $t_k$. At the accepted stage points, write $\pi_k:=\pi_{t_k}$. Stage $k$ trains a target-to-source map $G_k$, whose inverse induces the pushforward distribution $\nu_k=(G_k^{-1})_{\#}\pi_{k-1}$ for $\pi_k$. The number of stages $K$ is selected during the run. The central scheduling choice is the stage increment $\Delta t_k=t_k-t_{k-1}$: a large increment can be hard to fit, whereas a small increment adds stages and thus computational cost.

Training $G_k$ requires the target surrogate for $\pi_k$ defined in Section~\ref{subsec: annealing}. The stage point $t_k$ is proposed by extrapolating the last accepted increment and is accepted or shrunk by the stage ESS gate of Algorithm~\ref{alg: boltzmann}, so the schedule is built from the diagnostic the run already computes. Throughout this section, the sample set is carried across stages and used for training batches, QT, and stage selection; QT may also act on a separately sized subset.

The SMC target surrogate (Algorithm~\ref{alg: annealing}) involves only the potentials $U_0$ and $U$ and uses no QT samples, so it can miss a mode its own chains never reach; the stage ESS of step~(iv) below is likewise a difficulty diagnostic and not a mode-coverage guarantee, and QT supplies the separate wide-coverage set. As $t_k$ approaches $t_{k-1}$, the stage becomes trivial and its ESS approaches $1$, so a gate below $1$ is attainable under repeated shrinking in exact arithmetic.

\subsection{The full training procedure}

Algorithm~\ref{alg: boltzmann} combines the stage-selection rule with the training procedure. It carries across stages a sample set $\mathcal Y_k$ of size $N$, whose empirical measure approximates the per-stage target $\pi_k$. The $k$-th candidate stage collects the sample set $\mathcal Y_{k-1}$ from the last stage, builds its QT set $\hat{\mathcal Y}_k$ of size $N$, trains a flow $G_k$ with the selected loss weights at batch size $B$, and compares that flow with the identity proposal. If neither proposal reaches the ESS gate $\tau_{\mathrm v}\in[0,1)$, the algorithm shrinks the stage increment and retries. The loop ends when $t_K=1$.

\begin{enumerate}[(i)]
\item \emph{Adaptive stage selection.} The next stage point $t_k$ is proposed first, then tested. The first stage starts from $t_k=t_{\mathrm{safe}}$, with default $0.2$. Later stages extrapolate the last accepted stage increment, $t_k=\min\bigl(1,t_{k-1}+\Gamma(t_{k-1}-t_{k-2})\bigr)$, with enlarge factor $\Gamma=1.5$. The proposal is tested by training and by the stage ESS gate of step~(iv), which shrinks it when the stage is not accepted. The selected stage target is $\pi_k\propto\exp(-U_{t_k})$, with $U_{t_k}=(1-t_k)U_0+t_kU$.

\item \emph{QT set construction}. QT (Algorithm~\ref{alg: QT}) acts on the sample set $\mathcal Y_{k-1}$ under $U_{t_k}$; the algorithm is stated for the source set, and applies unchanged to any set carried into a stage. The resulting $N$ samples $\hat{\mathcal Y}_k=\mathrm{QT}(\mathcal Y_{k-1},U_{t_k})$ are held fixed for the attempt, and rebuilt when a shrunk increment changes $U_{t_k}$.

\item \emph{Flow training}. Each attempt initializes $G_k$ to the identity map and trains it by the KLXX training algorithm (Algorithm~\ref{alg: training}), applied to the stage $\pi_{k-1}\to\pi_k$: the batches of size $B$ are drawn from $\mathcal Y_{k-1}$, the target surrogate samples come from SMC (Algorithm~\ref{alg: annealing}) with $G=G_k$, the QT sample set is $\hat{\mathcal Y}_k$, and the stage log-ratio is
\begin{equation*}
    z_k(y) = U_{t_{k-1}}(G_k(y)) - U_{t_k}(y) - \log|\det J_{G_k}(y)|.
\end{equation*}
The coefficients $(\lambda,\vartheta,\alpha,\beta)$ are fixed for the run. Taking $(\lambda,\vartheta)=(0,0)$ gives forward KL, $(\lambda,\vartheta)=(1,0)$ gives $\KL$+$\X_{\pi}$, $(1,1,1,0)$ gives $\KL$+$\X_\pi$+$\X_{\hat\pi}$, and $(1,1,\frac12,\frac12)$ gives the default KLXX.

\item \emph{Sample set update.} Once the candidate $G_k$ is trained, each $y \in \mathcal Y_{k-1}$ is pushed through its inverse to $\tilde y = G_k^{-1}(y)$. This gives the trained set of pushforward samples $\tilde{\mathcal Y}_k^{\mathrm{tr}} = G_k^{-1}(\mathcal Y_{k-1})$ from $\nu_k = (G_k^{-1})_{\#}\pi_{k-1}$, together with the importance weights
\begin{equation*}
    w_k^{\mathrm{tr}}(\tilde y) := \exp\bigl(z_k(\tilde y)\bigr),
    \qquad \tilde y \in \tilde{\mathcal Y}_k^{\mathrm{tr}} .
\end{equation*}
The identity proposal keeps $\mathcal Y_{k-1}$ unchanged and has weights $w_k^{\mathrm{id}}(y)=\exp\big(U_{t_{k-1}}(y)-U_{t_k}(y)\big)$. We select the pair with the larger ESS over the sample set and denote it by $(\tilde{\mathcal Y}_k,w_k)$, taking the stage map to be $G_k$ or the identity accordingly. If $\ESS(w_k) < \tau_{\mathrm v}$, with default $\tau_{\mathrm v} = 0.4$, we discard the candidate, shrink to $t_k \gets t_{k-1} + \gamma\,(t_k - t_{k-1})$ with shrink factor $\gamma=0.7$, and repeat steps (ii)--(iv) at the reduced stage increment. Otherwise we draw $N$ samples from $\tilde{\mathcal Y}_k$ with weights $w_k$ and rejuvenate them by MALA under $U_{t_k}$, giving the next sample set $\mathcal Y_k\approx\pi_k$.
\end{enumerate}

\begin{algorithm}
    \setstretch{1.1}
\caption{Adaptive-staging Boltzmann generator training}
\label{alg: boltzmann}
\small
\KwInput{potentials $U_0,U$; sizes $N$ and $B\Ge2$, the batch needing two points for a pairwise difference; loss coefficients $(\lambda,\vartheta,\alpha,\beta)$; ladder length $M$; ESS gate $\tau_{\mathrm v}=0.4$; factors $\gamma=0.7$, $\Gamma=1.5$; safe start $t_{\mathrm{safe}}=0.2$}
\KwOutput{completed stage schedule $0=t_0<\cdots<t_K=1$ and selected stage maps $\{G_k\}_{k=1}^{K}$}

\tcp{sample set, $\mathcal Y_k \approx \pi_k$}
$\mathcal Y_0 \gets N$ samples from $\pi_0$, \quad $t_0 \gets 0$, \quad $k \gets 0$\;
\While{$t_k < 1$}{
    $k \gets k + 1$\;
    Set $U_{t_{k-1}} \gets (1-t_{k-1}) U_0 + t_{k-1} U$\;
    \tcp{(i) adaptive stage selection}
    \eIf{$k=1$}{
        Set $t_k \gets t_{\mathrm{safe}}$\;
    }
    {
        Set $t_k \gets \min\big(1, t_{k-1} + \Gamma(t_{k-1} - t_{k-2})\big)$\;
    }
    Set $U_{t_k} \gets (1-t_k) U_0 + t_k U$\;
    \Repeat{$\ESS(w_k) \Ge \tau_{\mathrm v}$}{
        Initialize $G_k$ to the identity map\;
        \tcp{(ii) QT set construction}
        Set the QT sample set $\hat{\mathcal Y}_k \gets \mathrm{QT}(\mathcal Y_{k-1},U_{t_k})$ by Algorithm~\ref{alg: QT}\;
        \tcp{(iii) flow training}
        Train $G_k$ by the KLXX training algorithm (Algorithm~\ref{alg: training}) on the sample set
        $\mathcal Y_{k-1}$ and the QT sample set $\hat{\mathcal Y}_k$, from $U_{t_{k-1}}$ to $U_{t_k}$,
        at the coefficients $(\lambda,\vartheta,\alpha,\beta)$\;
        \tcp{(iv) sample set update}
        Form the trained and identity pushforward samples with their weights\;
        Keep the larger-ESS pair as $(\tilde{\mathcal Y}_k,w_k)$ and set $G_k$ to the corresponding map\;
        \eIf{$\ESS(w_k) < \tau_{\mathrm v}$}{
            \tcp{reject the stage and shrink toward $t_{k-1}$}
            $t_k \gets t_{k-1} + \gamma\,(t_k - t_{k-1})$, then update $U_{t_k} \gets (1-t_k) U_0 + t_k U$\;
        }{
            Resample $N$ samples from $\tilde{\mathcal Y}_k$ with weights $w_k$\;
            Rejuvenate the resampled set by MALA under $U_{t_k}$ and set $\mathcal Y_k$ to the result\;
        }
    }
}
$K \gets k$\;
\Return $\{t_k\}_{k=0}^{K}$ and $\{G_k\}_{k=1}^{K}$\;
\end{algorithm}
\medskip
\noindent
Two losses can be compared on one schedule. Algorithm~\ref{alg: boltzmann} is run once with the first loss, which accepts a schedule $t_0<\cdots<t_K$. The second loss is then trained on those same accepted values of $t$, one flow per stage, with steps (ii) to (iv) unchanged and no shrinking, since the stage points are already fixed. Every stage of the two runs then faces the same pair of distributions, so their stage ESS values are comparable. Section~\ref{subsec: clock} reports such a comparison.

The accepted stage schedule defines a Boltzmann generator for $\pi$. The process starts with samples from $\pi_0$. At stage $k$, it applies the selected inverse map or identity, computes $w_k$, resamples, and rejuvenates under $U_{t_k}$. The resulting sample set supplies stage $k+1$.

\subsection{Propagation factor analysis}
\label{sec: propagation-analysis}

Algorithm~\ref{alg: boltzmann} traverses the interpolation distributions $\{\pi_k\}_{k=0}^K$, training a flow map $G_k$ for each stage from $\pi_{k-1}$ to $\pi_k$. We next ask how the Monte Carlo sampling error accumulates across these stages. Let $w_k = \pi_k/\nu_k$ be the importance weight at stage $k$, written as the ratio of the target density $\pi_k$ and the pushforward density $\nu_k$. Assume $w_k$ is well defined so that importance sampling is available at every stage. Algorithm~\ref{alg: inference} states the inference scheme. Here $Q_k$ is any Markov kernel leaving $\pi_k$ invariant; the MALA of step (iv) in Algorithm~\ref{alg: boltzmann} is one such kernel, and nothing below uses more than that invariance.

\begin{algorithm}[htb]
\setstretch{1.2}

\caption{Inference scheme of the Boltzmann generator}
\label{alg: inference}
\KwInput{source distribution $\pi_0$, stage maps $G_1,\dots,G_K$, stage weights $w_1,\dots,w_K$, rejuvenation kernels $Q_1,\dots,Q_K$, sample size $N$}
\KwOutput{sample set $X_K^1,\dots,X_K^N$ for the target distribution $\pi_K$}
Draw $X_0^1,\dots,X_0^N$ independently from the source distribution $\pi_0$\;
\For{$k=1,\dots,K$}{
	\tcp{flow pushforward}
	Apply the inverse stage map to each particle, $Y_k^i = G_k^{-1}(X_{k-1}^i)$\;
	\tcp{importance sampling}
	Evaluate the stage weight at each pushforward particle, $W_k^i = w_k(Y_k^i)$\;
	Draw independent $Z_k^1,\dots,Z_k^N$ by multinomial resampling from $\sum_{i=1}^{N}\bigl(W_k^i/\sum_{l=1}^{N} W_k^l\bigr)\,\delta_{Y_k^i}$\;
	\tcp{rejuvenation}
	Rejuvenate each resampled particle, $X_k^i \sim Q_k(Z_k^i,\cdot)$, independently across $i$\;
}
\end{algorithm}

After stage $K$ the scheme returns the empirical measure
\begin{equation}
	\pi_K^N = \frac1N\sum_{i=1}^{N} \delta_{X_K^i}
\end{equation}
as the estimate of $\pi_K$. For a general distribution $\mu$ and a function $\varphi$, we write $\mu\langle \varphi\rangle := \int_{\mathbb R^d} \varphi \, \D\mu$ for the integral. For a bounded observable $\varphi$ we write $\operatorname{osc}(\varphi) = \operatorname{ess\,sup}\varphi - \operatorname{ess\,inf}\varphi$, and for a function $h$ write $\lVert h \rVert_\infty$ for the essential supremum of $|h|$ on $\mathbb R^d$, taken for $\varphi$ with respect to $\pi_K$ and for $w_k$ with respect to $\nu_k$.

\begin{theorem}
\label{thm: propagation}
Suppose that, for every stage $k=1,\dots,K$, $w_k>0$, the $\nu_k$-essential supremum $\lVert w_k \rVert_\infty$ is finite, and $Q_k$ is a Markov kernel leaving $\pi_k$ invariant. Then for every bounded observable $\varphi$ and every $N \Ge 1$,
\begin{equation}
    \mathbb E\Bigl[\bigl(\pi_K^N\langle \varphi\rangle - \pi_K\langle \varphi\rangle\bigr)^{2}\Bigr]
    \Le \frac{C}{N},
    \qquad
    C = \frac{\operatorname{osc}(\varphi)^{2}}{4}
    \left(\sum_{k=0}^{K}\ \prod_{j=k+1}^{K} 3\lVert w_j \rVert_\infty\right)^{\!2},
    \label{eq: propagation-bound}
\end{equation}
with the empty product equal to one.
\end{theorem}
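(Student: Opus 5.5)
The proof will proceed by induction over the stages, in the Feynman--Kac style of \citet{delmoral2004feynman}. It does not use a global telescoping sum; instead it tracks a single quantity stage by stage. For $k=0,\dots,K$ define
\begin{equation*}
	e_k := \sup\Bigl\{ \mathbb E\bigl[(\pi_k^N\langle g\rangle-\pi_k\langle g\rangle)^2\bigr]^{1/2} : g \text{ bounded},\ \operatorname{osc}(g)\Le 1\Bigr\},
	\qquad \pi_k^N=\frac1N\sum_{i=1}^N\delta_{X_k^i}.
\end{equation*}
The target is the one-step recursion
\begin{equation*}
	e_k\Le \frac{1}{2\sqrt N}+3\lVert w_k\rVert_\infty\,e_{k-1}.
\end{equation*}
At $k=0$ the particles are i.i.d.\ from $\pi_0$, so the variance of a variable with range at most $\operatorname{osc}(g)$ gives $e_0\Le 1/(2\sqrt N)$. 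Unrolling the recursion gives $e_K\Le \frac{1}{2\sqrt N}\sum_{k=0}^K\prod_{j=k+1}^K 3\lVert w_j\rVert_\infty$. Scaling by $\operatorname{osc}(\varphi)$ and squaring then yields \eqref{eq: propagation-bound}.

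For the one-step bound, fix a stage $k$ and let $\mathcal F_{k}$ be the $\sigma$-field generated by the stage-$(k-1)$ particles. Conditionally on $\mathcal F_k$, the $X_k^i$ are i.i.d.\ with law
\begin{equation*}
	\eta_k^N=\Bigl(\frac{w_k\,m_k^N}{m_k^N\langle w_k\rangle}\Bigr)Q_k,
	\qquad m_k^N=(G_k^{-1})_{\#}\pi_{k-1}^N.
\end{equation*}
Split the stage error as
\begin{equation*}
	\pi_k^N\langle\varphi\rangle-\pi_k\langle\varphi\rangle=\bigl(\pi_k^N\langle\varphi\rangle-\eta_k^N\langle\varphi\rangle\bigr)+\bigl(\eta_k^N\langle\varphi\rangle-\pi_k\langle\varphi\rangle\bigr).
\end{equation*}
The first term has conditional mean zero, so it is orthogonal in $L^2$ to the second, $\mathcal F_k$-measurable term. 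Its conditional second moment is at most $\operatorname{osc}(\varphi)^2/(4N)$. For the exact comparison, note that $(G_k^{-1})_{\#}\pi_{k-1}=\nu_k$, that reweighting $\nu_k$ by $w_k=\pi_k/\nu_k$ gives $\pi_k$, and that $\pi_kQ_k=\pi_k$. Put $\psi=Q_k\varphi-c$, with $c$ the midpoint of the range, so that $\lVert\psi\rVert_\infty\Le\operatorname{osc}(\varphi)/2$. Then
\begin{equation*}
	\eta_k^N\langle\varphi\rangle-\pi_k\langle\varphi\rangle=\frac{m_k^N\langle w_k\psi\rangle}{m_k^N\langle w_k\rangle}-\nu_k\langle w_k\psi\rangle.
\end{equation*}
The key device, which avoids needing any lower bound on the random denominator, is the ratio identity $a_N/b_N-a/b=(a_N-a)/b-(a_N/b_N)(b_N-b)/b$. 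Here the exact denominator is $b=\nu_k\langle w_k\rangle=1$, and $|a_N/b_N|\Le\lVert\psi\rVert_\infty$ because it is a weighted average of $\psi$. This gives
\begin{equation*}
	|\eta_k^N\langle\varphi\rangle-\pi_k\langle\varphi\rangle|\Le|(m_k^N-\nu_k)\langle w_k\psi\rangle|+\tfrac12\operatorname{osc}(\varphi)\,|(m_k^N-\nu_k)\langle w_k\rangle|.
\end{equation*}
Both differences equal $(\pi_{k-1}^N-\pi_{k-1})\langle g\circ G_k^{-1}\rangle$ for the bounded functions $g=w_k\psi$ and $g=w_k$. Their oscillations are at most $\lVert w_k\rVert_\infty\operatorname{osc}(\varphi)$ and $\lVert w_k\rVert_\infty$ respectively. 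Minkowski's inequality and the definition of $e_{k-1}$ then bound the $L^2$ norm of the second piece by $\tfrac32\lVert w_k\rVert_\infty\operatorname{osc}(\varphi)\,e_{k-1}$, which is within the stated factor $3$. The slack can absorb cruder oscillation estimates, for instance when centering is unavailable for $w_k$. By orthogonality and $\sqrt{a^2+b^2}\Le a+b$, this gives the recursion.

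The main obstacle is the meaning of $\lVert w_k\rVert_\infty$. It is an essential supremum with respect to $\nu_k$, but it is applied along particles whose laws, after MALA rejuvenation and resampling, need not be absolutely continuous with respect to $\pi_{k-1}$, and hence their pushforwards need not be absolutely continuous with respect to $\nu_k$. I would first fix a version of $w_k$ bounded everywhere by its $\nu_k$-essential supremum: replace $w_k$ by $\min(w_k,\lVert w_k\rVert_\infty)$, which changes it only on a $\nu_k$-null set and therefore leaves $\nu_k\langle w_k\cdot\rangle$ and $\pi_k$ unchanged. I would then check that the inference scheme is run with this version, or that the particle laws are absolutely continuous with respect to the relevant targets, which holds for MALA from absolutely continuous initial laws. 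The analogous point for $\operatorname{osc}(\varphi)$ being taken $\pi_K$-essentially is handled the same way, by choosing a bounded version of $\varphi$.
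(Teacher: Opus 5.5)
Your proposal is correct and is essentially the paper's proof. It uses the same worst-case error over unit-oscillation observables and the same recurrence. It splits the error the same way, into a conditionally centered resampling-plus-rejuvenation term and an inherited term. It also relies on the same ratio identity, which needs no lower bound on $\nu_k^N\langle w_k\rangle$; the paper writes it as $D_k=\nu_k^N\langle g_k\rangle-D_kq_k$ after centering at $\pi_k\langle h\rangle$.

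Centering $Q_k\varphi$ at the midpoint of its range is a small genuine sharpening. It gives $\tfrac32\lVert w_k\rVert_\infty$, whereas the paper's estimates $|D_k|\Le\operatorname{osc}(h)$ and $\operatorname{osc}(g_k)\Le 2\lVert w_k\rVert_\infty\operatorname{osc}(h)$ give $3\lVert w_k\rVert_\infty$. Your orthogonality step is fine, but Minkowski already suffices.

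The one loose end is your ``main obstacle''.

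\begin{itemize}
\item Drop the truncation of $w_k$. Replacing $w_k$ by another version changes the weights the scheme actually evaluates. That proves a bound for a different scheme, unless you already know the particles avoid the modified null set.
\item Your worry that the particle laws ``need not be absolutely continuous'' is unfounded. Domination holds for every $\pi_k$-invariant kernel, not only MALA, and the paper proves it by induction using invariance alone.
\end{itemize}

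The induction runs as follows.
\begin{enumerate}
\item The base case holds because $X_0^i\sim\pi_0$.
\item If the law of $X_{k-1}^i$ is absolutely continuous with respect to $\pi_{k-1}$, then the law of $Y_k^i=G_k^{-1}(X_{k-1}^i)$ is absolutely continuous with respect to $\nu_k$. This measure is equivalent to $\pi_k$ because $w_k>0$.
\item Resampling only selects among the $Y_k^j$, so the law of each $Z_k^i$ is again dominated by $\nu_k$.
\item If $\pi_k(A)=0$, then $0=\pi_kQ_k(A)=\int Q_k(x,A)\,\pi_k(\D x)$ forces $Q_k(x,A)=0$ for $\pi_k$-almost every $x$. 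Hence $X_k^i\in A$ with probability zero. The rejection atom of MALA plays no role.
\end{enumerate}

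Consequently, almost surely $W_k^i\in(0,\lVert w_k\rVert_\infty]$, and every bounded function evaluated at a stage-$k$ particle ($X_k^i$, $Y_k^i$ or $Z_k^i$) lies in its essential range. This is exactly what three steps of your proof require: the Popoviciu bound, the weighted-average bound $|a_N/b_N|\Le\lVert\psi\rVert_\infty$, and the transfer of essential oscillations through $G_k^{-1}$.
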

\noindent
Appendix~\ref{sec: appendix-propagation} gives the proof of Theorem~\ref{thm: propagation}. The constant is explicit, independent of $N$, and depends on the stage maps only through the weight suprema. Rejuvenation does not enlarge it: given the pushforward particles, the resampling and the kernel draw amount to a single draw per particle, and a Markov kernel cannot widen the range of a bounded function.

\begin{remark}
The theorem describes Algorithm~\ref{alg: inference}, the inference scheme with the trained maps held fixed. The direct output of Algorithm~\ref{alg: boltzmann} is not covered: its samples can be correlated with the schedule, since the same particles select the stages and the flow-versus-identity proposal before being propagated. The fresh-rebuild tests of Section~\ref{subsec: clock} replay the frozen stage flows on independent particles.
\end{remark}

Theorem~\ref{thm: propagation} also settles how the scheme behaves in $N$. Writing $\lVert\cdot\rVert_2$ for the $L^2$ norm over all the randomness of the scheme, the bound gives $\lVert \pi_K^N\langle \varphi\rangle - \pi_K\langle \varphi\rangle \rVert_2 \Le \sqrt{C/N}$ for every bounded observable, which also bounds the bias and vanishes as $N$ grows: the inference is \emph{asymptotically unbiased}, and its error is removed by increasing the sample size alone. Classical Monte Carlo sampling of $\pi$ instead requires the simulation time to grow and, for unadjusted integrators, the time step to vanish; a Metropolis-adjusted kernel such as MALA removes the second requirement but not the first \citep{metropolis1953equation,hastings1970monte,roberts1996exponential,delmoral2004feynman}. The rejuvenation also keeps the particles distinct: resampling only copies and the stage maps are bijections, so without it the number of distinct configurations could only fall from stage to stage, and the independent noise scatters the copies apart again.

The constant $C$ in Theorem~\ref{thm: propagation} is a square up to the observable, and its square root is defined as the \emph{propagation factor} of the training schedule,
\begin{equation}
    F_\Sigma
    := \sum_{k=0}^{K}\ \prod_{j=k+1}^{K} 3\lVert w_j \rVert_\infty ,
    \label{eq: propagation-factor-sigma}
\end{equation}
so that the inequality \eqref{eq: propagation-bound} equivalently reads
\begin{equation*}
    \bigl\lVert \pi_K^N\langle \varphi\rangle - \pi_K\langle \varphi\rangle \bigr\rVert_2
    \Le \frac{\operatorname{osc}(\varphi)}{2}\, \frac{F_\Sigma}{\sqrt N}.
\end{equation*}

$F_\Sigma$ is built from the essential supremum of each stage weight, which makes it impractical. One rare configuration of large weight fixes the supremum, so $\lVert w_j \rVert_\infty$ can be enormous at a stage that otherwise behaves well; for a singular potential it is difficult to estimate accurately, and need not be finite at all. We therefore estimate the training difficulty by a second propagation factor, assembled from the ESS of each accepted stage, a quantity the run computes anyway. For the stage weight, the ESS of \eqref{eq: ESS} reads
\begin{equation}
    \ESS(w_k) := \frac{1}{\int_{\mathbb R^d} w_k(x)^2 \, \nu_k(x) \, \D x} \in (0,1] .
    \label{eq: propagation-ESS}
\end{equation}
The factor $F$ keeps the leading product of \eqref{eq: propagation-factor-sigma}, the $k=0$ summand, drops the factors $3$, which arise from the recurrence in the proof and carry no information about the problem, and replaces each supremum by the reciprocal square root of the stage ESS,
\begin{equation}
    F
    := \prod_{j=1}^{K} \frac{1}{\sqrt{\ESS(w_j)}} .
    \label{eq: propagation-factor-F}
\end{equation}
Each substitution only lowers the factor, since $\int_{\mathbb R^d} w_j^2 \, \D\nu_j \Le \lVert w_j \rVert_\infty \Le \lVert w_j \rVert_\infty^{2}$ by $\lVert w_j \rVert_\infty \Ge \int_{\mathbb R^d} w_j \, \D\nu_j = 1$. Hence $F \Le F_\Sigma$, and $F$ does not bound the constant of \eqref{eq: propagation-bound}. Every factor in \eqref{eq: propagation-factor-F} is at least one, so $F \Ge 1$, with equality exactly when every stage proposal equals its target.

The calculation of $F$ requires the $\nu_j$-integral $\int w_j^2 \, \D\nu_j$. A Boltzmann generator run already evaluates the stage weight vectors $W_j = (W_j^1,\dots,W_j^N)$ with $W_j^i = w_j(Y_j^i)$, and the sample ESS of each vector is read off them at no further cost, giving the propagation factor that the experiments report,
\begin{equation}
    \hat F := \prod_{j=1}^{K} \frac{1}{\sqrt{\ESS(W_j)}} ,
    \label{eq: propagation-factor-hat}
\end{equation}
the sample counterpart of $F$, invariant to the scale of the weights and therefore available when the target is known only up to a constant. Being a sample quantity, it is not a bound on $F$ at finite $N$.
$F$ summarizes the weight degeneracy of the composed flow along one accepted schedule, and does not measure the difficulty of the target. The sample size required by importance sampling grows exponentially in the KL divergence between target and proposal \citep{chatterjee2018sample}, and with the intrinsic dimension of the problem \citep{agapiou2017importance}; neither dependence appears in $F$.

The argument behind Theorem~\ref{thm: propagation} already appears in the particle-system literature. A sum over the stage at which the error enters, times a product over the stages it must still traverse, is the standard form for Feynman--Kac particle systems \citep{delmoral2004feynman,chopin2020introduction}, and is the form of the variance recurrence AFT \citep{arbel2021annealed} and CRAFT \citep{matthews2022craft} obtain as a central limit theorem. Our scheme is such a system, with the rejuvenation kernel of one stage composed with the trained map of the next as the mutation and the stage weight $w_k$ as the potential, so \eqref{eq: propagation-bound} is that recurrence unrolled, bounded rather than exact, and explicit at finite $N$.

In the variance recurrence of those analyses, the term injected at a stage is a second moment of the stage weight, the quantity the stage ESS of \eqref{eq: propagation-ESS} measures. This is why $F$ is assembled from stage ESS values. Being a product of factors at least one, $F$ grows with every extra stage, so two losses are comparable only on a shared schedule, as in Section~\ref{subsec: clock}. The experiments print its sample value $\hat F$, since $F$ itself needs the exact integrals $\int w_j^2\D\nu_j$.

% ===================== Section 5: model problems =====================
\section{Numerical experiments}
\label{sec: experiments}

We test forward KL, and on the product multi-well also the LDR-L1 loss of \citet{schopmans2026ldr}, against three choices of the generalized KLXX loss \eqref{eq: KLXX-general}: $\KL$+$\X_\pi$, $\KL$+$\X_\pi$+$\X_{\hat\pi}$, and $\KL$+$\X_\pi$+$\X_{(\hat\pi+\bar\nu)/2}$, corresponding to different coefficients $(\lambda,\vartheta,\alpha,\beta)$. The last is KLXX, whose mixture variation weights the QT distribution $\hat\pi$ and the pushforward $\bar\nu$ equally. The tests include the 2D benchmark distributions, the high-dimensional product multi-well, the coefficient sweep of the target variation at $d=100$, the lattice $\phi^4$ field theory, the $p$-state clock model, and three achiral molecules. They vary in dimension and in how the modes are distributed. Each subsection below gives the sizes and the training length of its runs. The step sizes of MALA and of the quench, the quench budget, and the learning rate are those of the drivers released with this paper. When a fake ESS appears, we report the ESS together with a diagnostic that sees the missing mass: the coverage of Section~\ref{subsec: eval} on the 2D targets, and the minority-well weight on the $\phi^4$ lattices. On the product multi-well the wells of the double-well coordinates label the mode exactly, and every method finds every mode, so no fake ESS arises there. We also compare against FAB \citep{midgley2022flow} on the 2D benchmarks, the product multi-well, and the $\phi^4$ lattices; the coefficient sweep of Section~\ref{subsec: hd100} varies one loss, and the clock model of Section~\ref{subsec: clock} and the molecules of Section~\ref{subsec: achiral} train staged generators, so none of them carries a FAB run.

\subsection{2D benchmark distributions}
\label{subsec: 2d-benchmark}

For every target, we initialize the same neural spline flow (NSF) \citep{durkan2019neural} at the identity and compare the four losses above, together with FAB. Each training step constructs a target surrogate through the current flow on a ladder of length $M=1$, followed by MALA rejuvenation. SMC and QT both act on a fixed sample set of $5\times10^4$ to $8\times10^4$ points, and QT uses melt scale $m_e=2$. Coverage, against an independently generated QT reference set, measures the fraction of reference configurations represented by the pushforward samples through the nearest-neighbor test of Section~\ref{subsec: eval}. Because coverage tests occupancy of the reference balls, a collapsed flow can have ESS near one but cannot have full coverage. Each loss--target pair is one fixed-seed training realization, so the numbers below are for these matched runs, not variation across independent trainings.

\begin{figure}[htb]
    \centering
    \includegraphics[width=\textwidth]{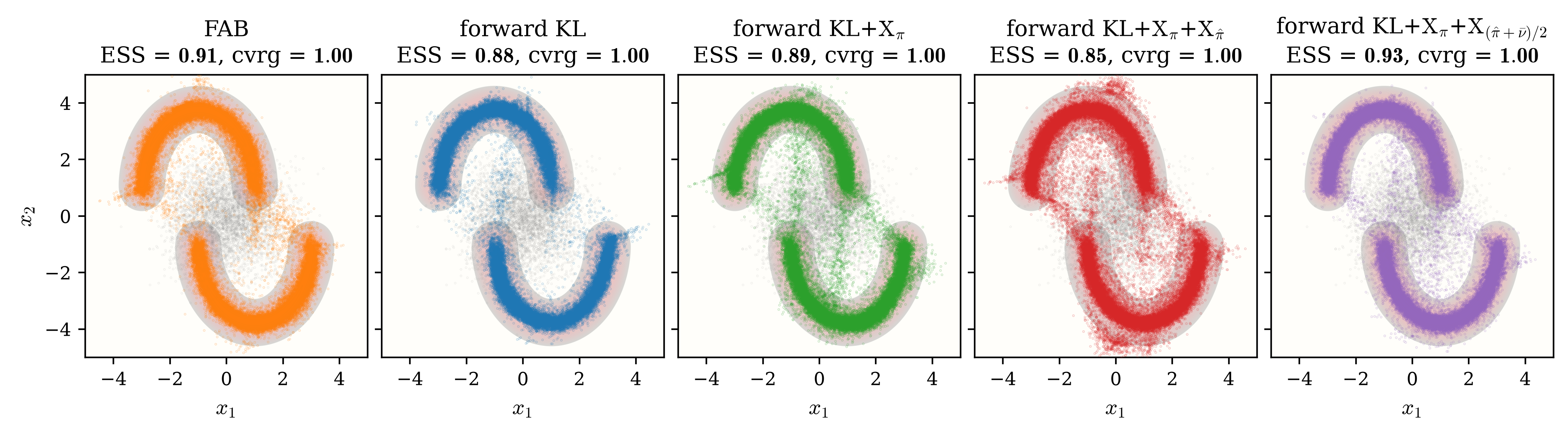}
    \caption{Two-Moon target. Columns, from left to right, are FAB, forward KL, $\KL$+$\X_\pi$, $\KL$+$\X_\pi$+$\X_{\hat\pi}$, and $\KL$+$\X_\pi$+$\X_{(\hat\pi+\bar\nu)/2}$. Each panel overlays the pushforward samples on the target energy and reports final ESS and coverage. Gray points show the source.}
    \label{fig: two-moon}
\end{figure}

\paragraph{Two-Moon.}
The source is $\mathcal N(0,I_2)$ and both crescents lie within its reach, so all five methods attain coverage $1$ (Figure~\ref{fig: two-moon}). Training uses a sample set of $5\times10^4$, batch size $500$, and $200$ steps. KLXX has the highest ESS of the five and $\KL$+$\X_\pi$+$\X_{\hat\pi}$ the lowest, since QT supplies no mode that the target surrogate is missing.

\begin{figure}[htb]
    \centering
    \includegraphics[width=\textwidth]{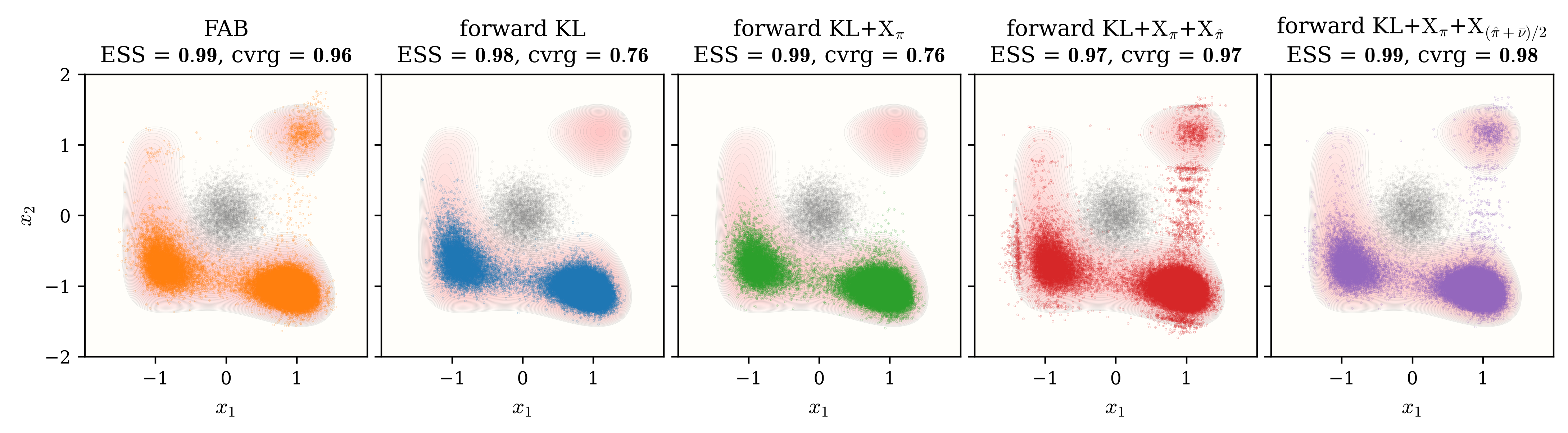}
    \caption{Three-Well target. From left to right, the panels show FAB, forward KL, $\KL$+$\X_\pi$, $\KL$+$\X_\pi$+$\X_{\hat\pi}$, and $\KL$+$\X_\pi$+$\X_{(\hat\pi+\bar\nu)/2}$. Each panel overlays the pushforward samples on the target energy and reports final ESS and coverage; gray points show the source.}
    \label{fig: threewell}
\end{figure}

\paragraph{Three-Well.}
The source is $\mathcal N(0,0.25^2 I_2)$, narrower than the well separation and initially covering no basin (Figure~\ref{fig: threewell}). Training uses a sample set of $8\times10^4$, batch size $1000$, and $500$ steps. Forward KL and $\KL$+$\X_\pi$ populate the two favored wells and leave the third uncovered. The QT set also represents the suppressed third well, and both losses that use it cover all three wells, at ESS values within $0.03$ of each other.

\begin{figure}[htb]
    \centering
    \includegraphics[width=\textwidth]{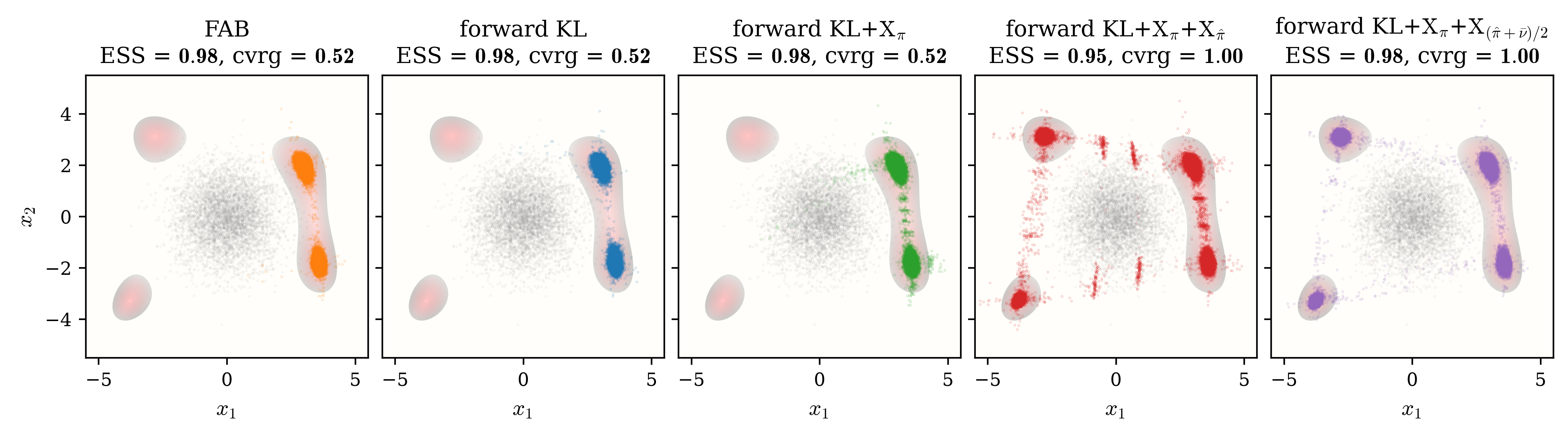}
    \caption{Himmelblau target. Columns show FAB, forward KL, $\KL$+$\X_\pi$, $\KL$+$\X_\pi$+$\X_{\hat\pi}$, and $\KL$+$\X_\pi$+$\X_{(\hat\pi+\bar\nu)/2}$; each panel shows the pushforward samples over the target energy with final ESS and coverage, and gray points show the source.}
    \label{fig: himmelblau}
\end{figure}

\paragraph{Himmelblau.}
The source is $\mathcal N(0,I_2)$, whereas the four wells lie at radius approximately $3.6$--$5.0$. Training uses a sample set of $5\times10^4$, batch size $1000$, and $1000$ steps. Forward KL and $\KL$+$\X_\pi$ capture only two wells (Figure~\ref{fig: himmelblau}), so their coverage falls to about half at ESS above $0.97$. Both losses using $\hat\pi$ attain coverage $1$, and relative to $\KL$+$\X_\pi$+$\X_{\hat\pi}$, KLXX shows less intermodal mass at higher ESS.
\begin{figure}[htb]
    \centering
    \includegraphics[width=\textwidth]{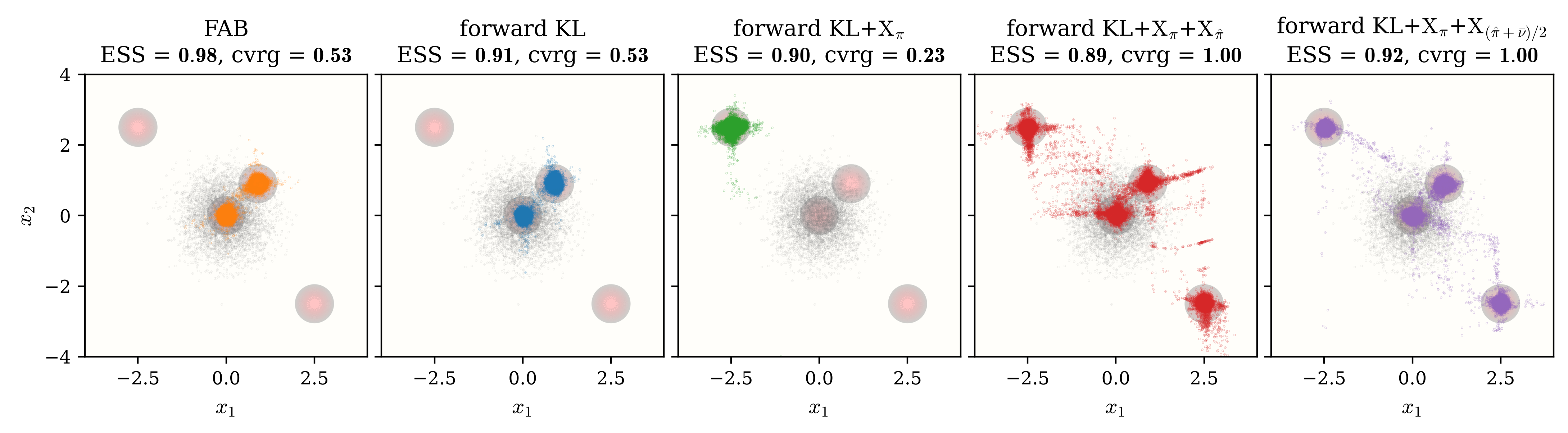}
    \caption{Sparse target. Columns show FAB, forward KL, $\KL$+$\X_\pi$, $\KL$+$\X_\pi$+$\X_{\hat\pi}$, and $\KL$+$\X_\pi$+$\X_{(\hat\pi+\bar\nu)/2}$, with the pushforward samples overlaid on the target energy; panel titles report final ESS and coverage, and gray points show the source.}
    \label{fig: sparse}
\end{figure}

\paragraph{Sparse.}
The target is a four-component Gaussian mixture with component standard deviation $0.08$: two modes lie near the source center and two lie in opposite corners, more than five source standard deviations away. With source standard deviation $0.6$, a sample set of $8\times10^4$, batch size $1000$, and $1000$ steps, forward KL covers only the central pair, and $\KL$+$\X_\pi$ is worse still: it settles on a single far mode. The loss $\KL$+$\X_\pi$+$\X_{\hat\pi}$ restores all four modes but places visible mass between them, while KLXX attains coverage $1$ with less intermodal mass and higher ESS (Figure~\ref{fig: sparse}).

We also run FAB \citep{midgley2022flow} on all four targets, with the same flow, source, target, batch size and step count as the losses above. The objectives differ in what each training step must carry, as Section~\ref{subsec: annealing} describes, so a matched batch and step count is not a matched budget of energy evaluations; we report no timing comparison. The FAB runs of this paper carry no replay buffer, so they are not the reference implementation, and ``FAB'' in every table and figure below means this buffer-free version. Its annealing follows the geometric path from the pushforward distribution to $\pi^2/\nu$ through $\pi$, with a rejuvenation kernel that leaves each level invariant and no replay buffer, as described in Section~\ref{subsec: annealing}. FAB discovers the third Three-Well basin, which forward KL does not, and the mass-covering $2$-divergence accounts for this. It does not reach every mode: on Himmelblau and on Sparse its coverage equals the forward KL value, so it misses the same wells. Since the replay buffer is part of what controls the gradient variance of the $2$-divergence, this deficit belongs to the buffer-free version run here and not necessarily to the reference implementation. Minimizing the $2$-divergence is minimizing importance weight variance, and FAB does hold a high ESS on every target; on Himmelblau and Sparse that ESS is earned on the modes it reached, and is a fake ESS in the sense of Section~\ref{sec: intro}. 

\begin{table}[htb]
    \centering
    \begin{tabular}{lccccc}
        \toprule
        target & FAB & KL & $\KL$+$\X_\pi$ & $\KL$+$\X_\pi$+$\X_{\hat\pi}$ & $\KL$+$\X_\pi$+$\X_{(\hat\pi+\bar\nu)/2}$ \\
        \midrule
        Two-Moon   & $0.906\,(1.000)$ & $0.884\,(1.000)$ & $0.894\,(1.000)$ & $0.849\,(1.000)$ & $\mathbf{0.931}\,(1.000)$ \\
        Three-Well & $0.988\,(0.962)$ & $0.977\,(0.760)$ & $0.991\,(0.760)$ & $0.968\,(0.970)$ & $\mathbf{0.989}\,(0.977)$ \\
        Himmelblau & $0.984\,(0.516)$ & $0.978\,(0.516)$ & $0.978\,(0.516)$ & $0.949\,(1.000)$ & $\mathbf{0.979}\,(1.000)$ \\
        Sparse     & $0.982\,(0.534)$ & $0.914\,(0.534)$ & $0.896\,(0.233)$ & $0.892\,(1.000)$ & $\mathbf{0.916}\,(1.000)$ \\
        \bottomrule
    \end{tabular}
    \caption{Final ESS, with coverage in parentheses, on the four 2D targets. Bold marks the highest ESS among the methods with coverage at least $0.9$, which are those whose panels occupy every target mode. The threshold is needed because the third Three-Well mode is narrow, so the reference balls there are only partly filled even when the mode is occupied.}
    \label{tab: 2d-benchmark-summary}
\end{table}

Table~\ref{tab: 2d-benchmark-summary} collects the four targets. The four rows support the division of labor the loss is built for: the target variation improves accuracy where the flow already places samples, the QT set supplies the missing modes, and the pushforward half of the mixture holds back the mass between them. KLXX reaches every mode on every target, and among the mode-complete methods it has the highest ESS on all four; on Three-Well its margin over FAB is within the fluctuation of one realization. The mechanism is visible term by term. Adding $\X_\pi$ leaves ESS near the forward KL value on Two-Moon, where coverage is already complete, and on Sparse it can even concentrate the flow further, onto one mode. QT samples supply information about modes absent from the target surrogate. Adding the pushforward samples to that mixture reduces visible intermodal mass relative to $\KL$+$\X_\pi$+$\X_{\hat\pi}$, and raises ESS on all four targets.

\subsection{High-dimensional product multi-well}
\label{subsec: highd}

We next isolate accuracy scaling over the dimensions
$d=16,32,48,\ldots,256$. Letting $k=\lfloor\log_2 d\rfloor$, we use
\begin{equation}
    U(\mathbf{x})=\frac12\lVert\mathbf{x}\rVert^2
    +12\sum_{i=1}^{k}\exp(-x_i^2).
    \label{eq: highd-potential}
\end{equation}
The first $k$ coordinates are symmetric double wells with minima at $\pm\sqrt{\ln 24}$ and barrier approximately $9.9$, while the remaining $d-k$ coordinates are standard Gaussian. The target therefore has exactly $2^k$ equal-weight modes, where $2^k\Le d<2^{k+1}$. Multimodality occupies only a $k$-dimensional subspace, but the log-weight error accumulates across all $d$ coordinates.

All five losses use the same identity-initialized NSF on $[-4,4]^d$. The fifth loss is LDR-L1, the forward KL with the centered penalty $\mathbb E_\pi|z-\mathbb E_\pi z|$ at coefficient $1$, written $\KL$+L1 below. At every dimension, training uses batch size $300$, $2000$ steps, and a fixed sample set of size $5000d$, of which QT uses $1000d$ samples at melt scale $m_e=2$; the SMC ladder has $M=2$. FAB is run at every dimension with the same flow, source, batch size and step count. Final sample ESS is evaluated on $80000$ fresh source samples. Each loss--dimension pair is one fixed-seed training realization.

\begin{figure}[htb]
    \centering
    \includegraphics[width=\textwidth]{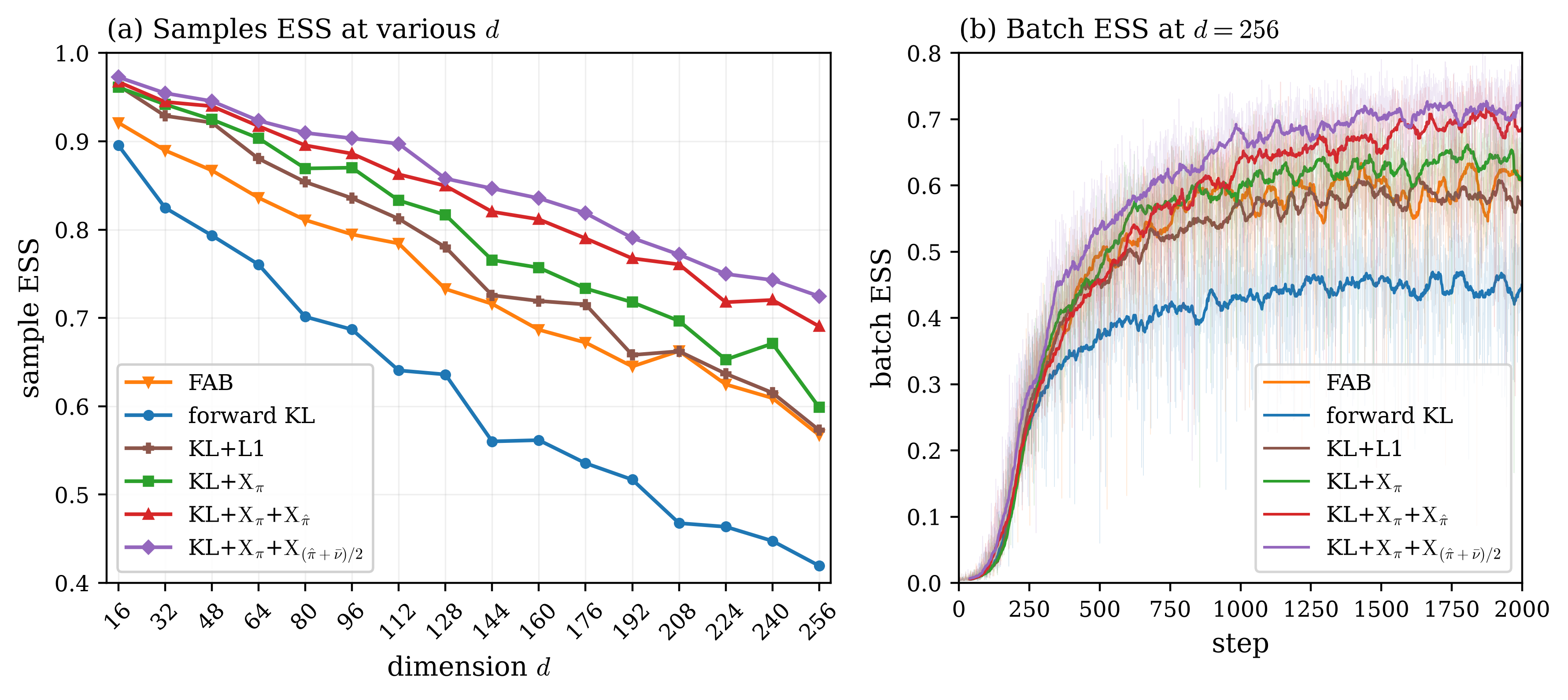}
    \caption{ESS on the product multi-well target \eqref{eq: highd-potential}. (a) Final sample ESS over sixteen tested dimensions. (b) Batch ESS during the $d=256$ training; bold curves are $40$-step moving averages of the raw values.}
    \label{fig: highd-ess}
\end{figure}

Coverage is not used here, because nearest-neighbor radii concentrate as the dimension grows. This target needs no such test: each of the first $k$ coordinates sits in one of two wells, which label the mode exactly. In each of the $96$ pairs of method and dimension, all $2^k$ modes occur among the fresh evaluation samples. Figure~\ref{fig: highd-ess} reports the final sample ESS. The three losses with log-ratio variation stay above forward KL at all sixteen dimensions, and so do $\KL$+L1 and FAB. The order is nearly the same at every dimension: KLXX first, then $\KL$+$\X_\pi$+$\X_{\hat\pi}$, then $\KL$+$\X_\pi$, $\KL$+L1, FAB, and forward KL. Two of the $80$ adjacent comparisons go the other way, at $d=16$ and $d=208$, by margins far below the run-to-run spread of Section~\ref{sec: conclusions}. The gap widens with dimension: at $d=16$ the six values lie within $0.08$ of each other, and at $d=256$ they span nearly four times that. During the $d=256$ run the batch ESS is evaluated after each gradient step, and the three losses with log-ratio variation keep improving after forward KL has begun to plateau.

\subsection{Coefficient of the target variation at \texorpdfstring{$d=100$}{d=100}}
\label{subsec: hd100}

The preceding test fixes the coefficient $\lambda=1$ of the target variation. We now vary it on the same
potential \eqref{eq: highd-potential} at $d=100$, where $k=6$ and the target has $64$ equal-weight modes.
The loss is $\KL$+$\lambda\X_\pi$ at $\lambda=0,0.5,1,1.5,2,2.5,3$ and at the integers from $4$ to $10$,
with $\lambda=0$ being forward KL; KLXX at the coefficients used everywhere else is trained on the same
target for comparison.

All runs share an identity-initialized NSF on $[-4,4]^{100}$. Training uses batch size $300$, $2000$ steps, and a fixed
sample set of $5\times10^{5}$ source samples, of which QT uses $10^{5}$ samples at melt scale $m_e=2$; the SMC
ladder has length $M=2$. Final sample ESS is evaluated on
$80000$ fresh source samples, and each coefficient is one fixed-seed training realization.

\begin{figure}[htb]
    \centering
    \includegraphics[width=\textwidth]{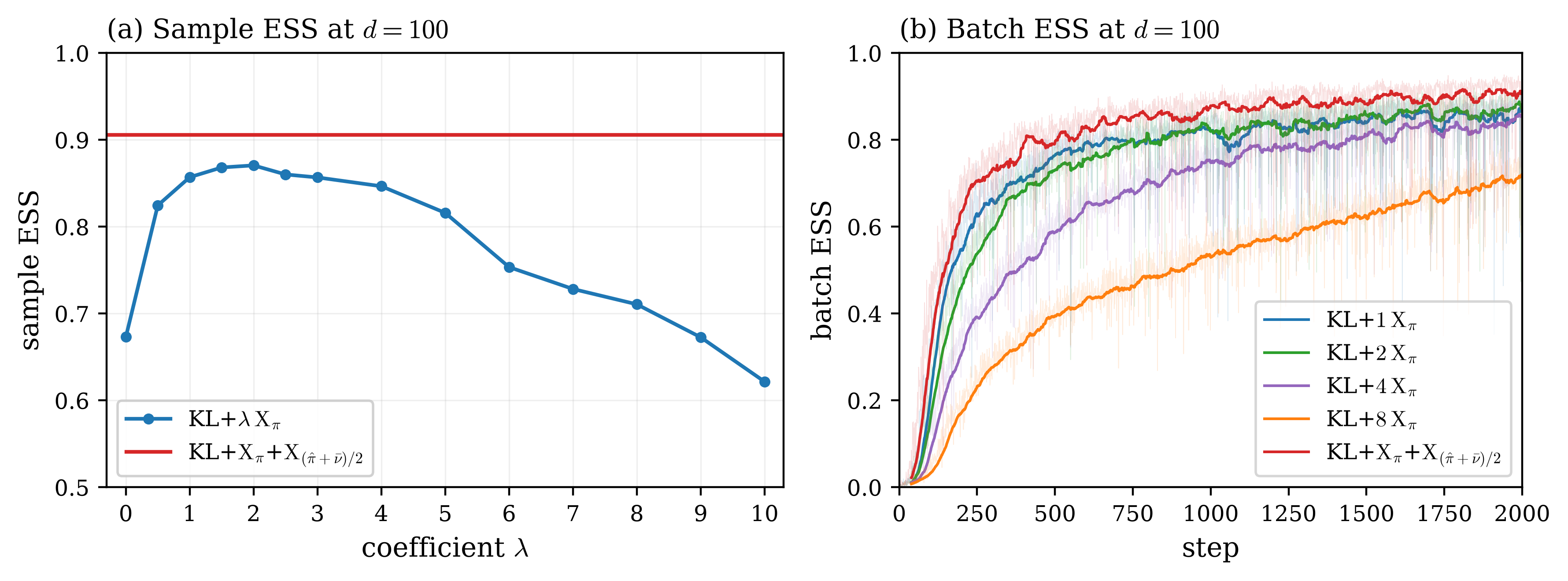}
    \caption{Effect of the target-variation coefficient on the product multi-well target
    \eqref{eq: highd-potential} at $d=100$. (a) Final sample ESS of $\KL$+$\lambda\X_\pi$ against $\lambda$;
    the horizontal line is the KLXX value $0.906$. (b) Batch ESS during training at $\lambda=1,2,4,8$ and for KLXX;
    bold curves are $40$-step moving averages of the raw values.}
    \label{fig: hd100-lambda}
\end{figure}

Figure~\ref{fig: hd100-lambda} shows a single interior optimum. Forward KL ends at sample ESS $0.673$; half a unit of
$\lambda$ already recovers most of the gain, and the curve peaks at
$\lambda=2$. The optimum is flat: every tested coefficient from $0.5$ to $5$ stays above $0.81$. Beyond it the
ESS declines steadily and ends below the forward KL value at $\lambda=10$: the variation term is
measured on the target surrogate alone, and weighting it far above the forward KL term removes the pressure
that places the pushforward distribution on the target in the first place.

The value of the target variation is thus insensitive to its coefficient over an order of magnitude, which
is why $\lambda=1$ is used untuned elsewhere in this paper. KLXX sits above every value the
sweep attains, so the mixture variation contributes something that tuning the single coefficient does not
reproduce. Panel (b) shows where the difference is made: the KLXX batch ESS separates from the
$\KL$+$\lambda\X_\pi$ curves within the first few hundred steps and stays above them for the rest of the
run, while raising $\lambda$ to $4$ or $8$ slows the early rise instead of accelerating it.

\subsection{Lattice \texorpdfstring{$\phi^4$}{phi4} field theory: fake ESS at a collective barrier}
\label{subsec: phi4}
On a periodic $L\times L$ lattice with $d=L^2$, the tilted scalar $\phi^4$ action is
\begin{equation*}
U(\boldsymbol\phi)
=-0.8\sum_{\langle jl\rangle}\phi_j\phi_l
+\sum_{j=1}^{d}\left[\phi_j^2+\frac12(\phi_j^2-1)^2\right]
+h\sum_{j=1}^{d}\phi_j.
\end{equation*}
The on-site term alone has a single minimum at $0$; the nearest-neighbor coupling renders a uniform configuration bistable. The magnetization of a configuration is
\begin{equation*}
    m(\boldsymbol\phi)=\frac1d\sum_{j=1}^{d}\phi_j .
\end{equation*}
We take $h=0.0257$ at $L=6$ and $h=0.0144$ at $L=8$, matching the scaling $hL^2\approx0.92$ to produce comparable well imbalance in the targets of the two settings.
A collective domain-wall barrier separates the two wells, and the minority weight $p_+=\mathbb P(m>0)$ must be learned rather than fixed by symmetry.

PT provides the references $p_+=0.1256\pm0.0001$ for $L=6$ and $0.1265\pm0.0002$ for $L=8$. Each reference uses $20$ replicas on a geometric inverse-temperature grid from $1$ down to $0.05$, hot enough to melt the domain-wall barrier; the quoted uncertainty is the spread over three independent runs at each size. Each loss uses an identity-initialized NSF on $[-3,3]^d$. The source is $\mathcal N(0,0.5^2 I_d)$; training uses a fixed sample set of $10^5$ source samples, batch size $1000$, and $2000$ steps. SMC and QT both act on that set, with ladder length $M=1$ and melt scale $m_e=2$. FAB is run on both lattices with the same flow, source, batch size and step count.

\begin{table}[htb]
    \centering
    \begin{tabular}{lcccc}
        \toprule
        & \multicolumn{2}{c}{$L=6$} & \multicolumn{2}{c}{$L=8$} \\
        \cmidrule(lr){2-3}\cmidrule(lr){4-5}
        & ESS & $p_+$ & ESS & $p_+$ \\
        \midrule
        FAB
        & $0.932/0.928/0.927$ & $0/0/0$
        & $0.880/0.877/0.870$ & $0/1/0$ \\
        forward KL
        & $0.854/0.843/0.872$ & $1/0/0$
        & $0.742/0.760/0.772$ & $0/0/0$ \\
        $\KL$+$\X_\pi$
        & $0.971/0.968/0.977$ & $1/0/0$
        & $0.922/0.907/0.920$ & $0/0/0$ \\
        $\KL$+$\X_\pi$+$\X_{\hat\pi}$
        & $0.940/0.946/0.942$ & $0.127/0.126/0.127$
        & $0.816/0.819/0.788$ & $0.126/0.127/0.124$ \\
        $\KL$+$\X_\pi$+$\X_{(\hat\pi+\bar\nu)/2}$
        & $0.946/0.952/0.957$ & $0.128/0.127/0.127$
        & $0.853/0.844/0.842$ & $0.132/0.129/0.131$ \\
        \midrule
        PT reference & --- & $0.1256\pm0.0001$ & --- & $0.1265\pm0.0002$ \\
        \bottomrule
    \end{tabular}
    \caption{Final ESS and reweighted minority-well weight $p_+$ for three seeds on the two tilted $\phi^4$ targets, $(L,h)=(6,0.0257)$ and $(8,0.0144)$. Integer values $0$ and $1$ indicate collapse onto one well.}
    \label{tab: phi4}
\end{table}

FAB, forward KL and $\KL$+$\X_\pi$ collapse at every size and seed (Table~\ref{tab: phi4}). Their target surrogate starts from the pushforward samples and does not cross the collective barrier within the run, so the second well remains absent and reweighting gives $p_+=0$ or $1$. Nevertheless, $\KL$+$\X_\pi$ holds the highest ESS at both sizes, and it is a fake ESS: it is earned on the single well the flow reaches, and the missing one does not enter it. KLXX populates both wells, so its ESS is measured on a support that carries both, and the number is no longer earned on half the target.

\begin{figure}[htb]
    \centering
    \includegraphics[width=\textwidth]{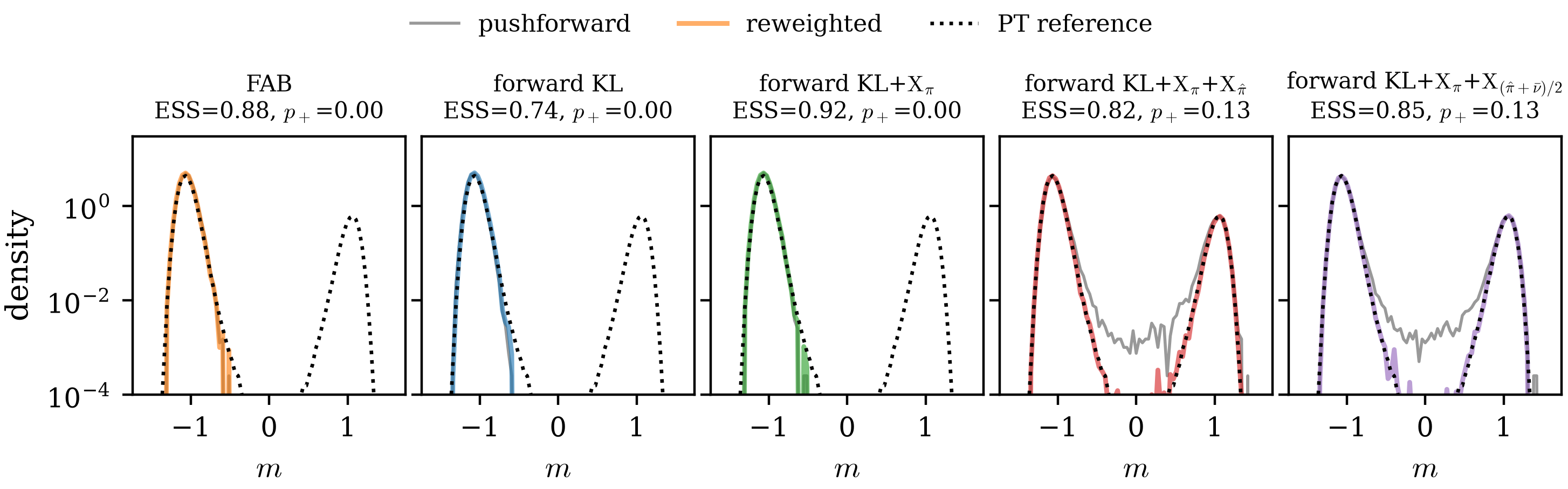}
    \caption{Magnetization density on the $L=8$ lattice. Each panel shows the pushforward density (gray), the reweighted density (color), and the PT reference density (dotted); the title reports ESS and reweighted $p_+$.}
    \label{fig: phi4-methods}
\end{figure}

The two losses that use the QT samples $\hat\pi$ populate both wells at every seed, with recovered $p_+$ within $0.006$ of the reference at the corresponding lattice size. On this observable the two are not equally accurate at $L=8$: $\KL$+$\X_\pi$+$\X_{\hat\pi}$ brackets the reference across its three seeds, whereas all three KLXX seeds lie above it. The offset is about $0.004$, larger than the spread over seeds, so it reflects the fit rather than sampling noise. Both QT-based losses still recover the minority well, which the collapsed fits never populate. In these finite-capacity fits the pushforward density places a little mass in the suppressed region between the wells, and the ESS stays below the value $\KL$+$\X_\pi$ reaches on its single well. Reweighting removes almost all of that intermodal mass, and both QT-based losses then match the reference there. KLXX reaches the higher ESS of the two at both sizes (Figure~\ref{fig: phi4-methods}).

\subsection{Boltzmann generator on \texorpdfstring{$p$}{p}-state clock model}
\label{subsec: clock}

The preceding benchmarks use a single flow. We now test the full adaptive-staging construction, on a target where one flow from the source to $\pi$ is not enough: the $p=6$ clock model on a periodic $8\times8$ lattice,
\begin{equation}
U(\boldsymbol\theta)
=-\sum_{\langle jl\rangle}\cos(\theta_j-\theta_l)
-\frac12\sum_{j=1}^{d}\cos(6\theta_j),
\qquad \boldsymbol\theta\in[-\pi,\pi)^{d},\qquad d=64,
\label{eq: clock-potential}
\end{equation}
where $\langle jl\rangle$ runs over nearest-neighbor pairs. The source is uniform on the torus and the interpolation is $U_t=tU$. The target distribution $\pi\propto\exp(-U)$ separates into six symmetry-related clock sectors.

Each stage uses an identity-initialized neural circular spline flow. The sample set contains $5\times10^5$ samples, every stage takes $3000$ steps, and both SMC and QT act directly on this set, with ladder length $M=4$. The melt of QT uses $m_e=2\pi$ on the torus, which spreads each particle over nearly the whole circle at every site. The stage schedule uses $t_{\mathrm{safe}}=0.25$ and the ESS gate at its default $\tau_{\mathrm v}=0.4$. We report batch sizes $B=2000$, $1500$, $1000$, and $500$.

KLXX uses the same stage schedule as $\KL$+$\X_\pi$ for fair comparison. That schedule is constructed with $\KL$+$\X_\pi$ under the ESS gate $\tau_{\mathrm v}=0.4$, and KLXX is trained on the identical accepted values of $t$. Every configuration is a single run, so individual stage ESS values carry single-run fluctuation.

The principal diagnostic is the sample ESS of each accepted stage. The propagation factor $\hat F$ of \eqref{eq: propagation-factor-hat} summarizes stagewise weight degeneracy across the stage sequence; smaller is better. It is not a full-chain ESS or an endpoint error estimate. Table~\ref{tab: clock-summary} reports every accepted stage and the resulting factor.

\begin{table}[htb]
    \centering
    \begin{tabular}{lcccccccc}
        \toprule
        stage $k$ & 1 & 2 & 3 & 4 & 5 & 6 & 7 & $\hat F$ \\
        \midrule
        $t_k$ ($B=2000$) & 0.250 & 0.625 & 0.754 & 0.889 & 1.000 & --- & --- & \\
        $\KL$+$\X_\pi$ & 0.885 & 0.431 & 0.492 & 0.433 & $\mathbf{0.666}$ & --- & --- & 4.30 \\
        $\KL$+$\X_\pi$+$\X_{(\hat\pi+\bar\nu)/2}$ & $\mathbf{0.937}$ & $\mathbf{0.553}$ & $\mathbf{0.554}$ & $\mathbf{0.456}$ & 0.662 & --- & --- & $\mathbf{3.40}$ \\
        \midrule
        $t_k$ ($B=1500$) & 0.250 & 0.512 & 0.648 & 0.789 & 0.937 & 1.000 & --- & \\
        $\KL$+$\X_\pi$ & 0.855 & 0.613 & 0.614 & 0.404 & 0.412 & $\mathbf{0.838}$ & --- & 4.72 \\
        $\KL$+$\X_\pi$+$\X_{(\hat\pi+\bar\nu)/2}$ & $\mathbf{0.919}$ & $\mathbf{0.708}$ & $\mathbf{0.681}$ & $\mathbf{0.440}$ & $\mathbf{0.419}$ & 0.832 & --- & $\mathbf{3.83}$ \\
        \midrule
        $t_k$ ($B=1000$) & 0.250 & 0.512 & 0.648 & 0.747 & 0.851 & 1.000 & --- & \\
        $\KL$+$\X_\pi$ & 0.813 & 0.530 & 0.562 & 0.573 & 0.507 & $\mathbf{0.448}$ & --- & 5.63 \\
        $\KL$+$\X_\pi$+$\X_{(\hat\pi+\bar\nu)/2}$ & $\mathbf{0.888}$ & $\mathbf{0.636}$ & $\mathbf{0.627}$ & $\mathbf{0.617}$ & $\mathbf{0.528}$ & 0.441 & --- & $\mathbf{4.43}$ \\
        \midrule
        $t_k$ ($B=500$) & 0.250 & 0.434 & 0.569 & 0.668 & 0.772 & 0.882 & 1.000 & \\
        $\KL$+$\X_\pi$ & 0.715 & 0.607 & 0.566 & 0.593 & 0.490 & 0.443 & $\mathbf{0.529}$ & 7.73 \\
        $\KL$+$\X_\pi$+$\X_{(\hat\pi+\bar\nu)/2}$ & $\mathbf{0.816}$ & $\mathbf{0.696}$ & $\mathbf{0.637}$ & $\mathbf{0.642}$ & $\mathbf{0.502}$ & $\mathbf{0.460}$ & 0.516 & $\mathbf{6.01}$ \\
        \bottomrule
    \end{tabular}
    \caption{Stage ESS, the sample ESS of the map selected at a stage, on every accepted stage of the schedule-matched clock-model comparison between $\KL$+$\X_\pi$ and $\KL$+$\X_\pi$+$\X_{(\hat\pi+\bar\nu)/2}$. Within each batch size block, both losses traverse the listed stage schedule. The final column is the propagation factor $\hat F$ in \eqref{eq: propagation-factor-hat}; bold marks the better loss at each stage and the smaller propagation factor.}
    \label{tab: clock-summary}
\end{table}

KLXX has higher sample ESS on $20$ of the $24$ shared stages, the exception being the last stage of each schedule, and it raises the geometric mean stage ESS at every reported batch size, by between $0.04$ and $0.06$. It reduces the propagation factor $\hat F$ at every reported batch size. As the batch shrinks, the schedule lengthens, and both factors grow with it, since each stage contributes a factor of at least one. Only the comparison within a batch size is a statement about the losses.

\begin{figure}[htb]
    \centering
    \includegraphics[width=0.7\textwidth]{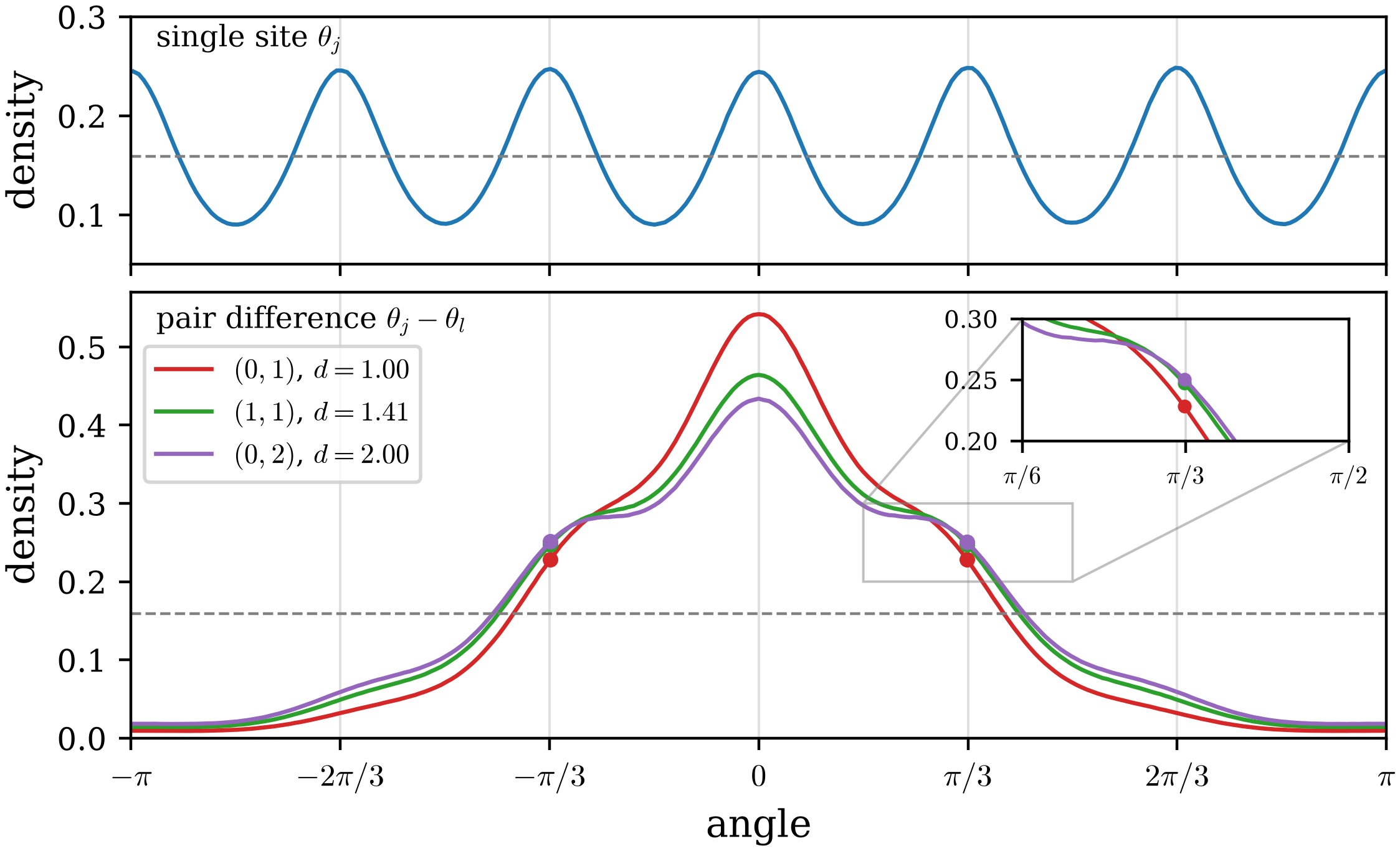}
    \caption{Angle densities from $2\times10^6$ fresh samples advanced through the trained $B=2000$ KLXX stage sequence, the single run of Table~\ref{tab: clock-summary}, by map, reweighting, resampling, and MALA. The upper panel is the single-site marginal. The lower panel gives pair differences at offsets $(0,1)$, $(1,1)$, and $(0,2)$; dots and the inset resolve the one-clock-step angle $\pi/3$. Dashed lines show the uniform density.}
    \label{fig: clock-marginals}
\end{figure}

The single-site marginal exhibits all six clock wells with nearly equal peak heights. Pair differences are most concentrated at zero for nearest neighbors; as the lattice separation increases, the alignment peak falls and the shoulders near $\pm\pi/3$ gain mass. This is the expected decay of local angular order with distance.

To quantify whether staged resampling preserves the sixfold symmetry, we perform two fresh-rebuild tests through the trained stage sequences. The batch size test fixes $N=6.4\times10^5$ and averages over four matched sampling seeds for each method and batch size. The particle-count test fixes $B=2000$ and varies $N$ at equal total work. For each sample, let $s$ be the clock sector nearest to the argument of the complex magnetization $d^{-1}\sum_{j=1}^d e^{\mathrm i\theta_j}$, and let $p_s$ be the empirical fraction assigned to sector $s$. We define
\[
\operatorname{occupancy\ bias}:=\frac16\sum_{s=0}^{5}\left|p_s-\frac16\right|.
\]

\begin{figure}[htb]
    \centering
    \includegraphics[width=0.9\textwidth]{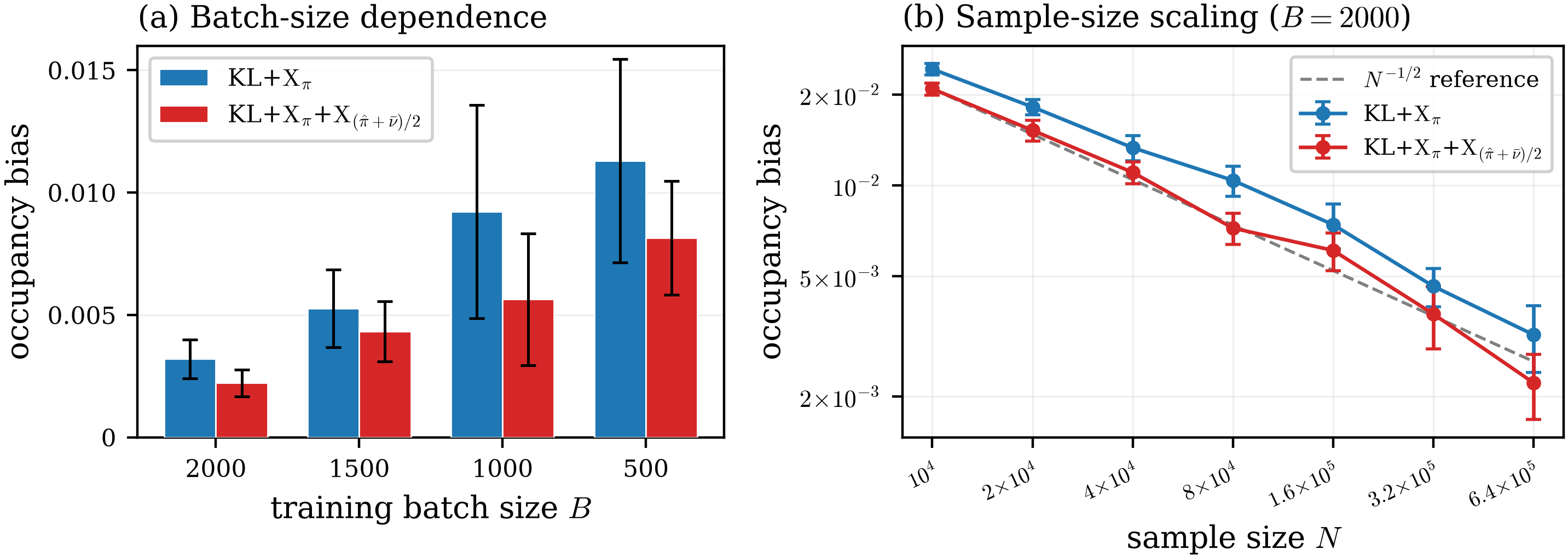}
    \caption{Clock-sector occupancy bias under two fresh-rebuild tests. (a) Mean bias at fixed $N=6.4\times10^5$ across training batch sizes; each bar averages four matched sampling seeds. (b) Mean bias at fixed $B=2000$ versus particle count; at $N=2^k\times10^4$, each mean uses $2^{8-k}$ independent rebuilds, keeping the total work per point at $2.56\times10^6$ particles. Error bars in both panels are $\pm2$ standard errors. The dashed line in (b) is an $N^{-1/2}$ reference, and the fitted log--log slopes are $-0.484$ for $\KL$+$\X_\pi$ and $-0.521$ for KLXX, and refitting at the fixed rate $\operatorname{err}=C/\sqrt N$ gives $C=2.67$ and $2.11$. The two panels' numbers align at $B=2000$, $N=6.4\times10^5$.}
    \label{fig: clock-occupancy}
\end{figure}

At fixed $N$, panel (a) of Figure~\ref{fig: clock-occupancy} shows that the mean occupancy bias increases as the training batch shrinks for both methods, and that KLXX has the lower mean at every reported batch size.

At fixed $B=2000$, panel (b) follows the Monte Carlo rate, with log--log slopes close to $-1/2$ for both losses. The absence of a visible bias floor is consistent with sampling fluctuation rather than a particle-count-independent sector imbalance over the tested range. The KLXX curve is lower at all seven reported sizes. Together with the six peaks in Figure~\ref{fig: clock-marginals}, these tests show that the staged samples are not confined to one global sector.

\subsection{Boltzmann generator on achiral molecules}
\label{subsec: achiral}

The targets above are analytic or lattice systems. We close with three molecules in implicit solvent at $300\,\mathrm K$, in the mixed internal coordinates of \citet{noe2019boltzmann}, whitened as that work does: $N$-methylacetamide (NMA, $d=30$), glycerol ($d=36$), and neutral diethanolamine ($d=48$). Here $d$ counts the internal degrees of freedom: a molecule of $N_{\mathrm a}$ atoms has $d=3N_{\mathrm a}-6$ once global translation and rotation are removed, so the three carry $12$, $14$ and $18$ atoms. NMA uses the Amber ff96 force field and the other two use GAFF2 with AM1-BCC charges; all three carry the OBC1 implicit solvent with the ACE nonpolar term, with no cutoff and no bond constraints.

The Lennard--Jones core grows as $r^{-12}$, so an early flow proposal can land at energies far outside the thermally occupied region. The stages therefore follow a regularized path instead of the direct interpolation used above. Flooring every nonbonded pair distance at $r$, and capping the reference-relative energy $\Delta E$ above a threshold $e$ by $e\bigl[1+\log(\Delta E/e)\bigr]$, gives a regularized potential $U^{\rho}$ with $\rho=(e,r)$, measured in $\mathrm{kJ\,mol^{-1}}$ and $\mathrm{nm}$. With $\rho_t=(1-t)\rho_0+t\rho_1$, the stage target is
\begin{equation}
	U_t=(1-t)U_0+t\,U^{\rho_t},
	\label{eq: achiral-diagonal}
\end{equation}
so the single parameter $t$ advances the interpolation and relaxes the regularization together. The path ends at $U^{\rho_1}$ rather than at $U$: the regularization is weakened along the schedule but not removed. The reference of Figure~\ref{fig: achiral-dihedrals} is generated under $U$ itself. Each run uses one seed, a sample set of $2\times10^5$, batch size $10^4$, and $500$ training steps per stage, $400$ for diethanolamine. The schedule uses $t_{\mathrm{safe}}=0.1$ and the ESS gate at its default $\tau_{\mathrm v}=0.4$, and the KLXX runs use $c_{\mathrm{QT}}=0.2$ in the QT reweighting of Section~\ref{subsec: QT}. On these targets a fraction $10^{-4}$ of the largest importance weights is dropped from the loss batch and given zero weight before every stage ESS and every resampling, so that a single hole of the pushforward density cannot dominate a batch or a stage; the reported $\hat F$ is assembled from those screened values. Both losses build their own adaptive schedule here, so unlike Section~\ref{subsec: clock} the two are not compared stage by stage.

\begin{table}[htb]
    \centering
    \begin{tabular}{lcccccc}
        \toprule
        \multirow{2}{*}{molecule ($d$)} & \multirow{2}{*}{$\rho_0$} & \multirow{2}{*}{$\rho_1$}
        & \multicolumn{2}{c}{$\KL$+$\X_\pi$} & \multicolumn{2}{c}{KLXX} \\
        \cmidrule(lr){4-5}\cmidrule(l){6-7}
        & & & $\hat F$ & stages & $\hat F$ & stages \\
        \midrule
        NMA ($30$) & $(50,0.20)$ & $(100,0.15)$ & $2.06$ & $6$ & $\mathbf{1.77}$ & $5$ \\
        glycerol ($36$) & $(50,0.20)$ & $(100,0.10)$ & $3.77$ & $7$ & $\mathbf{3.28}$ & $6$ \\
        neutral diethanolamine ($48$) & $(50,0.20)$ & $(100,0.10)$ & $19.90$ & $10$ & $\mathbf{15.84}$ & $10$ \\
        \bottomrule
    \end{tabular}
    \caption{Propagation factor $\hat F$ of \eqref{eq: propagation-factor-hat} and the number of accepted stages for the three achiral molecules, with the regularization path endpoints $\rho_0$ and $\rho_1$ of \eqref{eq: achiral-diagonal}. Each entry is one seed on that loss's own adaptive schedule; bold marks the smaller factor for the molecule.}
    \label{tab: achiral-summary}
\end{table}

Table~\ref{tab: achiral-summary} shows that KLXX has the smaller propagation factor on all three targets, and that it reaches $t=1$ in one fewer stage on NMA and on glycerol. The factors grow steeply with dimension. On diethanolamine the last five stages of both runs selected the identity proposal over the trained flow, so over the second half of both schedules the generators advance by reweighting, resampling and MALA alone, and the factor there describes the schedule rather than either loss.

Figure~\ref{fig: achiral-stage-ess} resolves each factor into the accepted stages that produce it. Because the two schedules differ, the arcs of one molecule are read as two separate progressions to $t=1$ rather than as matched transitions.

\begin{figure}[!tp]
    \centering
    \includegraphics[width=\textwidth]{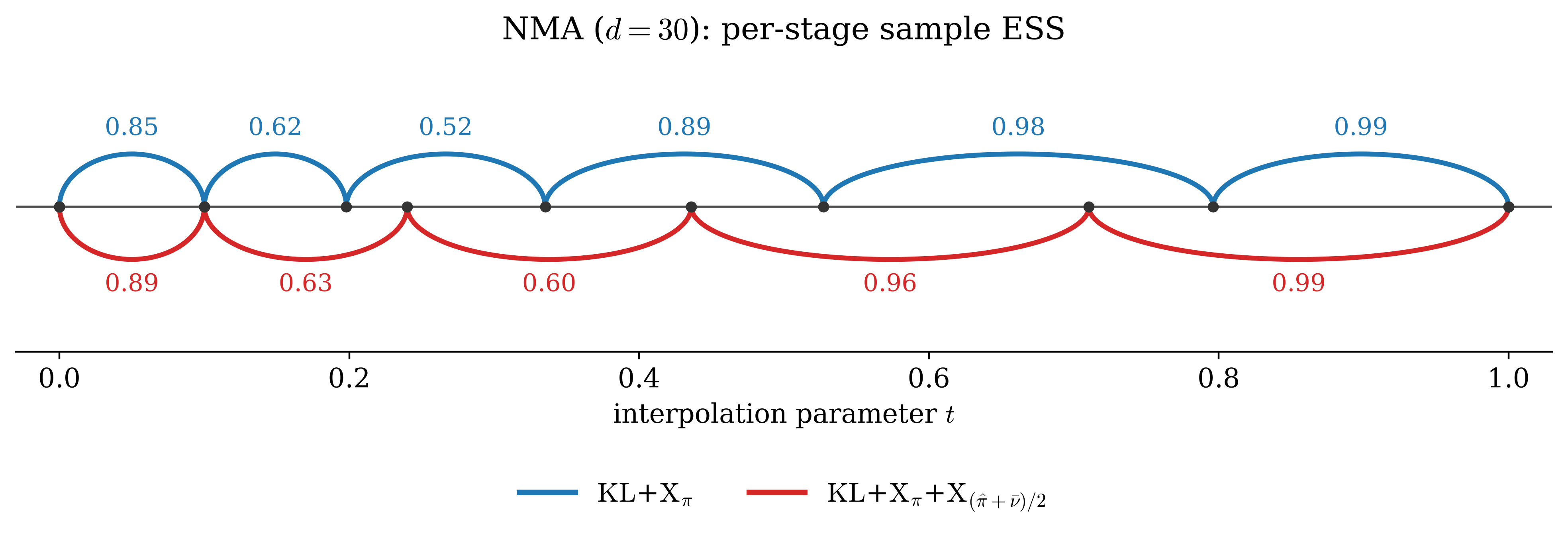}
    \par\smallskip
    \includegraphics[width=\textwidth]{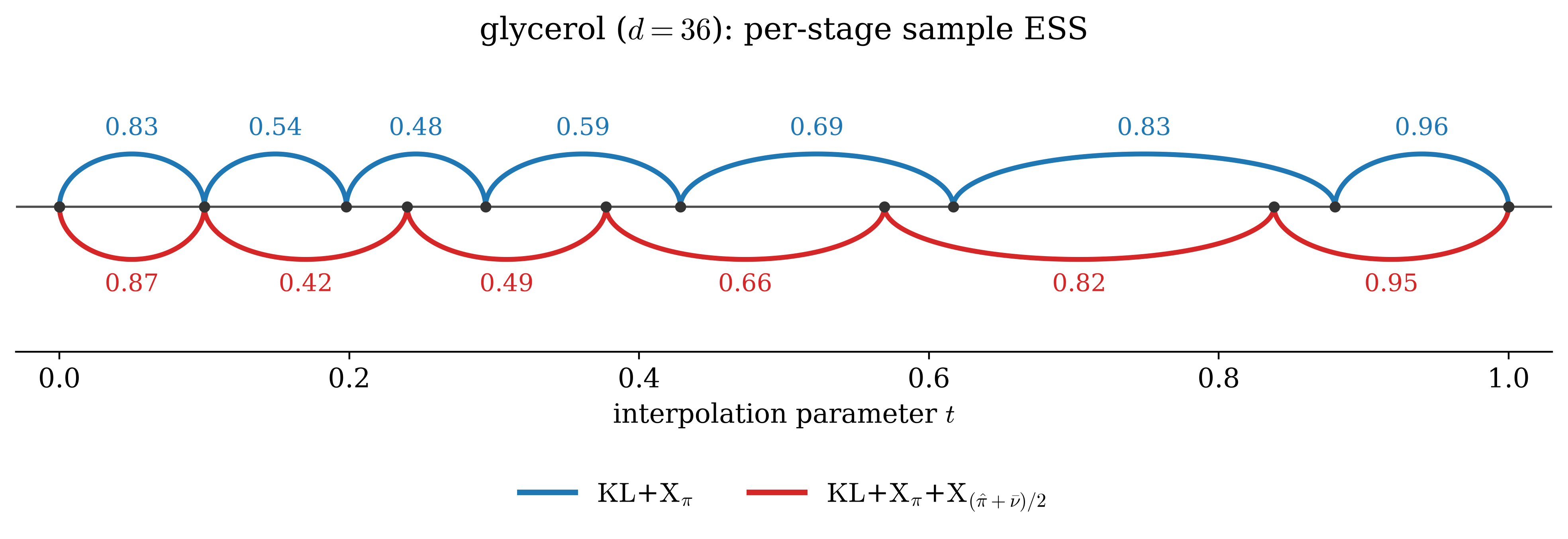}
    \par\smallskip
    \includegraphics[width=\textwidth]{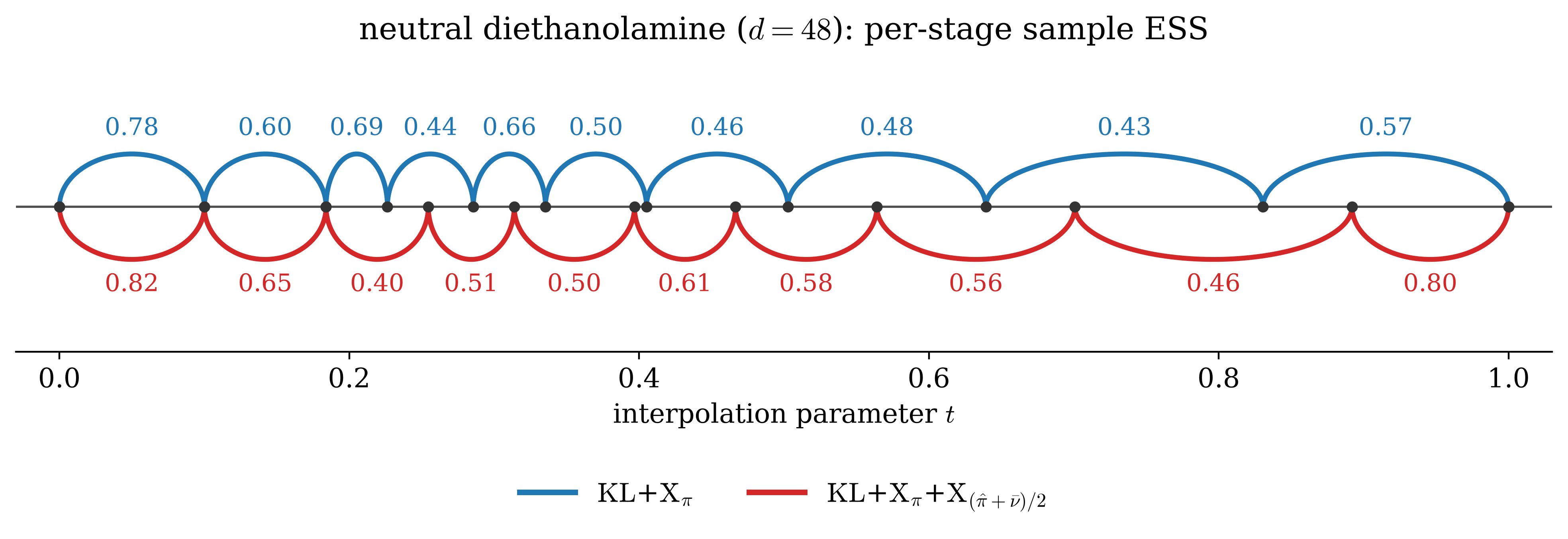}
    \caption{Per-stage sample ESS for NMA (top), glycerol (middle), and neutral diethanolamine (bottom). Each arc spans one accepted stage $t_{k-1}\to t_k$ and carries the sample ESS of the map selected there. Blue arcs above the axis are $\KL$+$\X_\pi$; red arcs below are KLXX. The horizontal coordinate is the interpolation parameter $t\in[0,1]$.}
    \label{fig: achiral-stage-ess}
\end{figure}

The samples at $t=1$ are compared with an independent PT reference in Figure~\ref{fig: achiral-dihedrals}. Each reference propagates six OpenMM replicas from $300$ to $800\,\mathrm K$ and retains the $300\,\mathrm K$ replica after $200$ equilibration rounds, giving $3300$, $5500$ and $9700$ frames for NMA, glycerol and neutral diethanolamine. One torsion is shown per molecule: the NMA amide torsion, and the C-C-C-O and C-C-N-C torsions carrying the trans and gauche wells of the other two.

\begin{figure}[htb]
    \centering
    \includegraphics[width=\textwidth]{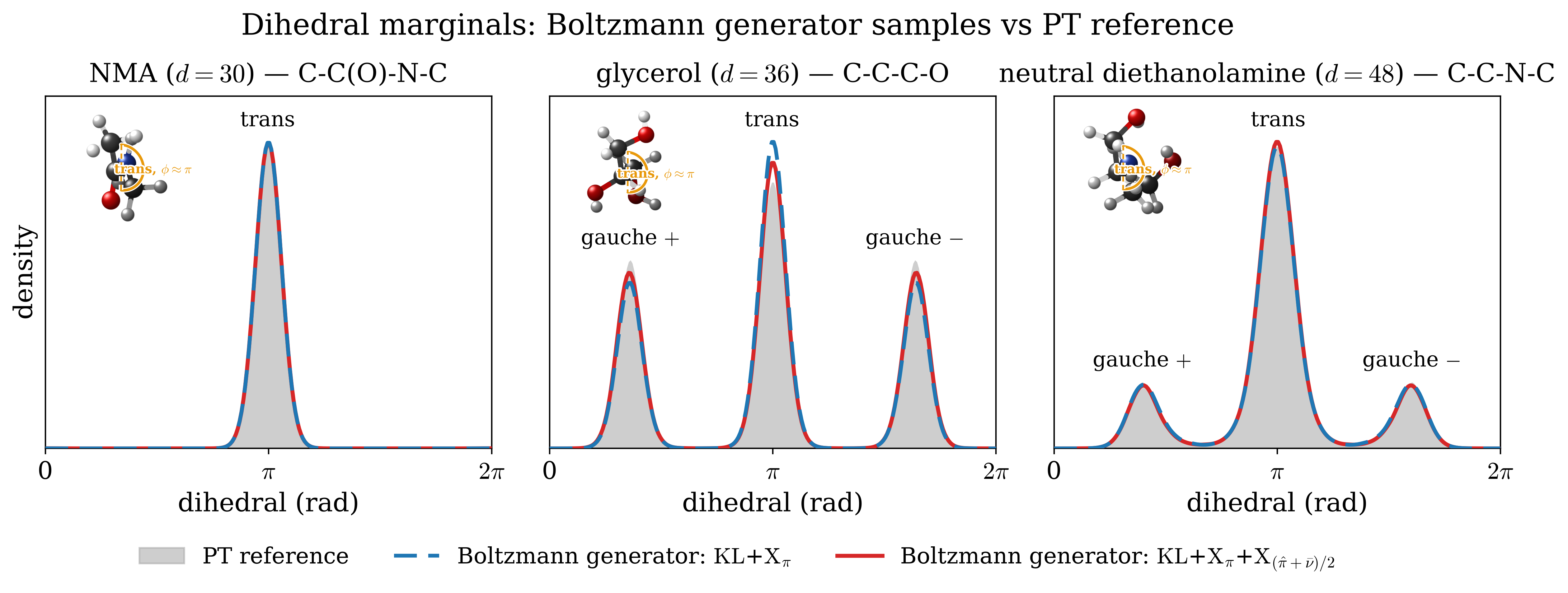}
    \caption{Dihedral marginals from the Boltzmann generator samples at $t=1$ and the PT references, for NMA, glycerol, and neutral diethanolamine. Gray is the reference, dashed blue is $\KL$+$\X_\pi$, and solid red is KLXX; the dashed curve is drawn over the solid one so that agreement between the two remains visible. Every density is symmetrized under $\varphi\mapsto-\varphi$, as the achiral targets allow, and smoothed, with the reference and both generators treated alike. The insets show a representative trans conformer.}
    \label{fig: achiral-dihedrals}
\end{figure}

Both losses populate every well the reference visits. On NMA that is the single trans well; its gauche population is too small to appear. On neutral diethanolamine they also match its trans fraction, which the symmetrization of the figure leaves untouched. On glycerol both place more mass in trans than the reference does, KLXX by about half of $\KL$+$\X_\pi$'s excess, which is the largest visible departure in the figure. The reference frames are correlated, and every generator configuration is a single seed.

% ===================== Section 7: Conclusions + Acknowledgement =====================
\section{Conclusions}
\label{sec: conclusions}

Forward KL is mass-covering for exact distributions, but its sampled loss sees only the target regions supplied to it. This gap can produce a fake ESS, transfer surrogate error to the flow, and leave target-poor pushforward regions weakly constrained. Log-ratio variation addresses these failures by comparing the exact log-density ratio at pairs of samples. The weighting measure determines which regions are compared: target samples measure log-ratio variation on reached target regions, QT samples discover candidate modes, and pushforward samples expose regions occupied by the pushforward distribution. KLXX combines these log-ratio variations while retaining the exact target as a minimizer and a tractable pushforward density.

The theory covers the generalized KLXX loss at any coefficients $(\lambda,\vartheta,\alpha,\beta)$. In FR geometry the loss induces the flow \eqref{eq: FR-flow}, whose dissipation \eqref{eq: combined-rate} carries both log-ratio variations as nonpositive terms, with no log-concavity, spectral gap, or growth assumption on $\pi$. Under a fixed biased target surrogate, Theorem~\ref{thm: accuracy} subtracts one nonnegative discrepancy for each variation from the stationary error bound, requiring a positive solution of its fixed-point equation and two finite importance-weight expectations. Theorem~\ref{thm: propagation} is separate: it bounds the $L^2$ error of the inference scheme of Algorithm~\ref{alg: inference} at rate $N^{-1/2}$ for every bounded observable, with a constant independent of $N$, so the scheme is asymptotically unbiased, provided each stage weight is $\nu_k$-essentially bounded. The reported $\hat F$ summarizes stagewise weight degeneracy and is a computable substitute for that constant rather than a bound on it.

The model problems are consistent with the roles of the target, QT, and pushforward samples. On Three-Well, Himmelblau and Sparse, forward KL and $\KL$+$\X_\pi$ miss modes while reporting a high ESS, and on the $\phi^4$ lattices they collapse to one well with $\KL$+$\X_\pi$ holding the highest ESS at both sizes; both QT-based losses recover every mode on these targets, and adding the pushforward samples to the mixture reduces the visible intermodal mass on Himmelblau and Sparse. The product multi-well test shows higher sample ESS than forward KL for every loss with log-ratio variation at all sixteen tested dimensions, and places $\KL$+L1 between forward KL and $\KL$+$\X_\pi$ at every dimension but the smallest. At $d=100$, the sweep of the target-variation coefficient has a single flat interior optimum near $\lambda=2$, and KLXX exceeds every value it attains. On the shared clock stage schedules, KLXX has higher sample ESS than the loss that built the schedule on all but its last stage, and a smaller propagation factor at every reported batch size. On the three achiral molecules, KLXX reaches $t=1$ with the smaller propagation factor at every target. The $\phi^4$, clock and molecular experiments carry the accuracy evidence: the $\phi^4$ minority weight is compared against PT, the clock sector occupancy against the exact value $1/6$ that the symmetry of the target fixes, and the molecular dihedral marginals against PT references.

Every configuration outside the $\phi^4$ tests is a single training realization. The three $\phi^4$ seeds give the only estimate of run-to-run spread in the paper, up to $0.031$ in final ESS, which exceeds several of the margins reported elsewhere, so those ESS orderings are indicative rather than measured differences.

The principle those results support is not an ordering of methods: what the mixture variation contributes is a way past mode collapse, and it does so through an explicitly constructed QT set, so the candidate modes enter the loss itself rather than being left to the sampler to find. Coverage and the ESS then measure what that training achieved. Future work includes tuning the KLXX loss for better performance, larger molecular targets, and more challenging cases such as a Lennard--Jones system and distributions on low-dimensional manifolds. A replay buffer would let the QT set and the target surrogate of earlier steps be reused, as FAB reuses its annealed samples \citep{midgley2022flow}, and adding one to KLXX is left to future work. The comparison with LDR-L1 \citep{schopmans2026ldr} is confined to the product multi-well here, and we will extend it to the other targets, add temperature-annealed Boltzmann generators \citep{schopmans2025temperature}, and run FAB with the replay buffer of its reference implementation. 

\subsection*{Data Availability}

The normalizing flow computations use the JAX backend through two Python packages maintained by Xuda Ye. The packages \texttt{jflows} and \texttt{jflows-md} can be installed directly from PyPI with:
\begin{tcolorbox}[colback=gray!12,boxrule=0pt,arc=2pt,left=6pt,right=6pt,top=4pt,bottom=4pt]
\texttt{pip install jflows jflows-md}
\end{tcolorbox}
\noindent
The source files are available at:
\begin{tcolorbox}[colback=gray!12,boxrule=0pt,arc=2pt,left=6pt,right=6pt,top=4pt,bottom=4pt]
\url{https://github.com/xuda-ye-math/KLXX}
\end{tcolorbox}

\subsection*{Declaration of AI use}
The authors used Claude Opus 5 (Anthropic) to assist with writing and testing the codes, with drafting and checking proofs, and with reviewing and editing the manuscript, including the checking of numerical claims quoted from other publications. All data and numerical results reported in this paper are generated directly by our Python scripts and are presented without any embellishment. The authors have reviewed all AI-assisted content. The authors assume responsibility for all content.

{
	\bibliographystyle{abbrvnat}
	\bibliography{references}
}

\appendix

\section{Proof of the Fisher--Rao flow and its dissipation}
\label{sec: appendix-FR}

This appendix derives the Fisher--Rao gradient flow \eqref{eq: FR-flow} of the generalized KLXX loss \eqref{eq: KLXX-general-nu} and proves the dissipation identity \eqref{eq: combined-rate} stated in Section~\ref{subsec: FR-convergence}. We view $\nu$ as a generic probability density evolving under a Riemannian metric on the space of densities \citep{ambrosio2008gradient,villani2009optimal} and ask how fast $\nu_t$ approaches the target in $\KL(\pi\|\nu_t)$.
Throughout, we assume the densities are positive and regular enough for differentiation, and that the expectations displayed below are finite.

The FR gradient of a functional $\mathcal F[\nu]$ and the flow it generates are given by
\begin{equation}
    \mathrm{grad}^{\mathrm{FR}}\mathcal F(\nu) = \nu\biggl(\frac{\delta\mathcal F}{\delta\nu} - \mathbb E_{\nu}\Big[\frac{\delta\mathcal F}{\delta\nu}\Big]\biggr),
    \qquad
    \partial_t\nu_t = -\,\mathrm{grad}^{\mathrm{FR}}\mathcal F(\nu_t).
    \label{eq: appendix-FR-gradient}
\end{equation}
The mean subtraction enforces $\int_{\mathbb R^d}\mathrm{grad}^{\mathrm{FR}}\mathcal F\D x = 0$ at unit mass, so the flow preserves probability mass.
In the generalized KLXX loss \eqref{eq: KLXX-general-nu},
the forward KL contributes a local term: its first variation is $\delta\KL(\pi\|\nu)/\delta\nu = -w$, and $\mathbb E_\nu[w] = \int_{\mathbb R^d}\pi\D x = 1$, so \eqref{eq: appendix-FR-gradient} gives
\begin{equation}
    \mathrm{grad}^{\mathrm{FR}}\bigl[\KL(\pi\|\nu)\bigr](x) = \nu(x)-\pi(x).
    \label{eq: appendix-FR-grad-KL}
\end{equation}
The two log-ratio variations are handled at once, because neither weighting distribution is varied and nothing in the calculation uses that the weighting distribution and the target coincide.

\begin{lemma}
\label{lem: appendix-variation}
Let $\omega$ be a weighting distribution that does not depend on $\nu$, and let $S^{\omega}$ be its nonlocal correlation \eqref{eq: FR-S}. Then
\begin{equation}
    \frac{\delta \X_\omega(\pi\|\nu)}{\delta\nu}(x) = -2\,\frac{\omega(x)}{\nu(x)}\,S^{\omega}(x),
    \qquad
    \mathrm{grad}^{\mathrm{FR}}\bigl[\X_\omega(\pi\|\nu)\bigr](x) = -2\,\omega(x)\,S^{\omega}(x).
    \label{eq: appendix-FR-grad-X}
\end{equation}
\end{lemma}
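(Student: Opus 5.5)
We compute the first variation of $\X_\omega(\pi\|\nu)$ directly along a perturbation $\nu_\epsilon=\nu+\epsilon h$ with $\int h\,\D x=0$. Only the log-ratio moves, since $\omega$ is held fixed:
\[
\frac{\D}{\D\epsilon}\Big|_{\epsilon=0} z_\epsilon(y) = -\frac{h(y)}{\nu(y)} .
\]
Write $\X_\omega=\iint\omega(x)\omega(y)\,|z(x)-z(y)|\,\D x\D y$. The derivative of $|u|$ is $\operatorname{sgn}(u)$ away from $u=0$. Differentiating under the integral therefore gives
\[
\frac{\D}{\D\epsilon}\X_\omega=\iint\omega(x)\omega(y)\operatorname{sgn}\bigl(z(x)-z(y)\bigr)\Bigl(-\frac{h(x)}{\nu(x)}+\frac{h(y)}{\nu(y)}\Bigr)\D x\D y .
\]
The sign function is antisymmetric under swapping $x$ and $y$, so the two terms are equal. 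The derivative is then $-2\int h(x)\,\frac{\omega(x)}{\nu(x)}\,\mathbb E_{y\sim\omega}[\operatorname{sgn}(z(x)-z(y))]\,\D x$.

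Since $\log$ is strictly increasing, $\operatorname{sgn}(z(x)-z(y))=\operatorname{sgn}(w(x)-w(y))$. The inner expectation is therefore exactly $S^\omega(x)$ of \eqref{eq: FR-S}, and we read off $\delta\X_\omega/\delta\nu=-2\,\omega S^\omega/\nu$. The normalizing constants of $z$ cancel in differences, so they cause no issue.

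For the FR gradient, we insert this into \eqref{eq: appendix-FR-gradient}. Multiplying by $\nu$ gives $-2\omega S^\omega$, and the mean term is
\[
\mathbb E_\nu\Bigl[-2\frac{\omega}{\nu}S^\omega\Bigr]=-2\iint\omega(x)\omega(y)\operatorname{sgn}\bigl(w(x)-w(y)\bigr)\D x\D y .
\]
This vanishes by antisymmetry of the integrand under $x\leftrightarrow y$. Hence $\mathrm{grad}^{\mathrm{FR}}\X_\omega=-2\omega S^\omega$, which integrates to zero as required.

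The main obstacle is justifying differentiation of $|\cdot|$ under the integral. The function $|u|$ is not differentiable at $u=0$, and the tie set $\{(x,y):w(x)=w(y)\}$ can have positive $\omega\otimes\omega$ measure, for instance when $w$ is constant on a region.

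The plan is to bound the difference quotients by $|h(x)/\nu(x)|+|h(y)/\nu(y)|$, using the Lipschitz property of $|\cdot|$. This lets dominated convergence apply, given the standing integrability assumptions. On the tie set, the one-sided derivative is $|{-h(x)/\nu(x)}+h(y)/\nu(y)|$ rather than a linear expression. This set is symmetric under $x\leftrightarrow y$, so we will argue that its contribution is handled by the convention $\operatorname{sgn}(0)=0$, which makes the result a symmetric subgradient.

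Concretely, we will state the formula as the Gateaux derivative wherever ties are $\omega\otimes\omega$-null. Where they are not, we will state it as the canonical (minimal-norm, symmetric) element of the subdifferential. This is consistent with the paper's standing regularity assumption.
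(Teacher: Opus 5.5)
Your proposal is correct and follows essentially the paper's argument. The paper writes the first variation with Dirac deltas rather than a Gateaux direction $h$, but it uses the same two antisymmetry steps: merging the two terms into $-2\,\omega S^{\omega}/\nu$, and annihilating the mean term $-2\int\omega S^{\omega}\,\D x$. Your tie-set discussion matches the paper's remark after the lemma, which gets the subgradient property from convexity of $\X_\omega$ in $u=\log\nu$ rather than from one-sided derivatives. One small caution: $\operatorname{sgn}(0)=0$ is the least-norm element of the scalar subdifferential $[-1,1]$ of $|\cdot|$, which is all the paper claims, but it need not be the least-norm element of the whole functional's subdifferential (for instance in the Fisher--Rao norm when $\omega/\pi$ varies on a tie class), so drop ``minimal-norm'' or say which norm you mean.
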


\begin{proof}
Since $z = \log\pi-\log\nu$, the variation of the inner difference at an evaluation point $r$ is
\begin{equation*}
    \frac{\delta}{\delta\nu(r)}\bigl[z(x)-z(y)\bigr]
    = -\frac{\delta(r-x)}{\nu(x)} + \frac{\delta(r-y)}{\nu(y)},
\end{equation*}
with $\delta$ the Dirac distribution, so integration against $\delta(r-x)$ evaluates the remaining integrand at $x=r$. Because $\log$ is increasing, $\operatorname{sgn}(z(x)-z(y)) = \operatorname{sgn}(w(x)-w(y))$. Differentiating \eqref{eq: X} and using that $\omega$ is fixed,
\begin{align*}
    \frac{\delta \X_\omega(\pi\|\nu)}{\delta\nu}(r)
    &= \iint_{\mathbb R^d\times\mathbb R^d} \omega(x)\omega(y)\operatorname{sgn}\bigl(w(x)-w(y)\bigr)
       \left[-\frac{\delta(r-x)}{\nu(x)} + \frac{\delta(r-y)}{\nu(y)}\right] \D x \D y \\
    &= -2\,\frac{\omega(r)}{\nu(r)}\int_{\mathbb R^d} \omega(y)\operatorname{sgn}\bigl(w(r)-w(y)\bigr)\D y ,
\end{align*}
the two terms being equal after exchanging $x$ and $y$ and using the antisymmetry of the sign. That antisymmetry also gives
\begin{equation}
    \mathbb E_{\omega}\bigl[S^{\omega}\bigr] = \mathbb E_{x,y\sim\omega}\bigl[\operatorname{sgn}\bigl(w(x)-w(y)\bigr)\bigr] = 0 ,
    \label{eq: appendix-S-mean-zero}
\end{equation}
so the mean subtraction in \eqref{eq: appendix-FR-gradient} contributes $\mathbb E_\nu\bigl[-2(\omega/\nu)S^{\omega}\bigr] = -2\int_{\mathbb R^d}\omega S^{\omega}\D x = 0$, and the FR gradient is $\nu\cdot\bigl(-2(\omega/\nu)S^{\omega}\bigr)$.
\end{proof}

The mixture $\xi=\alpha\hat\pi+\beta\bar\nu$ depends on $\nu$ through $\bar\nu$, but $\bar\nu$ is detached and takes no part in the gradient calculation, so $\xi$ is held fixed when the variation is taken. The lemma therefore applies at $\omega=\xi$ as it does at $\omega=\pi$. Adding \eqref{eq: appendix-FR-grad-KL} to the two variations, the FR gradient of the generalized loss is
\begin{equation*}
    \mathrm{grad}^{\mathrm{FR}}[\mathcal L](x) = \nu(x)-\pi(x) - 2\lambda\,\pi(x)S^{\pi}(x) - 2\vartheta\,\xi(x)S^{\xi}(x) ,
\end{equation*}
and the flow \eqref{eq: FR-flow} is $\partial_t\nu_t=-\,\mathrm{grad}^{\mathrm{FR}}\mathcal L(\nu_t)$. It is a nonlocal differential equation: its right-hand side at $x$ needs the entire profile of the importance weight, under $\pi$ through $S^{\pi}$ and under $\xi$ through $S^{\xi}$. Integrating it and using \eqref{eq: appendix-S-mean-zero} at both weighting distributions, the two correlation terms drop out and
\begin{equation*}
    \frac{\D}{\D t}\!\int_{\mathbb R^d}\!\nu_t(x)\D x = 1 - \!\int_{\mathbb R^d}\!\nu_t(x)\D x ,
\end{equation*}
so a unit mass at $t=0$ is preserved for all $t$. Mass neutrality of the correlation terms follows from antisymmetry alone and requires no normalization. The condition $\alpha+\beta=1$ enters elsewhere: it makes $\xi$ a probability density, hence $S^{\xi}\in[-1,1]$, and it is what lets the mixture integrals below be read as expectations under $\xi$.

\begin{remark}
\label{rem: appendix-subgradient}
The integrand $|z(x)-z(y)|$ has no derivative on the tie set $\{z(x)=z(y)\}$, where its subdifferential is $[-1,1]$ and the convention $\operatorname{sgn}(0)=0$ selects the element of least norm. This costs differentiability of $\nu\mapsto\X_\omega(\pi\|\nu)$ only where the tie set carries positive $\omega\otimes\omega$ mass, the diagonal alone being null; it does so at $\nu=\pi$, where $w\equiv1$ and every pair ties. Writing $u=\log\nu$ makes $z(x)-z(y)$ affine in $u$, so $\X_\omega$ is convex in $u$ and the second identity of \eqref{eq: appendix-FR-grad-X} is a subgradient there. Accordingly \eqref{eq: FR-flow} is a differential inclusion, and we still write gradient flow for simplicity.
\end{remark}

Lemma~\ref{lem: appendix-variation} gives the FR gradient of each log-ratio variation, and those terms already stand in the flow \eqref{eq: FR-flow}. We differentiate $\KL(\pi\|\nu_t)$ along that flow. The contribution of each variation to $\frac{\D}{\D t}\KL(\pi\|\nu_t)$ is then an inner product of the importance weight $w_t$ against $S^{\omega}$, and the following identity determines its sign. The mixture term needs the identity at the weighting distribution $\xi=\alpha\hat\pi+\beta\bar\nu$.

\begin{lemma}
\label{lem: appendix-correlation}
Let $\omega$ be a weighting distribution with $\mathbb E_{\omega}[w]<\infty$. Then
\begin{equation}
    \mathbb E_{x\sim\omega}\bigl[w(x)S^{\omega}(x)\bigr]
    = \frac12\,\mathbb E_{x,y\sim\omega}\bigl[|w(x)-w(y)|\bigr] \Ge 0 ,
    \label{eq: appendix-correlation}
\end{equation}
with equality if and only if $w$ is $\omega$-almost-everywhere constant.
\end{lemma}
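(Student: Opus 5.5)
The plan is to expand $S^{\omega}$ from its definition \eqref{eq: FR-S} and then symmetrize the double integral. Writing
\[
\mathbb E_{x\sim\omega}\bigl[w(x)S^{\omega}(x)\bigr]=\iint\omega(x)\omega(y)\,w(x)\operatorname{sgn}\bigl(w(x)-w(y)\bigr)\D x\D y ,
\]
the first step is to justify Fubini. The integrand is bounded in absolute value by $w(x)\,\omega(x)\omega(y)$, and this bound is integrable because $\mathbb E_\omega[w]<\infty$ and $\omega$ is a probability measure. The double integral is therefore absolutely convergent, and we may relabel $x\leftrightarrow y$ freely.

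Next, exchange the variables. Using the antisymmetry $\operatorname{sgn}(w(y)-w(x))=-\operatorname{sgn}(w(x)-w(y))$, the same integral equals
\[
-\iint\omega(x)\omega(y)\,w(y)\operatorname{sgn}\bigl(w(x)-w(y)\bigr)\D x\D y .
\]
Averaging the two expressions gives
\[
\tfrac12\iint\omega\otimes\omega\,\bigl(w(x)-w(y)\bigr)\operatorname{sgn}\bigl(w(x)-w(y)\bigr).
\]
The pointwise identity $a\operatorname{sgn}(a)=|a|$, which holds at $a=0$ under the convention $\operatorname{sgn}(0)=0$, then yields \eqref{eq: appendix-correlation}. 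Splitting the integral into its two halves is legitimate because each half is finite by the same bound as before. Nonnegativity is immediate.

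For the equality case, the right-hand side vanishes if and only if $|w(x)-w(y)|=0$ for $\omega\otimes\omega$-a.e.\ $(x,y)$. If $w$ is a.e.\ constant this is clear. For the converse, apply Fubini again: for $\omega$-a.e.\ $y$, we have $w(x)=w(y)$ for $\omega$-a.e.\ $x$. Fixing one such $y_0$ gives $w=w(y_0)$ $\omega$-a.e.

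The only delicate point is integrability. The factor $w$ may be unbounded, so the splitting and the exchange of variables must rest on $\mathbb E_\omega[w]<\infty$ rather than on boundedness. This hypothesis is exactly what is assumed, so I expect no real obstacle beyond stating it cleanly.
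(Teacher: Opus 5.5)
Your proposal is correct and follows the paper's proof. Both expand $S^{\omega}$, symmetrize in $x\leftrightarrow y$ using the antisymmetry of the sign, apply $a\operatorname{sgn}(a)=|a|$, and take integrability from $\mathbb E_\omega[w]<\infty$. Your Fubini argument for the equality case only spells out the step the paper leaves implicit, from $w(x)=w(y)$ for $\omega\otimes\omega$-almost every pair to $w$ being $\omega$-almost-everywhere constant.
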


\begin{proof}
By the definition \eqref{eq: FR-S} of $S^{\omega}$,
\begin{equation*}
    \mathbb E_{x\sim\omega}\bigl[w(x)S^{\omega}(x)\bigr]
    = \mathbb E_{x,y\sim\omega}\bigl[w(x)\operatorname{sgn}\bigl(w(x)-w(y)\bigr)\bigr] ,
\end{equation*}
which is finite because $\mathbb E_\omega[w]<\infty$. Symmetrizing in $x\leftrightarrow y$ and using the antisymmetry of the sign,
\begin{equation*}
    \mathbb E_{x\sim\omega}\bigl[w(x)S^{\omega}(x)\bigr]
    = \frac12\,\mathbb E_{x,y\sim\omega}\bigl[(w(x)-w(y))\operatorname{sgn}\bigl(w(x)-w(y)\bigr)\bigr]
    = \frac12\,\mathbb E_{x,y\sim\omega}\bigl[|w(x)-w(y)|\bigr] .
\end{equation*}
The last expression vanishes exactly when $w(x)=w(y)$ for $\omega\otimes\omega$-almost every pair.
\end{proof}

Differentiating $\KL(\pi\|\nu_t)$ along \eqref{eq: FR-flow} with $\delta\KL/\delta\nu = -w_t$ and the correlations \eqref{eq: FR-S-t}, we obtain the dissipation relation as in \eqref{eq: combined-rate}:
\begin{align*}
    \frac{\D}{\D t}\KL(\pi\|\nu_t)
    &= -\int_{\mathbb R^d} w_t\,\bigl[\pi-\nu_t+2\lambda\,\pi S^{\pi}_t{+2\vartheta\,\xi S^{\xi}_t}\bigr]\D x \\
    &= -\biggl[\int_{\mathbb R^d}\frac{\pi^2}{\nu_t}\D x - 1\biggr]
       - 2\lambda\,\mathbb E_{x\sim\pi}\bigl[w_t S^{\pi}_t\bigr]
       - 2\vartheta\,\mathbb E_{x\sim\xi}\bigl[w_t S^{\xi}_t\bigr] \\
    & = -\chi^2(\pi\|\nu_t) - \lambda\,\mathbb E_{x,y\sim\pi}\bigl[|w_t(x)-w_t(y)|\bigr]
    - \vartheta\,\mathbb E_{x,y\sim\xi}\bigl[|w_t(x)-w_t(y)|\bigr],
\end{align*}
where the last equality holds by applying Lemma~\ref{lem: appendix-correlation} at $\omega = \pi$ and $\omega = \xi$, respectively.

The elementary inequality $u\log u\Le u^2-u$ for $u>0$, applied at $u=w_t$ and integrated against $\nu_t$, gives $\chi^2(\pi\|\nu_t)\Ge\KL(\pi\|\nu_t)$. Discarding the two nonpositive variation terms of \eqref{eq: combined-rate} therefore leaves
\begin{equation*}
    \frac{\D}{\D t}\KL(\pi\|\nu_t) \Le -\chi^2(\pi\|\nu_t) \Le -\KL(\pi\|\nu_t) ,
\end{equation*}
and Gr\"onwall's inequality yields $\KL(\pi\|\nu_t)\Le e^{-t}\KL(\pi\|\nu_0)$, without assuming log-concavity, a spectral gap, or a growth condition on $\pi$, and uniformly in $(\lambda,\vartheta,\alpha,\beta)$. This is a statement in density space: the rate is the one forward KL already carries, and what the variations change is the structure of the flow rather than its speed. This contrasts with the Wasserstein-2 geometry, where the forward KL is not displacement convex and an analogous exponential rate is unavailable without further assumptions.

\section{Proof of accuracy under biased target surrogate}
\label{sec: appendix-accuracy}

This appendix derives the stationarity identity \eqref{eq: surrogate-stationary} and proves Theorem~\ref{thm: accuracy} on the accuracy of the biased KLXX loss $\tilde{\mathcal L}$ in \eqref{eq: KLXX-biased} under a biased target surrogate $\tilde\pi$. Applying \eqref{eq: appendix-FR-grad-KL} with $\tilde\pi$ in place of $\pi$ to the first term of \eqref{eq: KLXX-biased}, and Lemma~\ref{lem: appendix-variation} to the other two at $\omega=\tilde\pi$ and at $\omega=\xi$, gives
\begin{equation*}
    \mathrm{grad}^{\mathrm{FR}}\bigl[\tilde{\mathcal L}\bigr](x) = \nu(x) - \tilde\pi(x)\bigl(1+2\lambda S^{\tilde\pi}(x)\bigr) - 2\vartheta\,\xi(x)S^{\xi}(x) .
\end{equation*}
Hence the stationary density $\nu_\star$ of $\tilde{\mathcal L}$ satisfies \eqref{eq: surrogate-stationary}, that is
\begin{equation*}
	\nu_\star(x) = \tilde\pi(x)\bigl(1 + 2\lambda\,S^{\tilde\pi}_\star(x)\bigr) + 2\vartheta\,\xi(x)\,S^{\xi}_\star(x) .
\end{equation*}
Integrating this identity and using \eqref{eq: appendix-S-mean-zero} at $\omega=\tilde\pi$ and at $\omega=\xi$,
\begin{equation*}
	\int_{\mathbb R^d} \nu_\star(x)\D x = \int_{\mathbb R^d} \tilde\pi(x)\D x
	+2\lambda\int_{\mathbb R^d} \tilde\pi(x)S^{\tilde\pi}_\star(x)\D x +
	2\vartheta\int_{\mathbb R^d}\xi(x)S^{\xi}_\star(x)\D x = 1 ,
\end{equation*}
so $\nu_\star$ has unit mass.

\begin{proof}
Define the ratio $\varrho := \nu_\star/\tilde\pi$, positive because both densities are, so that \eqref{eq: surrogate-stationary} divided by $\tilde\pi$ reads
\begin{equation}
    \varrho(x) = 1 + 2\lambda\,S^{\tilde\pi}_\star(x) + 2\vartheta\,\frac{\xi(x)}{\tilde\pi(x)}\,S^{\xi}_\star(x) .
    \label{eq: appendix-varrho}
\end{equation}
Multiply \eqref{eq: appendix-varrho} by $\tilde\pi w_\star$, with $w_\star=\pi/\nu_\star$, and integrate. In the last term $\tilde\pi\cdot(\xi/\tilde\pi)=\xi$, so that integral is carried by the mixture rather than by the surrogate; subtracting $\int_{\mathbb R^d}\tilde\pi w_\star\D x$ from both sides,
\begin{equation}
    \int_{\mathbb R^d}\!\tilde\pi\,w_\star(\varrho-1)\D x
    = 2\lambda\!\int_{\mathbb R^d}\!\tilde\pi\,w_\star S^{\tilde\pi}_\star\D x
    + 2\vartheta\!\int_{\mathbb R^d}\!\xi\,w_\star S^{\xi}_\star\D x .
    \label{eq: appendix-rho-minus-one}
\end{equation}
Both correlations are averages of signs under probability densities, so $|S^{\tilde\pi}_\star|\Le1$ and $|S^{\xi}_\star|\Le1$, whence
\begin{equation*}
	\int_{\mathbb R^d}\!\tilde\pi\,w_\star\bigl|S^{\tilde\pi}_\star\bigr|\D x \Le \mathbb E_{\tilde\pi}[w_\star] < \infty ,
	\qquad
	\int_{\mathbb R^d}\!\xi\,w_\star\bigl|S^{\xi}_\star\bigr|\D x \Le \mathbb E_{\xi}[w_\star] < \infty
\end{equation*}
by the two hypotheses of Theorem~\ref{thm: accuracy}. The right side of \eqref{eq: appendix-rho-minus-one} therefore converges absolutely.

It remains to relate $\KL(\pi\|\tilde\pi)$ to $\KL(\pi\|\nu_\star)$. Taking logarithms in $\nu_\star=\tilde\pi\varrho$ gives $\log(\pi/\tilde\pi) = \log(\pi/\nu_\star) + \log\varrho$, and integrating against $\pi$,
\begin{equation*}
	\KL(\pi\|\tilde\pi) = \KL(\pi\|\nu_\star) + \int_{\mathbb R^d}\!\pi\log\varrho\D x .
\end{equation*}
In the last integral $\pi = \nu_\star w_\star = \tilde\pi\,\varrho\,w_\star$, so
\begin{equation}
    \KL(\pi\|\tilde\pi) = \KL(\pi\|\nu_\star) + \int_{\mathbb R^d}\!\tilde\pi\,w_\star\,\varrho\log\varrho\D x .
    \label{eq: appendix-kl-split}
\end{equation}
The inequality $\varrho\log\varrho\Ge\varrho-1$ for $\varrho>0$ bounds the last integral of \eqref{eq: appendix-kl-split} below by the left side of \eqref{eq: appendix-rho-minus-one}, which is finite, so both summands of \eqref{eq: appendix-kl-split} are well defined in $(-\infty,+\infty]$ and
\begin{equation*}
    \KL(\pi\|\nu_\star) \Le \KL(\pi\|\tilde\pi) - \int_{\mathbb R^d}\!\tilde\pi\,w_\star(\varrho-1)\D x.
\end{equation*}
Substituting \eqref{eq: appendix-varrho} into the remaining integral returns the right side of \eqref{eq: appendix-rho-minus-one}, and Lemma~\ref{lem: appendix-correlation}, applied at $\omega=\tilde\pi$ and at $\omega=\xi$ with $\nu=\nu_\star$, absorbs the factor two in each term, giving \eqref{eq: accuracy-bound}.
\end{proof}

Identity \eqref{eq: appendix-rho-minus-one} also settles how much \eqref{eq: accuracy-bound} can subtract. Since $\tilde\pi\varrho=\nu_\star$ and $\nu_\star w_\star=\pi$, the left side of \eqref{eq: appendix-rho-minus-one} evaluates exactly, and Lemma~\ref{lem: appendix-correlation} turns its right side into the two discrepancies, so the two subtracted terms sum to $1-\mathbb E_{\tilde\pi}[w_\star]$, the value quoted after Theorem~\ref{thm: accuracy}. Both are nonnegative, which forces $\mathbb E_{\tilde\pi}[w_\star]\Le1$, and $\mathbb E_{\tilde\pi}[w_\star]>0$ then leaves the subtraction strictly less than $1$. Equality holds at $\lambda=\vartheta=0$, and when either coefficient is positive only in the degenerate case $\tilde\pi=\nu_\star=\pi$: the equality case of Lemma~\ref{lem: appendix-correlation} makes $w_\star$ constant $\tilde\pi$-almost everywhere when $\lambda>0$ and $\xi$-almost everywhere when $\vartheta>0$, both densities are positive, and $\int\pi=\int\nu_\star=1$ then forces $w_\star\equiv1$, whence $S^{\tilde\pi}_\star=S^{\xi}_\star=0$ and \eqref{eq: surrogate-stationary} gives $\nu_\star=\tilde\pi=\pi$.

\begin{remark}
\label{rem: appendix-domination}
Two remarks on the hypotheses. The following condition implies both of them, and is recorded for that reason, although it excludes the coefficients $(\lambda,\vartheta)=(1,1)$ used in every KLXX run. Suppose the mixture is dominated by the surrogate, $\xi\Le c\,\tilde\pi$ pointwise for some constant $c$, and $\lambda+c\vartheta<\tfrac12$; note that $\xi\Le c\tilde\pi$ implies $c\Ge1$, since both densities integrate to one. Both correlations lie in $[-1,1]$, being averages of signs under probability densities, so \eqref{eq: appendix-varrho} gives $\varrho\Ge1-2\lambda-2c\vartheta>0$, that is $\nu_\star\Ge(1-2\lambda-2c\vartheta)\tilde\pi>0$. Dividing $\pi$ by this bound and integrating gives $\mathbb E_{\tilde\pi}[w_\star]\Le(1-2\lambda-2c\vartheta)^{-1}$ and, using the domination once more, $\mathbb E_{\xi}[w_\star]\Le c\,(1-2\lambda-2c\vartheta)^{-1}$. The condition is sufficient but conservative, and positivity does not in fact require it. If $\nu_\star(x)$ approaches zero at some $x$ with $\pi(x)>0$, then $w_\star(x)\to\infty$, so $w_\star(x)$ exceeds $w_\star(y)$ for almost every $y$ and both $S^{\tilde\pi}_\star(x)$ and $S^{\xi}_\star(x)$ tend to $+1$; the right-hand side of \eqref{eq: surrogate-stationary} then tends to $\tilde\pi(x)(1+2\lambda)+2\vartheta\xi(x)>0$. A solution cannot therefore approach zero where $\pi$ is positive, at any $\lambda,\vartheta\Ge0$; this indicates that positivity is not restrictive, but it is not an existence proof, and the theorem assumes a positive solution.
\end{remark}

\section{Proof of the propagation factor analysis}
\label{sec: appendix-propagation}

Section~\ref{sec: propagation-analysis} states the error bound and the propagation factor $F_\Sigma$ read from it. This appendix proves Theorem~\ref{thm: propagation} for the inference scheme of Algorithm~\ref{alg: inference}, with a general Markov rejuvenation kernel.

The accepted schedule from $\pi_0$ to $\pi_K$ and the trained maps $G_1,\dots,G_K$ are held fixed. The theorem concerns inference with a trained Boltzmann generator, on a fresh particle set, with no further selection or training. The proof proceeds one stage at a time. It uses the triple $(\pi_k,\nu_k,w_k)$ and the identities \eqref{eq: appendix-transfer} that relate consecutive stages. The only property of the rejuvenation kernel that enters is invariance.

\subsection{Notations and what one stage does}
\label{subsec: appendix-notation}

We write $\lVert\cdot\rVert_2$ for the $L^2$ norm over all the randomness of the scheme, namely the $N$ initial draws and, at every stage, the resampling draws and the kernel draws. As in Section~\ref{sec: propagation-analysis}, $\mu\langle \varphi\rangle:=\int_{\mathbb R^d}\varphi\,\D\mu$ denotes the integral. The Markov kernel $Q_k$ acts on a bounded measurable function from the left side and on a distribution from the right side,
\begin{equation}
    Q_kh(x) := \int_{\mathbb R^d} h(y)\,Q_k(x,\D y) ,
    \qquad
    (\pi_kQ_k)(A) := \int_{\mathbb R^d} Q_k(x,A)\,\pi_k(\D x) ,
    \label{eq: appendix-kernel}
\end{equation}
so that the invariance assumed of $Q_k$ reads $\pi_kQ_k=\pi_k$. Each $G_k$ is a diffeomorphism of the domain onto itself, so $\nu_k=(G_k^{-1})_{\#}\pi_{k-1}$ has a density whenever $\pi_{k-1}$ does, and $w_k=\pi_k/\nu_k$ is a ratio of densities. Since $w_k>0$, the measures $\nu_k$ and $\pi_k$ are equivalent. As in Section~\ref{sec: propagation-analysis}, $\lVert w_k\rVert_\infty$ is the $\nu_k$-essential supremum of $w_k$, and the essential infimum and supremum of a bounded observable at stage $k$ are taken with respect to $\pi_k$. Write the two empirical measures produced at each stage as
\begin{equation}
    \pi_k^N := \frac1N\sum_{i=1}^{N}\delta_{X_k^i}
    \quad (k=0,1,\dots,K),
    \qquad
    \nu_k^N := \frac1N\sum_{i=1}^{N}\delta_{Y_k^i}
    \quad (k=1,\dots,K) .
    \label{eq: appendix-empirical}
\end{equation}
Here $\pi_k^N$ is the empirical measure of the particles leaving stage $k$, and $\nu_k^N$ is the empirical measure of the pushforward particles that stage $k$ reweights. In particular $\pi_0^N$ is the empirical measure of the $N$ initial draws, and $\pi_K^N$ is the estimate the scheme returns. The proof bounds the $L^2$ distance between each empirical measure and the corresponding exact distribution.

Each stage maps $\pi_{k-1}^N$ to $\pi_k^N$ in three steps, which contribute two terms in the recurrence \eqref{eq: appendix-recursion} proved in Appendix~\ref{subsec: appendix-proof}. The \emph{flow pushforward} is deterministic, because $G_k^{-1}$ adds no randomness:
\begin{equation}
    \nu_k^N = (G_k^{-1})_{\#}\pi_{k-1}^N ,
    \qquad
    \nu_k = (G_k^{-1})_{\#}\pi_{k-1} ,
    \label{eq: appendix-pushforward}
\end{equation}
the second identity being the definition of $\nu_k$. The \emph{importance sampling} contributes both terms of the recurrence. Weighting by $w_k$ is deterministic, but it distorts the measure by an amount controlled by the error already present; resampling then adds a Monte Carlo error of order $N^{-1/2}$. The \emph{rejuvenation}, together with the resampling, is one independent draw per particle. Given the pushforward particles, the conditional mean of the result is the weighted average of $Q_kh$, written $\mathcal A_k(Q_kh)$ in \eqref{eq: appendix-weighted} after Lemma~\ref{lem: onestep}.

By the definition of $(G_k^{-1})_{\#}$, integrating $h$ against $\nu_k$ is integrating $h\circ G_k^{-1}$ against $\pi_{k-1}$, and the same holds for the empirical measures because $Y_k^i=G_k^{-1}(X_{k-1}^i)$; taking $h$ to be an indicator matches the null sets of the two distributions, so $h$ and $h\circ G_k^{-1}$ have the same essential range. Hence
\begin{equation}
    \nu_k^N\langle h\rangle = \pi_{k-1}^N\langle h\circ G_k^{-1}\rangle,
    \qquad
    \nu_k\langle h\rangle = \pi_{k-1}\langle h\circ G_k^{-1}\rangle,
    \qquad
    \operatorname{osc}(h\circ G_k^{-1}) = \operatorname{osc}(h) ,
    \label{eq: appendix-transfer}
\end{equation}
for every bounded measurable $h$. In the third identity, the oscillation of $h$ is with respect to $\nu_k$ and that of $h\circ G_k^{-1}$ is with respect to $\pi_{k-1}$; they agree because the pushforward matches null sets. An error incurred before stage $k$ is therefore transferred to stage $k$ with the same oscillation.

Two facts will be used repeatedly. The first concerns the location of the particles. Every $X_0^i$ is drawn from $\pi_0$. If the law of every $X_{k-1}^i$ is dominated by $\pi_{k-1}$, then the law of every $Y_k^i=G_k^{-1}(X_{k-1}^i)$ is dominated by $\nu_k$. Resampling only selects among the $Y_k^i$. If $A$ is $\pi_k$-null, then $\pi_kQ_k(A)=\pi_k(A)=0$, so $Q_k(x,A)=0$ for $\pi_k$-almost every $x$, and rejuvenation preserves this domination. By induction on $k$, almost surely no particle lies in a null set of its own stage, and essential suprema and infima may be used for particle averages as for the exact averages. The second fact is that $Q_k$ is a Markov kernel leaving $\pi_k$ invariant, and that this is the only property of $Q_k$ used below. Invariance means the kernel does not change a $\pi_k$-average,
\begin{equation}
    \pi_k\langle Q_kh\rangle = \pi_k\langle h\rangle
    \qquad\text{for every bounded measurable } h ,
    \label{eq: appendix-invariance}
\end{equation}
and $Q_kh(x)$ is an average of values of $h$ over a distribution charging no $\pi_k$-null set, so
\begin{equation}
    \operatorname{ess\,inf}h \Le Q_kh(x) \Le \operatorname{ess\,sup}h
    \quad\text{for } \pi_k\text{-almost every } x ,
    \qquad\text{hence}\qquad
    \operatorname{osc}(Q_kh) \Le \operatorname{osc}(h) .
    \label{eq: appendix-contraction}
\end{equation}
The first identity makes the error decomposition of Appendix~\ref{subsec: appendix-proof} exact. The second prevents the inherited error from growing.

\subsection{Proof of Theorem~\ref{thm: propagation}}
\label{subsec: appendix-proof}

The proof is an induction on the stage index. Two facts about the importance weight $w_k$ are used throughout. First, $\nu_k\langle w_k\rangle = 1$, because $w_k=\pi_k/\nu_k$ and $\pi_k$ is a probability distribution. Second, $\nu_k^N\langle w_k\rangle$ lies almost surely in $(0,\lVert w_k\rVert_\infty]$, by the null sets recorded in Appendix~\ref{subsec: appendix-notation}, so no normalization vanishes or diverges.

The next lemma bounds the error of the weighted average $\nu_k^N\langle w_kh\rangle/\nu_k^N\langle w_k\rangle$ in terms of the errors of $\nu_k^N$ against $\nu_k$. To control it, we center $h$ at its exact mean and weight the result, obtaining the function $g_k$ below. Then $\nu_k\langle g_k\rangle=0$, and both the numerator error and the denominator error become ordinary deviations of $\nu_k^N$ from $\nu_k$.

\begin{lemma}
\label{lem: onestep}
Let $h$ be bounded and measurable, and define $g_k := w_k\bigl(h-\pi_k\langle h\rangle\bigr)$. Then $\nu_k\langle g_k\rangle=0$, $\operatorname{osc}(g_k)\Le2\lVert w_k\rVert_\infty\operatorname{osc}(h)$, and, almost surely,
\begin{equation}
    \biggl|\frac{\nu_k^N\langle w_kh\rangle}{\nu_k^N\langle w_k\rangle}-\pi_k\langle h\rangle\biggr|
    \Le \bigl|\nu_k^N\langle g_k\rangle-\nu_k\langle g_k\rangle\bigr|
    + \operatorname{osc}(h)\,\bigl|\nu_k^N\langle w_k\rangle-\nu_k\langle w_k\rangle\bigr| .
    \label{eq: appendix-onestep}
\end{equation}
\end{lemma}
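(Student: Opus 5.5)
The plan has three parts, taken in order: the identity $\nu_k\langle g_k\rangle=0$, the oscillation bound, and then the inequality \eqref{eq: appendix-onestep}, which comes from an exact algebraic decomposition of the normalized weighted average. Throughout, write $c:=\pi_k\langle h\rangle$ and $\bar h:=h-c$, so that $g_k=w_k\bar h$.

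\emph{The identity.} Since $w_k\nu_k=\pi_k$, we have $\nu_k\langle g_k\rangle=\pi_k\langle \bar h\rangle=0$. This is the whole point of centering at the exact mean.

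\emph{The oscillation bound.} Because $c$ lies between the essential infimum and supremum of $h$ under $\pi_k$, we get $|\bar h|\Le\operatorname{osc}(h)$ $\pi_k$-almost everywhere. The measures $\pi_k$ and $\nu_k$ are equivalent, so the same holds $\nu_k$-almost everywhere. Since $0<w_k\Le\lVert w_k\rVert_\infty$, the function $g_k$ takes values in $[-\lVert w_k\rVert_\infty\operatorname{osc}(h),\,\lVert w_k\rVert_\infty\operatorname{osc}(h)]$. This gives $\operatorname{osc}(g_k)\Le 2\lVert w_k\rVert_\infty\operatorname{osc}(h)$, with all essential bounds taken under $\nu_k$, as in Section~\ref{sec: propagation-analysis}.

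\emph{The inequality.} I would start from the exact identity
\begin{equation*}
\frac{\nu_k^N\langle w_kh\rangle}{\nu_k^N\langle w_k\rangle}-c=\frac{\nu_k^N\langle g_k\rangle}{\nu_k^N\langle w_k\rangle},
\end{equation*}
which is valid because $\nu_k^N\langle w_k\rangle>0$ almost surely (Appendix~\ref{subsec: appendix-notation}). Dividing by the random normalizer $D:=\nu_k^N\langle w_k\rangle$ is the step I expect to be the real obstacle, since $D$ can be small and a crude bound would leave a factor $1/D$. I would not bound $1/D$ directly. Instead I would split
\begin{equation*}
\frac{\nu_k^N\langle g_k\rangle}{D}=\nu_k^N\langle g_k\rangle+\nu_k^N\langle g_k\rangle\frac{1-D}{D}.
\end{equation*}
The first term equals $\nu_k^N\langle g_k\rangle-\nu_k\langle g_k\rangle$, using the identity above. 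For the second term, note that $\nu_k^N\langle g_k\rangle/D$ is a weighted empirical average of $\bar h$ with nonnegative weights summing to one. By the null-set fact of Appendix~\ref{subsec: appendix-notation}, its absolute value is therefore at most $\sup|\bar h|\Le\operatorname{osc}(h)$ almost surely. This controls $1/D$ without ever bounding it separately, and gives
\begin{equation*}
\biggl|\nu_k^N\langle g_k\rangle\frac{1-D}{D}\biggr|\Le\operatorname{osc}(h)\,|1-D|=\operatorname{osc}(h)\,\bigl|\nu_k^N\langle w_k\rangle-\nu_k\langle w_k\rangle\bigr|,
\end{equation*}
where the last equality uses $\nu_k\langle w_k\rangle=1$. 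The triangle inequality then yields \eqref{eq: appendix-onestep}.
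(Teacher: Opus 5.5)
Your proof is correct and is essentially the paper's argument. Your split of $\nu_k^N\langle g_k\rangle/\nu_k^N\langle w_k\rangle$ is the paper's identity $D_k=\nu_k^N\langle g_k\rangle-D_kq_k$, where the paper's $D_k$ is your normalized weighted average of $\bar h$ and $q_k=\nu_k^N\langle w_k\rangle-1$. Both proofs then bound that average by $\operatorname{osc}(h)$ only where it multiplies $q_k$, and your treatment of $\nu_k\langle g_k\rangle=0$ and of the oscillation bound also matches the paper's.
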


\begin{proof}
Since $w_k=\pi_k/\nu_k$, weighting by $w_k$ converts a $\nu_k$-average into a $\pi_k$-average, so $\nu_k\langle w_kh\rangle=\pi_k\langle h\rangle$, and the same identity holds with any other bounded function in place of $h$. Using $\nu_k\langle w_k\rangle=1$, we obtain
\begin{equation*}
    \nu_k\langle g_k\rangle = \pi_k\langle h\rangle - \pi_k\langle h\rangle\,\nu_k\langle w_k\rangle = 0 .
\end{equation*}
Write $D_k := \nu_k^N\langle w_kh\rangle/\nu_k^N\langle w_k\rangle-\pi_k\langle h\rangle$ and $q_k := \nu_k^N\langle w_k\rangle-1$. Then
\begin{equation*}
    \nu_k^N\langle g_k\rangle = \nu_k^N\langle w_kh\rangle - \pi_k\langle h\rangle\,\nu_k^N\langle w_k\rangle
    = \nu_k^N\langle w_k\rangle\,D_k = D_k(1+q_k) ,
\end{equation*}
hence $D_k = \nu_k^N\langle g_k\rangle-D_kq_k$, and the triangle inequality gives
\begin{equation*}
    |D_k|\Le|\nu_k^N\langle g_k\rangle|+|D_k|\,|q_k| .
\end{equation*}
Both $\nu_k^N\langle w_kh\rangle/\nu_k^N\langle w_k\rangle$ and $\pi_k\langle h\rangle$ are weighted averages of $h$ and so lie in $[\operatorname{ess\,inf}h,\operatorname{ess\,sup}h]$, whence $|D_k|\Le\operatorname{osc}(h)$. Using that bound on the right-hand side only, and keeping $|D_k|$ itself on the left, leaves
\begin{equation*}
    |D_k|\Le|\nu_k^N\langle g_k\rangle|+\operatorname{osc}(h)\,|q_k| ,
\end{equation*}
and inserting $\nu_k\langle g_k\rangle=0$ and $\nu_k\langle w_k\rangle=1$ into the two absolute values gives \eqref{eq: appendix-onestep}.

For the oscillations, $\pi_k\langle h\rangle\in[\operatorname{ess\,inf}h,\operatorname{ess\,sup}h]$ gives $\lVert h-\pi_k\langle h\rangle\rVert_\infty\Le\operatorname{osc}(h)$ and therefore $\lVert g_k\rVert_\infty\Le\lVert w_k\rVert_\infty\operatorname{osc}(h)$, so $\operatorname{osc}(g_k)\Le2\lVert g_k\rVert_\infty\Le2\lVert w_k\rVert_\infty\operatorname{osc}(h)$.
\end{proof}

Lemma~\ref{lem: onestep} motivates defining the weighted average in the $w_k$-reweighted pushforward distribution $\nu_k^N$.
\begin{definition}
Let $W_k^i=w_k(Y_k^i)$. For a bounded measurable function $h$,
\begin{equation}
    \mathcal A_k(h) := \sum_{i=1}^{N}\frac{W_k^i}{\sum_{l=1}^{N}W_k^l}\,h(Y_k^i)
    = \frac{\nu_k^N\langle w_kh\rangle}{\nu_k^N\langle w_k\rangle} ,
    \label{eq: appendix-weighted}
\end{equation}
so that \eqref{eq: appendix-onestep} is $\bigl|\mathcal A_k(h)-\pi_k\langle h\rangle\bigr|$ on the left.
\end{definition}
Applied to $Q_kh$, the number $\mathcal A_k(Q_kh)$ is the conditional mean of $\pi_k^N\langle h\rangle$ given the pushforward particles $Y_k^{1:N}$. The same ratio in \eqref{eq: appendix-weighted}, taken at $\nu_k$ rather than $\nu_k^N$, equals $\pi_k\langle h\rangle$, since $\nu_k\langle w_kh\rangle = \pi_k\langle h\rangle$ and $\nu_k\langle w_k\rangle=1$.

We now prove Theorem~\ref{thm: propagation} by deriving the error propagation recurrence \eqref{eq: appendix-recursion}.

\begin{proof}
Define $\varepsilon_k$ by the worst-case $L^2$ error of the stage-$k$ particles over observables of unit oscillation,
\begin{equation}
    \varepsilon_k := \sup\bigl\{\lVert\pi_k^N\langle h\rangle-\pi_k\langle h\rangle\rVert_2
    \;:\; h\text{ bounded measurable},\ \operatorname{osc}(h)\Le1\bigr\} .
    \label{eq: appendix-eps-def}
\end{equation}
Then $\varepsilon_k\Le1$, since $\pi_k^N\langle h\rangle$ and $\pi_k\langle h\rangle$ both lie in $[\operatorname{ess\,inf}h,\operatorname{ess\,sup}h]$. The definition of $\varepsilon_k$ implies
\begin{equation}
    \lVert\pi_k^N\langle h\rangle-\pi_k\langle h\rangle\rVert_2 \Le \operatorname{osc}(h)\,\varepsilon_k
    \qquad\text{for every bounded } h.
    \label{eq: appendix-eps}
\end{equation}
If $\operatorname{osc}(h)>0$, apply the definition to $\bigl(h-\operatorname{ess\,inf}h\bigr)/\operatorname{osc}(h)$; if $\operatorname{osc}(h)=0$ both sides vanish.
Combining \eqref{eq: appendix-eps} at stage $k-1$ with the pushforward identities \eqref{eq: appendix-transfer} transfers the same bound to the pushforward particles: for every bounded function $h$ and every $k\Ge1$,
\begin{equation}
    \lVert\nu_k^N\langle h\rangle-\nu_k\langle h\rangle\rVert_2
    = \lVert\pi_{k-1}^N\langle h\circ G_k^{-1}\rangle-\pi_{k-1}\langle h\circ G_k^{-1}\rangle\rVert_2
    \Le \operatorname{osc}(h)\,\varepsilon_{k-1} .
    \label{eq: appendix-transferred}
\end{equation}

Next we prove the error-propagation recurrence
\begin{equation}
    \varepsilon_k \Le \frac{1}{2\sqrt N} + 3\lVert w_k\rVert_\infty\,\varepsilon_{k-1},
    \quad\text{with the second term absent at } k=0 .
    \label{eq: appendix-recursion}
\end{equation}
At $k=0$, \eqref{eq: appendix-recursion} reads $\varepsilon_0\Le1/(2\sqrt N)$. Here $\pi_0^N$ is the empirical measure of $N$ independent draws from $\pi_0$,
\begin{equation*}
    \mathbb E\bigl[(\pi_0^N\langle h\rangle-\pi_0\langle h\rangle)^{2}\bigr]
    = \frac{\operatorname{Var}_{\pi_0}(h)}{N} .
\end{equation*}
Then Popoviciu's inequality on variances \citep{popoviciu1935equations} bounds $\operatorname{Var}_{\pi_0}(h)$ by $\frac14\operatorname{osc}(h)^2$. Hence $\lVert\pi_0^N\langle h\rangle-\pi_0\langle h\rangle\rVert_2 \Le \operatorname{osc}(h)/(2\sqrt N)$. Restricting to $\operatorname{osc}(h)\Le1$ gives $\varepsilon_0\Le 1/(2\sqrt N)$.

For $k\Ge1$, let $h$ be bounded with $\operatorname{osc}(h)\Le1$ and recall $\mathcal A_k$ from \eqref{eq: appendix-weighted}. Applying it to $Q_kh$ and using the invariance \eqref{eq: appendix-invariance} on the right, we obtain the exact decomposition
\begin{equation}
    \pi_k^N\langle h\rangle-\pi_k\langle h\rangle
    = \underbrace{\bigl[\pi_k^N\langle h\rangle-\mathcal A_k(Q_kh)\bigr]}_{\text{(I) resampling and rejuvenation error ($N^{-1/2}$)}}
    + \underbrace{\bigl[\mathcal A_k(Q_kh)-\pi_k\langle Q_kh\rangle\bigr]}_{\text{(II) pushforward sampling error (Lemma~\ref{lem: onestep})}} .
    \label{eq: appendix-split}
\end{equation}
Term (I) is the Monte Carlo error of resampling and rejuvenation at stage $k$. Term (II) is the error inherited from previous stages. Invariance replaces $\pi_k\langle h\rangle$ by $\pi_k\langle Q_kh\rangle$, so (II) compares two averages of the same function $Q_kh$. Since \eqref{eq: appendix-split} is an identity, Minkowski's inequality bounds the stage-$k$ $L^2$ error by $\lVert\text{(I)}\rVert_2+\lVert\text{(II)}\rVert_2$. The two terms are estimated separately.

We condition on the pushforward particles $Y_k^{1:N}=(Y_k^1,\dots,Y_k^N)$, which determine the weights $W_k^i=w_k(Y_k^i)$. Given $Y_k^{1:N}$, resampling a particle and then applying $Q_k$ has law $\sum_{j}\bigl(W_k^j/\sum_l W_k^l\bigr)Q_k(Y_k^j,\cdot)$, whose $h$-average is $\mathcal A_k(Q_kh)$. Repeating this separately for each $i$ yields $N$ i.i.d.\ stage-$k$ particles $X_k^1,\dots,X_k^N$.
Hence $\mathbb E\bigl[\pi_k^N\langle h\rangle\,\big|\, Y_k^{1:N}\bigr]=\mathcal A_k(Q_kh)$. Since each $h(X_k^i)$ lies almost surely in $[\operatorname{ess\,inf}h,\operatorname{ess\,sup}h]$, Popoviciu's inequality on variances gives
\begin{equation*}
    \mathbb E\Bigl[\bigl(\pi_k^N\langle h\rangle-\mathcal A_k(Q_kh)\bigr)^{2}\Bigm| Y_k^{1:N}\Bigr]
    \Le \frac{\operatorname{osc}(h)^{2}}{4N} .
\end{equation*}
This bound is almost surely uniform in $Y_k^{1:N}$, so the tower property gives
\begin{equation}
	\text{(I)} \quad
    \bigl\lVert\pi_k^N\langle h\rangle-\mathcal A_k(Q_kh)\bigr\rVert_2 \Le \frac{\operatorname{osc}(h)}{2\sqrt N} .
    \label{eq: appendix-resample}
\end{equation}

For (II), we use Lemma~\ref{lem: onestep} with $Q_kh$ in place of the observable. It is bounded and measurable, so with $g_k := w_k\bigl(Q_kh-\pi_k\langle Q_kh\rangle\bigr)$ we obtain the $L^2$ error bound
\begin{equation*}
    \bigl\lVert \mathcal A_k(Q_kh)-\pi_k\langle Q_kh\rangle\bigr\rVert_2
    \Le \bigl\lVert\nu_k^N\langle g_k\rangle-\nu_k\langle g_k\rangle\bigr\rVert_2
    + \operatorname{osc}(Q_kh)\,\bigl\lVert\nu_k^N\langle w_k\rangle-\nu_k\langle w_k\rangle\bigr\rVert_2 ,
\end{equation*}
where the two norms come from taking $\lVert\cdot\rVert_2$ in \eqref{eq: appendix-onestep} and applying the triangle inequality. Both $g_k$ and $w_k$ are deterministic: $Q_k$ and $\pi_k$ come from the schedule, so both are fixed functions. Then \eqref{eq: appendix-transferred} gives
\begin{equation*}
    \bigl\lVert \mathcal A_k(Q_kh)-\pi_k\langle Q_kh\rangle\bigr\rVert_2
    \Le \bigl[\operatorname{osc}(g_k)+\operatorname{osc}(Q_kh)\operatorname{osc}(w_k)\bigr]\varepsilon_{k-1} .
\end{equation*}
Finally $\operatorname{osc}(g_k)\Le2\lVert w_k\rVert_\infty\operatorname{osc}(Q_kh)$ by Lemma~\ref{lem: onestep}, and $\operatorname{osc}(w_k)\Le\lVert w_k\rVert_\infty$ because $w_k\Ge0$, so the two oscillations contribute $2\lVert w_k\rVert_\infty\operatorname{osc}(Q_kh)$ and $\lVert w_k\rVert_\infty\operatorname{osc}(Q_kh)$ respectively. The contraction \eqref{eq: appendix-contraction} then replaces $\operatorname{osc}(Q_kh)$ by $\operatorname{osc}(h)$, and
\begin{equation}
	\text{(II)} \quad
    \bigl\lVert \mathcal A_k(Q_kh)-\pi_k\langle Q_kh\rangle\bigr\rVert_2
    \Le 3\lVert w_k\rVert_\infty\operatorname{osc}(h)\,\varepsilon_{k-1} .
    \label{eq: appendix-inherited}
\end{equation}
Adding \eqref{eq: appendix-resample} and \eqref{eq: appendix-inherited} and taking the supremum over $\operatorname{osc}(h)\Le1$ yields \eqref{eq: appendix-recursion}.

It remains to unroll the recurrence. Write $a:=1/(2\sqrt N)$ and $b_j:=3\lVert w_j\rVert_\infty$, and induct on $k$ to prove $\varepsilon_k \Le a\sum_{i=0}^{k}\prod_{j=i+1}^{k}b_j$, the empty product being one. At $k=0$ the right-hand side is $a$, which is \eqref{eq: appendix-recursion} at $k=0$. Assuming it at $k-1$, \eqref{eq: appendix-recursion} gives
\begin{equation*}
    \varepsilon_k \Le a + b_k\,a\sum_{i=0}^{k-1}\prod_{j=i+1}^{k-1}b_j
    = a\Biggl[1+\sum_{i=0}^{k-1}\prod_{j=i+1}^{k}b_j\Biggr]
    = a\sum_{i=0}^{k}\prod_{j=i+1}^{k}b_j ,
\end{equation*}
the last equality because the $i=k$ term is the empty product. Taking $k=K$ and applying \eqref{eq: appendix-eps} with $h=\varphi$,
\begin{equation*}
    \bigl\lVert\pi_K^N\langle \varphi\rangle-\pi_K\langle \varphi\rangle\bigr\rVert_2
    \Le \operatorname{osc}(\varphi)\,\varepsilon_K
    \Le \frac{\operatorname{osc}(\varphi)}{2\sqrt N}\sum_{k=0}^{K}\prod_{j=k+1}^{K}3\lVert w_j\rVert_\infty .
\end{equation*}
Squaring gives \eqref{eq: propagation-bound}, concluding the proof of Theorem~\ref{thm: propagation}.
\end{proof}

\end{document}